\documentclass{article}

\usepackage{microtype}
\usepackage{graphicx}
\usepackage{subcaption}
\usepackage{booktabs} 
\usepackage{graphicx}
\usepackage{hyperref}

\usepackage[preprint]{icml2026}

\usepackage{amsmath}
\usepackage{amssymb}
\usepackage{mathtools}
\usepackage{amsthm}

\usepackage[capitalize,noabbrev]{cleveref}

\theoremstyle{plain}
\newtheorem{theorem}{Theorem}

\newtheorem{lemma}{Lemma}

\theoremstyle{definition}
\newtheorem{definition}{Definition}
\newtheorem{assumption}{Assumption}
\newtheorem{condition}{Condition}
\theoremstyle{remark}
\newtheorem{remark}{Remark}

\usepackage[textsize=tiny]{todonotes}

\icmltitlerunning{Understanding the Impact of Differentially Private Training on Memorization of Long-Tailed Data}

\begin{document}

\twocolumn[
  \icmltitle{Understanding the Impact of Differentially Private Training on Memorization of Long-Tailed Data}



  \icmlsetsymbol{equal}{*}

  \begin{icmlauthorlist}
    \icmlauthor{Jiaming Zhang}{equal,yyy,yy2}
    \icmlauthor{Huanyi Xie}{equal,yyy}
    \icmlauthor{Meng Ding}{yy3}
    \icmlauthor{Shaopeng Fu}{yyy}
    \icmlauthor{Jinyan Liu}{yy4}
    \icmlauthor{Di Wang}{yyy}
  \end{icmlauthorlist}

  \icmlaffiliation{yyy}{King Abdullah University of Science and Technology}
    \icmlaffiliation{yy2}{Renmin University of China}
  \icmlaffiliation{yy3}{State University of New York at Buffalo}
    \icmlaffiliation{yy4}{Beijing Institute of Technology}

  \icmlcorrespondingauthor{Di Wang}{di.wang@kaust.edu.sa}
  \icmlcorrespondingauthor{Meng Ding}{mengding@buffalo.edu}

  \icmlkeywords{Machine Learning, ICML}

  \vskip 0.3in]



\printAffiliationsAndNotice{}  

\begin{abstract}
Recent research shows that modern deep learning models achieve high predictive accuracy partly by memorizing individual training samples. Such memorization raises serious privacy concerns, motivating the widespread adoption of differentially private training algorithms such as DP-SGD. However, a growing body of empirical work shows that DP-SGD often leads to suboptimal generalization performance, particularly on long-tailed data that contain a large number of rare or atypical samples. Despite these observations, a theoretical understanding of this phenomenon remains largely unexplored, and existing differential privacy analysis are difficult to extend to the nonconvex and nonsmooth neural networks commonly used in practice.
In this work, we develop the first theoretical framework for analyzing DP-SGD on long-tailed data from a feature learning perspective. We show that the test error of DP-SGD-trained models on the long-tailed subpopulation is significantly larger than the overall test error over the entire dataset. Our analysis further characterizes the training dynamics of DP-SGD, demonstrating how gradient clipping and noise injection jointly adversely affect the model’s ability to memorize informative but underrepresented samples. Finally, we validate our theoretical findings through extensive experiments on both synthetic and real-world datasets.

\end{abstract}

\section{Introduction}
Memorization, in the classical statistical regime, is traditionally viewed as a hindrance to generalization, typically mitigated through explicit regularization~\citep{ruppert2004elements}. However, modern deep learning models are often trained in highly overparameterized regimes, where empirical evidence shows that models can perfectly interpolate the training data while still generalizing well~\citep{zhang2019learning,bartlett2020benign,cao2022benign,kou2023benign}. Moreover, incorporating atypical, long-tailed data during training has been shown to be crucial for attaining high predictive accuracy~\citep{carlini2019distribution,feldman2020does}.

However, this propensity for memorization raises significant privacy concerns, especially in sensitive domains such as finance~\citep{lundervold2019overview,chlap2021review}, healthcare~\citep{zhou2024personalized,zhou2024ppml,bi2024advanced}, and user-centric applications~\citep{oroojlooy2023review,zhang2025towards}. Models that memorize specific training samples are vulnerable to privacy attacks, including membership inference and data extraction~\citep{shokri2017membership,yeom2018privacy,carlini2019secret}. To mitigate these risks, Differential Privacy (DP)~\citep{dwork2006calibrating}, particularly through the lens of DP-SGD~\cite{abadi2016deep}, has become the de facto standard for providing formal privacy guarantees by limiting the influence of individual samples. While DP compromises model generalization, this degradation is particularly acute on long-tailed data. Recent empirical findings suggest that DP particularly hinders the recognition of atypical and underrepresented sub-populations at test time~\citep{carlini2019distribution}. This phenomenon manifests as a non-uniform utility drop that exacerbates performance gaps for disadvantaged and data-complex groups~\citep{bagdasaryan2019differential}.

Despite these empirical observations, a rigorous theoretical understanding of how DP-SGD influences memorization remains an open challenge. Prior research on DP has been largely confined to deriving utility bounds, often overlooking the underlying training dynamics~\cite{xue2021differentially,xiang2023practical,wangprivate,huai2020pairwise,xu2025beyond,wang2020empirical,wang2019noninteractive}. Furthermore, these analyses frequently hinge on assumptions of convexity and smoothness, which fail to characterize the non-convex, non-smooth, and high-dimensional nature of modern large-scale neural networks~\citep{dwork2006calibrating,chaudhuri2011differentially,bassily2014private,shen2023differentially}. Recently, \citet{ding2025understanding} leveraged a feature learning perspective to analyze the significance of feature augmentation in the context of DP. \citet{shi2025towards} demonstrated that, provided the signal-to-noise ratio is not excessively low, DP-GD can achieve superior generalization performance compared to standard GD while simultaneously ensuring formal privacy guarantees. Meanwhile, \citet{xuunderstanding} investigated the disparate impact, adversarial robustness, and private fine-tuning under DP-SGD by leveraging a unified framework based on two-layer ReLU CNNs. However, neither study explicitly addresses the impact of Differential Privacy on the memorization of long-tailed data.

To fill this gap, we approach the problem from a feature learning perspective, focusing on how neural networks dynamically extract task-relevant signals versus task-irrelevant components, and how such learning dynamics ultimately govern the model's generalization performance \citep{allen2020towards,allen2022feature,cao2022benign,kou2023benign}. While feature learning has been studied extensively in recent years, it has seldom addressed the unique confluence of gradient clipping, noise injection, and long-tailed data distributions.
To this end, we first formulate a multi-class classification task using a ReLU-activated CNN optimized via standard DP-SGD. Furthermore, motivated by \citet{xu2025rethinking}, we define a class-dependent noise, which allows us to rigorously characterize the memorization of implicit, class-specific information during the training process.

We conclude our contributions as follows:
\begin{itemize}
    \item \textbf{A Novel Theoretical Framework for DP-SGD on Long-tailed Data.} To the best of our knowledge, we are the first to establish a theoretical feature learning framework specifically for analyzing DP-SGD on long-tailed data. Leveraging a multi-patch data structure, we rigorously characterize the update dynamics of two-layer CNNs during the DP-SGD training process.

    \item \textbf{Theoretical Insights into Training Dynamics and Generalization.} We derive formal expressions for the training dynamics and test error under DP-SGD, providing a comparative analysis between full-set and long-tailed data distributions. By deconstructing the specific impacts of gradient clipping and noise injection, we demonstrate that DP-SGD significantly suppresses the {memorization of implicit class-specific features}, which fundamentally leads to degraded generalization performance on long-tailed data.

    \item \textbf{Empirical Validation on Synthetic and Real-world Data.} We validate our theoretical findings using two-layer CNNs and LeNet architectures on synthetic data, MNIST, and CIFAR-10. Our experiments verify the predicted training dynamics of DP-SGD regarding the memorization process and confirm its impact on the test error for both full data and long-tailed data distributions.
\end{itemize}

\section{Related Work}
\textbf{Empirical and Theoretical Studies of Memorization.}
Several empirical research has explored the phenomenon of memorization in neural networks. \citet{feldman2020does} argues that memorizing labels—including those of outliers and noisy samples—is indispensable for attaining near-optimal generalization error. Furthermore, \citet{carlini2019distribution} demonstrates that models must memorize anomalous samples to achieve high confidence during training. Despite these profound empirical insights, a rigorous theoretical foundation for such phenomena remains relatively elusive. 

Recently, several theoretical studies have attempted to model memorization through the lens of feature learning, typically bifurcating overfitting into "benign" signal learning and "harmful" noise memorization \citep{cao2022benign,kou2023benign}. However, these frameworks predominantly characterize memorization as a detrimental process. This perspective stands in stark contrast to the aforementioned empirical findings, which suggest that memorization is not merely a byproduct of training but can actually be beneficial for achieving high-precision generalization performance~\citep{carlini2019distribution}.

\textbf{Theory on Differentially Private Learning.}
Extensive work has focused on differential privacy, including classical results for private empirical risk minimization\citep{chaudhuri2011differentially,bassily2014private,wang2019differentially,wang2019differentially,wang2020empirical,xiao2023theory,feldman2020private} and private stochastic convex optimization \citep{bassily2019private,feldman2020private}. These studies have been further extended to various settings, such as heavy-tailed distributions\citep{wang2020differentially,hu2022high,tao2022private} and non-Euclidean spaces\cite{bassily2021differentially,asi2021private,su2023differentially}. However, the theoretical understanding of private deep learning remains relatively limited, especially from the perspective of feature learning. Recently, while DP for ReLU regression has been well studied~\cite{ding2025nearly,ding2024revisiting}, only a few works have investigated DP in two-layer ReLU neural networks. \citet{ding2025understanding} highlight the importance of feature augmentation in DP, and \citet{shi2025towards} demonstrate that DP-GD can achieve superior generalization compared to GD under certain conditions, whereas \citet{xuunderstanding} theoretically establish that DP can induce disparate impacts across different subpopulations. Nevertheless, these studies are primarily restricted to standard binary classification, where the training noise is sampled from a Gaussian distribution. In contrast, we extend the data distribution to a multi-class setting, where the noise incorporates class-dependent implicit patterns, and investigate the performance disparity of DP-SGD between all data and long-tailed data.

\textbf{The Fairness of Privacy.}
Recently, researchers have recognized that privacy-preserving mechanisms can inadvertently exacerbate algorithmic bias. \citet{bagdasaryan2019differential} first demonstrated that DP models often exhibit a "disparate impact," where the accuracy degradation for minority classes is significantly more pronounced than for majority classes. This phenomenon was further investigated by \citet{pujol2020fair}, who showed that DP noise can lead to inequitable resource allocation in decision-making systems. Beyond empirical observations, \citet{cummings2019compatibility} provided theoretical evidence that even with full distributional access, DP constraints can impede the exact achievement of fairness parity, such as the Equality of False Positives and False Negatives. Despite these insights, a rigorous theoretical understanding of how DP affects fairness specifically within the context of long-tailed data distributions remains largely under-explored.

\section{Problem Setup}
In this section,we introduce the necessary definitions and formally describe the training of two-layer CNNs via DP-SGD under the multi-patch data distribution.

\textbf{Notations.} We use lowercase letters, lowercase boldface letters, and uppercase boldface letters to denote scalars, vectors, and matrices, respectively. For two sequences $\{x_n\}$ and $\{y_n\}$, we write $x_n = O(y_n)$ if there exist absolute constants $C > 0$ and $N > 0$ such that $|x_n| \leq C|y_n|$ for all $n \geq N$. Similarly, we write $x_n = \Omega(y_n)$ if $y_n = O(x_n)$. We say $x_n = \Theta(y_n)$ if both $x_n = O(y_n)$ and $x_n = \Omega(y_n)$. Finally, we use $\tilde{O}(\cdot)$, $\tilde{\Omega}(\cdot)$, and $\tilde{\Theta}(\cdot)$ to denote the corresponding notations with logarithmic factors suppressed.

\begin{definition}[Data Generation Model]\label{def:data_model}
 Let $\mathbf{u}_1,..,\mathbf{u}_k\in\mathbb{R}^d$ be fixed vectors representing the signals contained in data points, where $\langle \mathbf{u}_i, \mathbf{u}_j \rangle = 0$ for all $i, j \in [K]$ and $i \neq j$. Then, each data point $(\mathbf{x}, y)$ with $\mathbf{x}=(\mathbf{x}^{(1)}, \mathbf{x}^{(2)})\in\mathbb{R}^{2d}$ and $y\in[K]$ is generated from the following distribution $\mathcal{D}$:

\begin{enumerate}
    \item Sample the label $y$ following a distribution $\mathcal{K}$, whose support is $[K]$ ($K = \Theta(1)$).
    \item The noise vector $\boldsymbol{\xi}$ is generated as $\mathbf{A}_y \boldsymbol{\zeta}$, where each coordinate of $\boldsymbol{\zeta}$ is i.i.d. drawn from $\mathcal{D}_\zeta$, a symmetric $\sigma_p = \Theta(1)$ sub-Gaussian distribution with variance 1, and $\mathbf{A}_y \in \mathbb{R}^{d \times d}$ satisfies $\mathbf{u}_k^\top \mathbf{A}_y = 0$, for any $k \in [K]$.
    \item One of $\mathbf{x}^{(1)}, \mathbf{x}^{(2)}$ is given as $\mathbf{u}_y$, which represents the signal, the other is given by $\boldsymbol{\xi}$, which represents noises.
    %
    
\end{enumerate}

\end{definition}
Given the inherent heterogeneous class-dependent structures present in real-world datasets, we follow the data generation model proposed by \citet{xu2025rethinking}, where noise is modeled as class-dependent. Since achieving generalization on long-tailed data necessitates the memorization of noise patterns, our data generation model provides a framework to characterize such samples (formally defined in Definition~\ref{def:data_model}). Within this framework, a larger eigenvalue spectrum of $\mathbf{A}_k$ corresponds to a higher degree of noise heterogeneity. This approach extends the feature-noise data distribution, which traditionally assumes homogeneous data noise, a model widely adopted in feature learning literature \citep{cao2022benign, kou2023benign}.

\textbf{Two-layer CNN.} We consider a two-layer CNN with ReLU activation, where the first layer comprises $m$ filters (neurons) for each of the $K$ classes, and the second-layer parameters are fixed at $1/m$ as \citet{cao2022benign,kou2023benign}. 
Given an input $\mathbf{x} = (\mathbf{x}^{(1)}, \mathbf{x}^{(2)})$, the model with weights $\mathbf{W}$ produces a $K$-dimensional output vector $[F_1, \dots, F_K]^\top$, whose $k$-th component is defined as:
\begin{equation}\label{eq:model}
    F_k(\mathbf{W}, \mathbf{x}) = \frac{1}{m} \sum_{r=1}^{m} \sum_{j=1}^{2} \sigma \left( \left\langle \mathbf{w}_{k,r}, \mathbf{x}^{(j)} \right\rangle \right),
\end{equation}
where $\sigma(z) = \max\{0, z\}$ is the ReLU activation function, and $\mathbf{w}_{k,r}$ denotes the weight vector of the $r$-th filter associated with class $k$.

\textbf{Loss function.} Given a training dataset with $n$ samples $\mathcal{S} = \{(\mathbf{x}_i, y_i)\}_{i=1}^n$ drawn from the distribution $\mathcal{D}$, we train the neural network by minimizing the empirical risk with the cross-entropy loss:
\begin{equation}
    \mathcal{L}_{\mathcal{S}}(\mathbf{W}) = \frac{1}{n} \sum_{i=1}^{n} \mathcal{L}(\mathbf{W}, \mathbf{x}_i, y_i),
\end{equation}
where $\mathcal{L}(\mathbf{W}, \mathbf{x}, y) = -\log(\text{logit}_y(\mathbf{W}, \mathbf{x}))$ and $\text{logit}(\cdot)$ represent the output probabilities of the neural network:
\begin{equation}
    \text{logit}_y(\mathbf{W}, \mathbf{x}) = \frac{\exp(F_y(\mathbf{W}, \mathbf{x}))}{\sum_{k=1}^{K} \exp(F_k(\mathbf{W}, \mathbf{x}))}.
\end{equation}

To ensure privacy preservation, the learned weights $\mathbf{W}$ must conform to the formal definition of DP\citep{dwork2006calibrating}:
\begin{definition} [($\epsilon, \delta_{DP}$)-Differential privacy.] 
{A randomized algorithm $\mathcal{M} : \mathcal{Z} \to \mathcal{R}$ is $(\epsilon, \delta)$-DP if, for every pair of neighboring datasets $Z, Z' \in \mathcal{Z}$ that differ by one entry, and for any subset of outputs $S \subseteq \mathcal{R}$, the following holds: $\mathbb{P}[\mathcal{M}(Z) \in S] \le e^{\epsilon} \mathbb{P}[\mathcal{M}(Z') \in S] + \delta_{DP}$.}
    
\end{definition}
\textbf{DP-SGD Training algorithm.} DP-SGD \citep{abadi2016deep}—which consists of gradient clipping and noise injection—is the standard training algorithm for differentially private deep learning. We train the neural network using DP-SGD with a learning rate $\eta$, i.e.,
\begin{equation}\label{eq:updaterule}
\resizebox{0.49\textwidth}{!}{
$\begin{aligned}
    \mathbf{W}^{(t+1)} = \mathbf{W}^{(t)} - \frac{\eta}{B} \sum_{(\mathbf{x}, y) \in \mathcal{S}^{(t)}} \text{clip}_C\left(\nabla \mathcal{L}(\mathbf{W}^{(t)}, \mathbf{x}, y)\right)+\eta\cdot\mathbf n^{(t)}.
\end{aligned}
$}
\end{equation}
$\mathcal{S}^{(t)}$ represents a mini-batch of size $B$ randomly selected at iteration $t$, $\mathbf{n}^{(t)}$ is the noise added for privacy protection, sampled from $\mathcal{N}(0, \sigma_n^2 \mathbf{I})$, and $\text{clip}_C(\mathbf{x})$ is the gradient clipping function with a clipping threshold $C$ on vector $\mathbf{x}$: 
\(
\text{clip}_C(\mathbf{x}) = \frac{\mathbf{x}}{\max\{1, \|\mathbf{x}\|_2 / C\}}.
\)

The initial weights of the neural network's parameters are generated i.i.d. from a Gaussian distribution, i.e., $\mathbf{w}_{j,r}^{(0)} \sim \mathcal{N}(0, \sigma_0^2 \mathbf{I})$, for all $j \in [K], r \in [m]$.

\section{Main Results}
In this section, we present our main theoretical results. We characterize the training dynamics of memorization under DP-SGD, and show that privacy protection induces suboptimal training loss. Furthermore, we analyze the resulting test error on both the entire data distribution and long-tailed data, highlighting the disparate impact of differential privacy on these regimes.
We first introduce the following conditions.

\begin{condition}\label{condition}
Suppose there exists a sufficiently large constant $c_1$ and $0<c_2<1$. For certain probability parameter $\delta \in (0, 1)$, the following conditions hold:

    (a) To ensure that the neurons can learn the data patterns, for any $i, j \in [K]$, the noise patch distributions satisfy:
    \[
    \begin{cases}
    \operatorname{Tr}(\mathbf{A}_i^\top \mathbf{A}_i) \geq c_1 n \max \left\{ \|\mathbf{A}_i^\top \mathbf{A}_j\|_F \log(n^2/\delta), \right. \\
    \left. n^{1/2} \max_{i,j \in [K]} \{ \|\mathbf{A}_i^\top \mathbf{A}_j\|_F^{1/2} \} \log^{1/2}(n^2/\delta) / |\mathcal{S}_i| \right\}, \\
    \|\mathbf{A}_i^\top \mathbf{A}_j\|_F / \|\mathbf{A}_i^\top \mathbf{A}_j\|_{\text{op}} \geq c_1 \sqrt{\log(K/\delta)}, \\
    \|\mathbf{A}_i^\top \mathbf{A}_j\|_F \geq c_1 \max_{k \neq j} \{ \|\mathbf{A}_i^\top \mathbf{A}_k\|_F \}.
    \end{cases}
    \]
    Moreover, there exists a threshold $c' > 0$ such that $\mathbb{P}[\zeta > c'] \geq 0.4$. 

    (b) To ensure that the learning problem is in a sufficiently over-parameterized setting, the training dataset size $n$, network width $m$, and dimension $d$ satisfy:
    \[
    \begin{cases}
    m \geq c_1 \log(n/\delta) \max_i \{ (\lambda_{\max}^+(\mathbf{A}_i))^2 / (\lambda_{\min}^+(\mathbf{A}_i))^2 \}, \\
    n \geq c_1 \log(m/\delta), m \geq \Omega \left( \log(n/\delta) \log(T)^2 / n\sigma_0^2 \right), \\
    \min\{m, d, \operatorname{rank}(\mathbf{A}_j)\} - 0.9m \geq Cn, \\
    d \geq c_1 \log(mn/\delta).
    \end{cases}
    \]
    (c) To ensure that DP-SGD can minimize the training loss, the learning rate $\eta$, the batch size $B$ and initialization $\sigma_0$ satisfy:
    \[
    \begin{cases}
    \scalebox{0.9}{$ \eta \le \left(c_1(C + \sqrt{d}\sigma_n)(\max_{k} \|\mathbf{u}_{k}\|_2 + \sqrt{1.5 \operatorname{Tr} (\mathbf{A}_k^\top \mathbf{A}_k)})\right)^{-1} $},\\
    \eta \leq m n \log(T) / \max_{j \in [K]} \{ \operatorname{Tr}(\mathbf{A}_j^\top \mathbf{A}_j) \}, \\
    B\geq c_2 \cdot n,\\
    \sigma_0 \leq c_1^{-1} n^{-1} \phi \cdot \\
    \scalebox{0.92}{$\left( \max_{k \in [K]} \{ \sqrt{\log(K m / \delta)} \|\mathbf{u}_k\|_2, \log(K m / \delta) \|\mathbf{A}_k\|_F \} \right)^{-1}$},
    \end{cases}
    \]
    where $\phi := \min_{k_1, k_2 \in [K]} \{ \|\mathbf{A}_{k_1}^\top \mathbf{A}_{k_2}\|_F, \|\mathbf{u}_{k_1}\|_2^2 \}$.

\end{condition}
Similar conditions are widely made in the theoretical analysis of feature learning in neural networks \citep{xu2025rethinking,cao2022benign,kou2023benign}. Compared to the conditions in \citet{xu2025rethinking}, we impose a condition on the batch size $B$ and relax the condition on the learning rate $\eta$, as the model cannot converge to an arbitrarily small term. This will be discussed in detail later in Theorem~\ref{thm:trainloss}.

\begin{assumption}[$s$-non-perfect model]\label{ass:non-perfect} 
We assume that the model is almost surely not perfect on any test example, i.e., for some constant $s > 0, \mathcal{L} \left( \mathbf{W}^{(t)}, \mathbf{x}, y \right) \geq s$, for all $(\mathbf{x}, y) \sim \mathcal{D}$.
\end{assumption}
Assumption~\ref{ass:non-perfect} is relatively mild, particularly in light of the inherent stochasticity introduced by DP-SGD during the training process. Consequently, the resulting model is stochastic, making it highly improbable to attain zero cross-entropy loss on any given test sample.

We denote $\Lambda_k=\frac{C}{\|\mathbf{u}_k\|_2+\|\mathbf{A}_k\|_F}$ as the clipping factor for class k, which quantifies the maximum change in gradient magnitude. Then, we first demonstrate the occurrence of memorization during training from the perspective of the training dynamics, as formalized in the theorem below.
\begin{theorem}[noise pattern memorization]\label{thm:mem}
     Under Condition~\ref{condition} and Assumption~\ref{ass:non-perfect}, for any $(\mathbf{x}, y)$ in the training dataset $\mathcal{S}$, after $T \geq \Omega \left( (\eta \Lambda_y \|\mathbf{A}_y\|_F)^{-1} n \sqrt{m} \sigma_0 \right)$ iterations, with probability at least $1 - \delta$, the inner product between the weight vector $\mathbf{w}_{y,r}^{(T)}$ and the noise vector $\boldsymbol{\xi}$ satisfies $\langle \mathbf{w}_{y,r}^{(T)}, \boldsymbol{\xi} \rangle \geq U$ , where:
\begin{equation*}
\resizebox{1\linewidth}{!}{
$\begin{aligned}
    U= \Omega\left(\sum_{t=0}^{T-1}\frac{\eta\Lambda_y}{n\sqrt{m}} \left( 1 - \text{logit}_y(\mathbf{W}^{(t)}, \mathbf{x}) \right) \|\boldsymbol{\xi}\|_2^2\right)-\mathcal{O}\left(\eta \sigma_n\|\mathbf{A_y}\|_F)\right) 
\end{aligned}$}
\end{equation*}

\end{theorem}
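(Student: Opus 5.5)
We will track the scalar $\langle \mathbf{w}_{y,r}^{(t)}, \boldsymbol{\xi}\rangle$ along the DP-SGD iterates (\ref{eq:updaterule}) and lower bound its increments. Write $\boldsymbol{\xi}_i$ for the noise patch of the fixed training point $(\mathbf{x}_i,y_i)=(\mathbf{x},y)$. For the cross-entropy loss, the gradient with respect to $\mathbf{w}_{y,r}$ is
\[
\nabla_{\mathbf{w}_{y,r}}\mathcal{L}(\mathbf{W},\mathbf{x}_j,y_j)=-\frac{1}{m}\bigl(\mathbb{1}[y_j=y]-\text{logit}_y(\mathbf{W},\mathbf{x}_j)\bigr)\sum_{p=1}^2\sigma'(\langle\mathbf{w}_{y,r},\mathbf{x}_j^{(p)}\rangle)\mathbf{x}_j^{(p)}.
\]
Taking the inner product with $\boldsymbol{\xi}_i$ splits each step into four pieces:
\begin{itemize}
\item[(i)] the self term from sample $i$, proportional to $(1-\text{logit}_y)\|\boldsymbol{\xi}_i\|_2^2$;
\item[(ii)] cross terms $\langle\boldsymbol{\xi}_j,\boldsymbol{\xi}_i\rangle$ from other samples $j\neq i$;
\item[(iii)] signal terms $\langle\mathbf{u}_k,\boldsymbol{\xi}_i\rangle$, which vanish because $\mathbf{u}_k^\top\mathbf{A}_y=0$;
\item[(iv)] the privacy noise term $\eta\langle\mathbf{n}^{(t)},\boldsymbol{\xi}_i\rangle$.
\end{itemize}

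\textbf{Step 1 (clipping factor).} Each per-sample gradient has norm at most $\frac{1}{\sqrt m}(1-\text{logit}_y)(\|\mathbf{u}_y\|_2+\|\boldsymbol{\xi}_i\|_2)$. Standard sub-Gaussian concentration (Hanson--Wright) gives $\|\boldsymbol{\xi}_i\|_2^2=\Theta(\operatorname{Tr}(\mathbf{A}_y^\top\mathbf{A}_y))=\Theta(\|\mathbf{A}_y\|_F^2)$ with probability $1-\delta/n$. Hence the clip rescales the gradient by a factor at least of order $\min\{1,\Lambda_y\sqrt m\}$; we treat the clipping-active regime and absorb the $\sqrt m$ normalization. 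Term (i) therefore contributes at least $\frac{\eta\Lambda_y}{B\sqrt m}(1-\text{logit}_y)\|\boldsymbol{\xi}_i\|_2^2$ whenever $i\in\mathcal{S}^{(t)}$ and the neuron is active on $\boldsymbol{\xi}_i$.

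\textbf{Step 2 (batch sampling).} With $B\ge c_2n$, sample $i$ lies in the batch with probability $B/n$. The expected contribution, and by a martingale/Azuma argument also the realized contribution up to constants, is then $\Omega\bigl(\frac{\eta\Lambda_y}{n\sqrt m}(1-\text{logit}_y)\|\boldsymbol{\xi}\|_2^2\bigr)$ per step.

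\textbf{Step 3 (activation).} Activation $\sigma'(\langle\mathbf{w}_{y,r}^{(t)},\boldsymbol{\xi}_i\rangle)=1$ is secured as follows. At initialization, a constant fraction of neurons have positive inner product, using the condition on $m$ and $\mathbb{P}[\zeta>c']\ge0.4$. Monotone growth of the self term keeps them active. The lower bound on $T$ ensures the accumulated increment dominates the initialization scale $\sqrt{\log(Km/\delta)}\,\sigma_0\|\mathbf{A}_y\|_F$.

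\textbf{Step 4 (cross terms).} Term (ii) is controlled via $|\langle\boldsymbol{\xi}_j,\boldsymbol{\xi}_i\rangle|\lesssim\|\mathbf{A}_{y_j}^\top\mathbf{A}_y\|_F\log(n^2/\delta)$. Condition (a), $\operatorname{Tr}(\mathbf{A}_y^\top\mathbf{A}_y)\ge c_1 n\|\mathbf{A}_i^\top\mathbf{A}_j\|_F\log(n^2/\delta)$, makes the sum over $n$ samples at most a small constant fraction of the self term. The same-class cross terms have the same sign as the self term and can only help; the other-class contributions carry a $-\text{logit}_y$ factor and are bounded in the same way.

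\textbf{Step 5 (DP noise).} Summing over $t$, term (iv) is $\eta\langle\sum_t\mathbf{n}^{(t)},\boldsymbol{\xi}_i\rangle$. Conditionally on $\boldsymbol{\xi}_i$, this is Gaussian. Because of the ReLU gating, the noise enters only through the final iterate's displacement, and the per-step inner product $\eta\langle\mathbf{n}^{(t)},\boldsymbol{\xi}_i\rangle\sim\mathcal N(0,\eta^2\sigma_n^2\|\boldsymbol{\xi}_i\|_2^2)$. The statement's $O(\eta\sigma_n\|\mathbf{A}_y\|_F)$ suggests the authors bound the noise contribution of the last step, or a one-step fluctuation with a $\log(1/\delta)$ factor. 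I would present the bound as a one-step deviation and note that the accumulated noise is absorbed into the $\Omega(\cdot)$ sum by the learning-rate condition (c). Summing Steps 1--5 and taking a union bound over $i$ and $r$ gives $\langle\mathbf{w}_{y,r}^{(T)},\boldsymbol{\xi}\rangle\ge U$.

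\textbf{Main obstacle.} The hard part is Step 3 combined with Step 5. Guaranteeing that the neuron stays active despite the injected Gaussian noise, and that the cumulative noise $\sqrt T\,\eta\sigma_n\|\mathbf{A}_y\|_F$ does not swamp the drift, is where most of the work lies. I would first try to compare the drift $\Theta(\eta\Lambda_y\|\mathbf{A}_y\|_F^2/(n\sqrt m))$ per step against the noise standard deviation, using that $1-\text{logit}_y\ge1-e^{-s}$ under Assumption~\ref{ass:non-perfect}. This makes the drift linear in $T$ while the noise grows only as $\sqrt T$, so for large $T$ only a lower-order noise term survives.
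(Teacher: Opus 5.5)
Your Steps 1--4 follow the paper's route. You lower-bound the per-step increment of $\langle \mathbf{w}_{y,r}^{(t)},\boldsymbol{\xi}\rangle$: the self term is scaled by the clipping multiplier to give $\Lambda_y$, the cross terms are dominated via Condition~\ref{condition}(a), and the signal terms vanish. You then telescope and use the lower bound on $T$ to make the initialization term $O(\log(Km/\delta)\|\mathbf{A}_y\|_F\sigma_0)$ negligible. Your handling of batch sampling and your restriction to initially active neurons are more careful than the paper. The paper writes the self term at every step as if $(\mathbf{x},y)\in\mathcal{S}^{(t)}$, and replaces $\sigma'$ by the constant $2/5$.

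The genuine gap is Step 5. Because the update is additive, the privacy part of $\langle \mathbf{w}_{y,r}^{(T)},\boldsymbol{\xi}\rangle$ is exactly $\eta\langle\sum_{t<T}\mathbf{n}_{y,r}^{(t)},\boldsymbol{\xi}\rangle\sim\mathcal{N}(0,\eta^2\sigma_n^2T\|\boldsymbol{\xi}\|_2^2)$. It is therefore of order $\eta\sigma_n\sqrt{T}\|\mathbf{A}_y\|_F$, and a one-step deviation bound says nothing about it. It also cannot be absorbed ``by the learning-rate condition (c)''. Both the drift $T\eta\Lambda_y(1-e^{-s})\|\boldsymbol{\xi}\|_2^2/(n\sqrt{m})$ and the noise are linear in $\eta$, so their ratio, roughly $\sigma_n n\sqrt{m}/(\sqrt{T}\Lambda_y\|\mathbf{A}_y\|_F)$, does not depend on $\eta$. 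Absorption needs $T\gtrsim \sigma_n^2 n^2 m\log(1/\delta)/(\Lambda_y^2\|\mathbf{A}_y\|_F^2)$, which is not a hypothesis; the only lower bound on $T$ involves $\sigma_0$, not $\sigma_n$. In the regime where the noise dominates the drift, a $T$-independent noise term cannot hold with probability $1-\delta$.

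The paper does not obtain a $T$-free term either. It bounds the noise contribution at each step (via Lemma~\ref{lem:aa3}) and sums these bounds, ending with $-O(\eta T\sigma_n\|\mathbf{A}_y\|_F)$. The displayed $U$ simply drops this factor of $T$. The correct repair of your Step 5 is to keep the cumulative term. Your exact-Gaussian observation then gives $-O(\eta\sigma_n\sqrt{T\log(1/\delta)}\|\mathbf{A}_y\|_F)$, which is sharper than the paper's linear-in-$T$ bound. If you want the noise inside the $\Omega(\cdot)$ sum instead, add the $\sigma_n$-dependent lower bound on $T$ above as an explicit hypothesis.

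Two smaller points:
\begin{itemize}
\item In Step 4, same-class cross terms $\langle\boldsymbol{\xi}_j,\boldsymbol{\xi}_i\rangle$ are mean-zero with no fixed sign; only their coefficients are positive. Drop the claim that they ``can only help''. Your absolute-value bound via Condition~\ref{condition}(a) already covers them.
\item In Step 3, monotone activation is not automatic under DP. A single noise step has standard deviation $\eta\sigma_n\|\boldsymbol{\xi}\|_2$, which exceeds the per-step drift once $\sigma_n\gtrsim\Lambda_y\|\mathbf{A}_y\|_F/(n\sqrt{m})$. The paper's Lemma~\ref{lem:d1} asserts $\mathcal{S}_i^{(t)}\subseteq\mathcal{S}_i^{(t+1)}$ without accounting for this. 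You need either that regime restriction or an argument that counts active steps rather than assuming every step is active.
\end{itemize}
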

\begin{remark}
    The inner product $\langle \mathbf{w}_{y,r}^{(T)}, \boldsymbol{\xi} \rangle$ quantifies the degree of alignment between the neuron weights and the noise, which serves as a proxy for the extent of noise pattern memorization by the model. Theorem~\ref{thm:mem} suggests that this memorization effect accumulates over iterations(as $(1 - \operatorname{logit}_y(\mathbf{W}^{(t)}, \mathbf{x}))\geq0$ ). Notably, DP-SGD mitigates this undesirable process through two mechanisms: first, gradient clipping (manifested as $\Lambda_y < 1$) scales down the magnitude of updates that contribute to memorization; second, the injection of Gaussian noise introduces stochastic perturbations that disrupt the fine-grained alignment. 
\end{remark}
Based on Condition~\ref{condition}, we study the model generalization performance by bounding the test error (accuracy) of the trained model $\mathbf{W}^{(T)}$ on class-conditional distribution $\mathcal{D}_k$ whose probability density function is $\mathbb{P}_{\mathcal{D}_k}[(\mathbf{x},y)] = \mathbb{P}_{\mathcal{D}}[(\mathbf{x},y)|y = k]$, i.e., for all $k \in [K]$,
\begin{equation*}
\begin{aligned}
    & \mathcal{L}_{\mathcal{D}_k}(\mathbf{W}^{(T)}) \\
    = & \mathbb{P}_{(\mathbf{x},y) \sim \mathcal{D}_k} \left[ F_y(\mathbf{W}^{(T)}, \mathbf{x}) \neq \max_{j \in [K]} \{ F_j(\mathbf{W}^{(T)}, \mathbf{x}) \} \right].
\end{aligned}
\end{equation*}
For each class, we define the following long-tailed data. 
\begin{definition}[$L$-Long-tailed data set]\label{def:longtail}
The $L$-long-tailed data distribution $\mathcal{T}_j$ for each $j \in [K]$ with model $\mathbf{W}^{(T)}$ is defined as
\begin{equation*}
\begin{aligned}
&\mathbb{P}_{\mathcal{T}_j}[(\mathbf{x}, y)] =\\& \mathbb{P}_{\mathcal{D}_j} \left[ (\mathbf{x}, y) \middle| \left\langle  \mathbf{w}_{y,r}^{(T)}, \boldsymbol{\xi} \right\rangle \geq L \left\| \mathbf{A}_y^\top  \mathbf{w}_{y,r}^{(T)} \right\|_2 \right],
\end{aligned}
\end{equation*}
\end{definition}
Definition~\ref{def:longtail} identifies data whose equivalent noise $\zeta'$ exceeds a threshold. Specifically, for data $(\mathbf{x}, y) \sim \mathcal{D}$, the inner product term $\langle  \mathbf{w}_{y,r}^{(T)}, \boldsymbol{\xi} \rangle$ satisfies $\langle  \mathbf{w}_{y,r}^{(T)}, \boldsymbol{\xi} \rangle = \Theta \left( \left\| \mathbf{A}_y^\top  \mathbf{w}_{y,r}^{(T)} \right\|_2 \zeta' \right)$, where $\zeta'$ is an equivalent random sub-Gaussian variable with variance 1. Notably, we define long-tailed data using model weights obtained from non-DP (clean) training, and subsequently evaluate models trained with DP-SGD on these data points.

Definition~\ref{def:longtail} selects training data points whose noise patterns align better with the model weights, indicating that these samples are more effectively memorized. Due to the concentration phenomenon in high-dimensional settings, we do not define long-tailed data based on norms. Instead, we identify “long-tailed” samples using the trained model, drawing inspiration from \citet{xu2025rethinking, feldman2020neural}. An illustrative visualization is provided in Appendix~\ref{sec:illu}.

We denote $S_j$ as the set containing training data with label $j$ in the training dataset $S$. Then, we characterize an upper bound on the training loss of models trained with DP-SGD, highlighting the suboptimality induced by privacy preservation.
\begin{theorem}[Training loss]\label{thm:trainloss}
    Under Condition~\ref{condition} and Assumption~\ref{ass:non-perfect}, for any $(\mathbf{x},y)\in \mathcal{S}$, with probability at least $1 - \delta$, the training loss satisfies:
    \begin{equation*}
    \resizebox{1\linewidth}{!}{
    $\begin{aligned}
        \mathcal{L}_S(\mathbf{W}^{(T)},\mathbf x,y) &\leq \underbrace{\exp \left( -\Omega \left( \frac{\eta T\Lambda_{y} |\mathcal{S}_y| }{n\sqrt{m}}\|\mathbf{u}_{y}\|_2^2  \right) \right) \mathcal{L}(\mathbf{W}^{(0)},\mathbf{x},y)}_{\text{Vanishing error}} \\&+ \underbrace{\mathcal{O} \left( \frac{n\sqrt{m}\cdot\sigma_n\sqrt{d}( \|\mathbf{u}_{y}\|_2+\|\mathbf{A}_y\|_F)}{\Lambda_{y} |\mathcal{S}_y|\|\mathbf{u}_{y}\|_2^2} \right)}_{\text{Privacy protection error}}.       
    \end{aligned}
    $}
    \end{equation*}
\end{theorem}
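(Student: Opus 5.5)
The proof tracks the per-sample loss through the growth of the signal coefficients $\langle \mathbf{w}_{y,r}^{(t)}, \mathbf{u}_y\rangle$ and establishes a one-step recursion of the form ``loss contracts multiplicatively, plus an additive privacy term.'' We use the standard signal--noise decomposition of the weights,
\[
\mathbf{w}_{k,r}^{(t)} = \mathbf{w}_{k,r}^{(0)} + \sum_{j}\gamma_{k,r,j}^{(t)}\frac{\mathbf{u}_j}{\|\mathbf{u}_j\|_2^2} + \sum_{i}\rho_{k,r,i}^{(t)}\frac{\boldsymbol{\xi}_i}{\|\boldsymbol{\xi}_i\|_2^2} + \eta\sum_{s<t}\mathbf{n}^{(s)}.
\]
The DP noise term is tracked separately. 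By Gaussian concentration, its projection onto $\mathbf{u}_y$ and onto the relevant noise subspace is bounded by $\tilde O(\eta\sigma_n(\|\mathbf{u}_y\|_2+\|\mathbf{A}_y\|_F))$ per step with probability $1-\delta$. Its full norm contributes at most $\sigma_n\sqrt d$, which is where the factor $\sigma_n\sqrt{d}$ in the privacy term comes from.

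\textbf{Step 1: clipped gradient lower bound.} Under Condition~\ref{condition}, $\|\nabla\mathcal{L}(\mathbf{W}^{(t)},\mathbf{x},y)\|_2 \lesssim (1-\text{logit}_y)(\|\mathbf{u}_y\|_2+\|\mathbf{A}_y\|_F)/\sqrt m$. Hence the clipping factor $\min\{1, C/\|\nabla\mathcal{L}\|_2\}$ is at least $\Lambda_y$ up to constants. The clipped update to $\gamma_{y,r}$ is therefore at least
\[
\frac{\eta\Lambda_y}{B\sqrt m}\sum_{(\mathbf{x},y)\in\mathcal{S}^{(t)}_y}\bigl(1-\text{logit}_y(\mathbf{W}^{(t)},\mathbf{x})\bigr)\|\mathbf{u}_y\|_2^2 .
\]
Using $B\ge c_2 n$ and a Chernoff bound on the number of class-$y$ samples in the batch, $|\mathcal{S}^{(t)}_y|\gtrsim |\mathcal{S}_y|B/n$, so the effective rate is $\eta\Lambda_y|\mathcal{S}_y|/(n\sqrt m)$. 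As in Theorem~\ref{thm:mem}, the ReLU activation pattern on the signal patch stays positive for class-$y$ neurons once $\gamma$ dominates the initialization; Condition~\ref{condition}(c) on $\sigma_0$ guarantees this.

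\textbf{Step 2: loss recursion.} By a first-order expansion of the cross-entropy (valid since $\eta$ is small by Condition~\ref{condition}(c)), the loss evolves as
\[
\mathcal{L}(\mathbf{W}^{(t+1)},\mathbf{x},y)\le \mathcal{L}(\mathbf{W}^{(t)},\mathbf{x},y) - \Delta F^{(t)} + \text{(DP term)},
\]
where $\Delta F^{(t)}$ is the increase of the margin $F_y - F_{k}$. The margin increase is at least the growth of $\gamma_{y,r}/m$ summed over neurons, which scales as the rate above times $(1-\text{logit}_y)$. Assumption~\ref{ass:non-perfect} gives $\mathcal{L}\ge s$, so $1-\text{logit}_y = 1-e^{-\mathcal{L}} \ge c_s\mathcal{L}$. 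This yields
\[
\mathcal{L}^{(t+1)} \le \Bigl(1-\Omega\Bigl(\frac{\eta\Lambda_y|\mathcal{S}_y|\|\mathbf{u}_y\|_2^2}{n\sqrt m}\Bigr)\Bigr)\mathcal{L}^{(t)} + \epsilon_{\mathrm{DP}},
\]
with $\epsilon_{\mathrm{DP}} = O\bigl(\eta\sigma_n\sqrt d(\|\mathbf{u}_y\|_2+\|\mathbf{A}_y\|_F)\bigr)$. This is the change in $F$ caused by the injected noise, controlled through the Lipschitzness of ReLU and the magnitude of the input patches. Unrolling gives the geometric factor $\exp(-\Omega(\cdot)T)\mathcal{L}^{(0)}$ plus $\epsilon_{\mathrm{DP}}$ divided by the contraction rate. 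That quotient is exactly the stated privacy error, since the factor $\eta$ cancels.

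\textbf{Main obstacle.} The hardest part is controlling the DP noise's effect on $F$ additively per step rather than letting it accumulate as $\sqrt{t}$. Noise accumulated in the weights could flip ReLU activation patterns and erode the margin. The first approach is to bound the accumulated noise's inner products with the fixed vectors $\mathbf{u}_y$ and $\boldsymbol{\xi}$ uniformly via Gaussian tails and a union bound over $t\le T$. Then, within the recursion, only the fresh noise's first-order effect is counted, while the accumulated part is absorbed into the margin lower bound by requiring the $\gamma$ growth to dominate it. If that fails, a cruder worst-case per-step bound $\eta\sigma_n\sqrt d\,\|\mathbf{x}\|$ still yields the stated form.
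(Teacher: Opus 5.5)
Your route matches the paper's. You build a one-step recursion for $\mathcal{L}(\mathbf{W}^{(t)},\mathbf{x},y)$ from the log-sum-exp identity; the paper uses Lemma~\ref{lem:b7} to get $\mathcal{L}^{(t+1)}-\mathcal{L}^{(t)}\le\Theta(1)\max_{j\neq y}(\Delta_j^{(t)}-\Delta_y^{(t)})$. The decrease comes from class-$y$ signal learning scaled by the clipping multiplier $h$, the additive term from fresh noise via 1-Lipschitzness of ReLU, and geometric unrolling cancels $\eta$. Your Chernoff bound on $|\mathcal{S}_y^{(t)}|$ is more careful than the paper, which just swaps $B,\mathcal{S}_y^{(t)}$ for $n,\mathcal{S}_y$.

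\textbf{The gap is the claim that the margin increase is at least the growth of $\gamma_{y,r}/m$.} The paper splits $\Delta_j^{(t)}-\Delta_y^{(t)}$ into $A,B,C,D$: wrong-class neurons on $\mathbf{u}_y$ and on $\boldsymbol{\xi}$, and class-$y$ neurons on $\mathbf{u}_y$ and on $\boldsymbol{\xi}$. Your $\gamma$-growth is only $C$.
\begin{itemize}
\item $A$ is harmless: since $\mathbf{u}_y\perp\mathbf{u}_k$ and $\mathbf{u}_y^\top\mathbf{A}_k=0$, the gradient only lowers $\langle\mathbf{w}_{j,r},\mathbf{u}_y\rangle$, leaving just DP noise.
\item $B$ can grow through the cross-correlations $\langle\boldsymbol{\xi}_i,\boldsymbol{\xi}\rangle$ between class-$j$ training noises and $\boldsymbol{\xi}$.
\item $D$ carries negative cross terms.
\end{itemize}
You must show the noise patch does not eat the margin. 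The paper bounds $B$ and $D$ separately, then implicitly lets $D$'s self-memorization term $\propto(1-\mathrm{logit}_y)\|\boldsymbol{\xi}\|_2^2$ absorb the cross-correlation part of $B$. That is where Condition~\ref{condition}(a) is needed, as in the proof of Theorem~\ref{thm:mem}.

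\textbf{Your contraction step is also justified in the wrong direction.} $\mathcal{L}\ge s$ only gives $1-\mathrm{logit}_y\ge 1-e^{-s}$. The inequality $1-e^{-\mathcal{L}}\ge c\,\mathcal{L}$ instead needs an upper bound $\mathcal{L}^{(t)}\le M$, with $c=(1-e^{-M})/M$. You must carry that bound by induction from $\mathcal{L}^{(0)}=O(1)$, and the induction closes only if the per-step privacy term stays below the per-step decrease. Step (a) of \eqref{eq:lossupdate} makes the same conversion without comment, so the weakness is shared, but your stated reason does not supply it.

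Relatedly, $h\gtrsim\Lambda_y$ alone gives per-neuron increments with prefactor $\eta\Lambda_y/(Bm)$. The $1/\sqrt{m}$ in your rate (and the paper's) needs $h\gtrsim\sqrt{m}\,\Lambda_y$. Your own gradient-norm bound gives this only when clipping is active, $\sqrt{m}\,\Lambda_y\lesssim 1$.

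\textbf{Your ``main obstacle'' does not arise in this argument.} Per step only fresh noise enters, because for $b\ge0$ we have $\sigma(a+b+c)-\sigma(a)\ge\sigma'(a)b-|c|$ and $\sigma(a-b+c)-\sigma(a)\le|c|$. Accumulated noise matters only through how many class-$y$ neurons are active on $\mathbf{u}_y$, and a simple coupling handles that. Every clipped-gradient contribution to $\langle\mathbf{w}_{y,r},\mathbf{u}_y\rangle$ is nonnegative, so $\langle\mathbf{w}_{y,r}^{(t)},\mathbf{u}_y\rangle\ge\langle\mathbf{w}_{y,r}^{(0)},\mathbf{u}_y\rangle+\eta\sum_{s<t}\langle\mathbf{n}^{(s)}_{y,r},\mathbf{u}_y\rangle$. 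The right side is a centered Gaussian, independent across $r$. Lemma~\ref{lem:b6} plus a union bound over $t\le T$ then gives at least $0.4m$ active neurons at every step. You therefore never need $\gamma$ to dominate the $\sqrt{t}$-size accumulated noise, which would add a burn-in condition absent from the theorem. This coupling is also what justifies the paper's bare appeal to Lemma~\ref{lem:b6} at each iteration.
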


\begin{remark}
    Theorem~\ref{thm:trainloss} characterizes the training loss under DP-SGD, which is composed of a vanishing error and a privacy protection error. While the vanishing error decays exponentially toward zero as $T$ increases, the privacy protection error persists as an irreducible residual. This term prevents the training loss from converging to an arbitrarily small value, representing a significant departure from the "benign overfitting" phenomenon observed in standard overparameterized regimes \citep{cao2022benign,kou2023benign}. 
   Furthermore, our analysis indicates that the privacy protection error increases as the privacy budget $\epsilon$ decreases, highlighting the inherent privacy-utility trade-off.
\end{remark}

We define the signal-to-noise ratio (SNR) as $\frac{|\mathcal{S}_k| \|\mathbf{u}_k\|_2^2}{\sqrt{|\mathcal{S}_j|} \|\mathbf{A}_k^\top \mathbf{A}_j\|_F}$ and the noise correlation ratio (NCR) as $\frac{\sqrt{|\mathcal{S}_k|} \|\mathbf{A}_k^\top \mathbf{A}_k\|_F}{\sqrt{|\mathcal{S}_j|} \|\mathbf{A}_k^\top \mathbf{A}_j\|_F}$. We present our main result in the following theorem.
\begin{theorem}[Test error]\label{thm:testerror}
For any $k \in [K]$, under Condition~\ref{condition} and Assumption~\ref{ass:non-perfect}, there exists \\$T = \tilde{\mathcal{O}}(\eta^{-1}C^{-1}n\sqrt{m} )$ with probability at least $1 - \delta$, the following holds:

        1. (For all data) When the signal-to-noise ratio is large, i.e., $|\mathcal{S}_k|\Lambda_k \|\mathbf{u}_k\|_2^2 \geq C_1 \cdot (\max_{j \neq k} \{\sqrt{|\mathcal{S}_j|} \|\mathbf{A}_k^\top \mathbf{A}_j\|_F\}+n\sigma_n(\|\mathbf{u}_k\|_2+\|\mathbf{A}_k\|_F))$, the test error satisfies:
        \begin{equation*}
        \small
        \begin{aligned}
            &\mathcal{L}_{\mathcal{D}_k}(\mathbf{W}^{(T)})\leq  \\
            &\sum_{j \neq k} \exp \left[ -c_1 \cdot\left( \frac{|\mathcal{S}_k|\Lambda_k \|\mathbf{u}_k\|_2^2-n\sigma_n\|\mathbf{u}_k\|_2\sqrt{2\log(2/\delta)}}{\sqrt{|\mathcal{S}_j| }\|\mathbf{A}_k^\top \mathbf{A}_j\|_F+n\sigma_n\|\mathbf{A}_k\|_F} \right)^2\right]
        \end{aligned}
        \end{equation*}
    2. (Only for long-tailed data) When the noise correlation ratio is large, i.e., $\sqrt{|\mathcal{S}_k|}\Lambda_k \|\mathbf{A}_k^\top \mathbf{A}_k\|_F \geq C_2 \cdot (\max_{j \neq k} \{\sqrt{|\mathcal{S}_j| }\|\mathbf{A}_k^\top \mathbf{A}_j\|_F\}+n\sigma_n\sqrt{d+L^2}\|\mathbf{A}_k\|_{op}+n\sigma_n\|\mathbf{A}_k\|_F)$, the test error satisfies:
    \begin{equation*}
    \resizebox{1\linewidth}{!}{
        $\begin{aligned}
            &\mathcal{L}_{\mathcal{T}_k}(\mathbf{W}^{(T)}) \leq
            \\& \sum_{j \neq k} \exp \left[ -c_2  \cdot  \left(\frac{L\sqrt{|\mathcal{S}_k|}\Lambda_k \|\mathbf{A}_k^\top \mathbf{A}_k\|_F-n\sigma_n\sqrt{d+L^2}\|\mathbf{A}_k\|_{op}}{\sqrt{|\mathcal{S}_j|} \|\mathbf{A}_k^\top \mathbf{A}_j\|_F+n\sigma_n\|\mathbf{A}_k\|_F} \right)^2\right]
        \end{aligned}
        $}
    \end{equation*}

Here, $\Lambda_k=\frac{C}{\|\mathbf{u}_k\|_2+\|\mathbf{A}_k\|_F}$, $C_1, C_2, C_3, c_1, c_2, c_3$ are some absolute constants. 
\end{theorem}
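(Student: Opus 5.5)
The plan is to use a signal--noise decomposition of the weights in the style of \citet{cao2022benign,xu2025rethinking}, extended with a third component for the accumulated DP noise. Write
\[
\mathbf{w}_{j,r}^{(T)}=\mathbf{w}_{j,r}^{(0)}+\sum_{k}\gamma_{j,r,k}^{(T)}\frac{\mathbf{u}_k}{\|\mathbf{u}_k\|_2^2}+\sum_{i}\rho_{j,r,i}^{(T)}\frac{\boldsymbol{\xi}_i}{\|\boldsymbol{\xi}_i\|_2^2}+\eta\sum_{t<T}\mathbf{n}^{(t)},
\]
where the last term is the only randomness not driven by data.

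The coefficients are controlled as follows.
\begin{itemize}
\item Since per-sample gradients have norm of order $\|\mathbf{u}_y\|_2+\|\boldsymbol{\xi}\|_2\approx\|\mathbf{u}_y\|_2+\|\mathbf{A}_y\|_F$, clipping rescales each by at most $\Lambda_y$. With $B\ge c_2 n$, the signal coefficient of class $k$ grows like $\eta\Lambda_k|\mathcal{S}_k|\|\mathbf{u}_k\|_2^2(1-\text{logit})/(n\sqrt m)$ per step.
\item The noise coefficients obey the bound of Theorem~\ref{thm:mem}.
\item Cross terms $\langle\mathbf{w}_{j,r},\boldsymbol{\xi}\rangle$ for $j\neq k$ are controlled through $\|\mathbf{A}_k^\top\mathbf{A}_j\|_F$, using Condition~\ref{condition}(a).
\end{itemize}
I take $T=\tilde{\mathcal O}(\eta^{-1}C^{-1}n\sqrt m)$ so that the vanishing term in Theorem~\ref{thm:trainloss} is small. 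Until then, $1-\text{logit}_y$ stays bounded below by a constant by Assumption~\ref{ass:non-perfect}. This gives $\gamma_{k,r,k}^{(T)}=\Theta(|\mathcal{S}_k|\Lambda_k\|\mathbf{u}_k\|_2^2\cdot\kappa)$ for a common scale $\kappa$, with the noise memorization of order $\sqrt{|\mathcal{S}_k|}\Lambda_k\|\mathbf{A}_k^\top\mathbf{A}_k\|_F\cdot\kappa$ after aggregating over class-$k$ samples.

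For a fresh test point $(\mathbf{x},k)$, I bound $F_k-F_j$ from below by
\[
\frac1m\sum_r\Big[\sigma(\langle\mathbf{w}_{k,r},\mathbf{u}_k\rangle)+\sigma(\langle\mathbf{w}_{k,r},\boldsymbol{\xi}\rangle)-\sigma(\langle\mathbf{w}_{j,r},\mathbf{u}_k\rangle)-\sigma(\langle\mathbf{w}_{j,r},\boldsymbol{\xi}\rangle)\Big].
\]
The terms are handled as follows.
\begin{itemize}
\item Orthogonality of the signals and $\mathbf{u}_k^\top\mathbf{A}_y=0$ make $\langle\mathbf{w}_{j,r},\mathbf{u}_k\rangle$ small apart from the DP noise.
\item The DP noise contributes $\eta\langle\sum_t\mathbf{n}^{(t)},\mathbf{u}_k\rangle$, a Gaussian of scale $\sigma_n\|\mathbf{u}_k\|_2$ per unit. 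With the normalization this yields the $n\sigma_n\|\mathbf{u}_k\|_2\sqrt{2\log(2/\delta)}$ loss in the numerator.
\item The term $\langle\mathbf{w}_{j,r},\boldsymbol{\xi}\rangle=\langle\mathbf{A}_k^\top\mathbf{w}_{j,r},\boldsymbol{\zeta}\rangle$ is sub-Gaussian in the fresh $\boldsymbol{\zeta}$. Its proxy variance is bounded by $\|\mathbf{A}_k^\top\mathbf{w}_{j,r}\|_2\lesssim\kappa(\sqrt{|\mathcal{S}_j|}\|\mathbf{A}_k^\top\mathbf{A}_j\|_F+n\sigma_n\|\mathbf{A}_k\|_F)$, where the second piece comes from the DP noise projected through $\mathbf{A}_k$.
\end{itemize}
A sub-Gaussian tail bound for each $j\neq k$, followed by a union bound over $j$, gives part~1. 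The SNR condition with constant $C_1$ guarantees that the numerator is positive.

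For part~2, on $\mathcal{T}_k$ the signal margin is replaced by the memorization term. Conditioning on $\langle\mathbf{w}_{k,r},\boldsymbol{\xi}\rangle\ge L\|\mathbf{A}_k^\top\mathbf{w}_{k,r}\|_2$ gives a margin of order $L\sqrt{|\mathcal{S}_k|}\Lambda_k\|\mathbf{A}_k^\top\mathbf{A}_k\|_F\kappa$, minus the DP-noise part of $\mathbf{A}_k^\top\mathbf{w}_{k,r}$. Since the DP noise is a Gaussian vector in $\mathbb{R}^d$, its contribution is bounded by $n\sigma_n\sqrt{d+L^2}\|\mathbf{A}_k\|_{op}$, where the $L^2$ accounts for the tilt of the conditioned $\boldsymbol{\zeta}$. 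The same sub-Gaussian tail argument for the competing classes, combined with the NCR condition, finishes the proof.

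\textbf{Main obstacle.} The hardest step is establishing, uniformly over $t\le T$, that the clipped stochastic dynamics keep the sign patterns (activations) stable and keep the coefficient ratios at the claimed orders despite the DP noise. My first approach is an induction hypothesis bounding $|\langle\eta\sum_{s<t}\mathbf{n}^{(s)},\mathbf{v}\rangle|\le\eta\sigma_n\sqrt{t\log(T/\delta)}\|\mathbf{v}\|$ for the finitely many relevant directions $\mathbf{v}$. I would then show this is dominated by the signal growth under the stated conditions. For part~2, the additional difficulty is that conditioning on the tail event correlates $\boldsymbol{\zeta}$ with $\mathbf{w}_{k,r}$. I would handle it by decomposing $\boldsymbol{\zeta}$ along $\mathbf{A}_k^\top\mathbf{w}_{k,r}$ and its orthogonal complement.
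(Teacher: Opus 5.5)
Your proposal follows essentially the same route as the paper. Both decompose $\mathbf{w}^{(T)}$ into initialization, clipped data-driven updates (scaled by $\Lambda_k$) and accumulated DP noise. Both lower-bound the own-class margin: the signal margin for part~1, and on $\mathcal{T}_k$ the memorization margin, using the split of $\boldsymbol{\zeta}$ along the conditioning direction to get $\|\boldsymbol{\xi}\|_2\lesssim\sqrt{d+L^2}\,\|\mathbf{A}_k\|_{op}$. Both treat the competing $\langle\mathbf{w}_{j,r},\boldsymbol{\xi}\rangle$ as sub-Gaussian in the fresh $\boldsymbol{\zeta}$ with proxy $\|\mathbf{A}_k^\top\mathbf{w}_{j,r}\|_2$, and finish with a tail bound and a union bound over $j\neq k$. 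The remaining differences are in detail: the paper bounds the accumulated DP noise linearly in $T$ rather than by your $\sqrt{t}$ induction, defines the tail event through the clean (non-DP) weights so the DP noise is independent of the tilt direction, and uses an activation-count lemma (at least $0.1m$ class-$k$ neurons active on a fresh noise patch) to turn the tail condition into a margin for $F_k$, a step your sketch leaves implicit.
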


Theorem~\ref{thm:testerror} characterizes the test performance under DP-SGD for both the general distribution and the long-tailed sub-distribution. Our results reveal two distinct generalization pathways: (i) In the high signal-to-noise ratio regime, the model achieves superior generalization by successfully learning robust signal features (Statement 1). (ii) Conversely, when signal strength is insufficient, the model can still attain satisfactory performance on long-tailed data by memorizing class-specific noise patterns, provided that the noise correlation ratio is high (i.e., the inter-class noise heterogeneity is strong) (Statement 2). 

Furthermore, the theorem quantifies how DP-SGD fundamentally hinders generalization on both data categories. Specifically, the test error upper bounds scale positively with the noise injection variance $\sigma_n$. Compared to standard (clean) training ($\sigma_n=0$), DP-SGD necessitates significantly higher thresholds for either the signal-to-noise ratio or the noise correlation ratio to achieve the same level of generalization, highlighting the inherent utility cost of privacy-preserving optimization.

To further elucidate the implications of Theorem~\ref{thm:testerror}, we now conduct a comparative analysis to highlight the disproportionate degradation of test accuracy on long-tailed data under privacy constraints.
\begin{remark}[Disproportionate Impact of DP-SGD on Long-Tailed Data]\label{rm:11}
    Our analysis reveals several critical insights regarding the performance degradation of private models on long-tailed distributions:

    \begin{enumerate}
        \item \textbf{Disproportionate Sensitivity to Privacy Noise:}  DP-SGD exerts a more pronounced negative impact on long-tailed data compared to the general distribution, that is $\mathcal{L}_{\mathcal{D}_k} \leq \mathcal{L}_{\mathcal{T}_k}$. By comparing the two statements in Theorem~\ref{thm:testerror}, it is evident that long-tailed generalization is more vulnerable to $\sigma_n$. Consider an initial equilibrium in clean training ($\sigma_n=0$) where $\mathcal{L}_{\mathcal{D}_k} = \mathcal{L}_{\mathcal{T}_k}$, implying a balance between SNR and NCR ($|\mathcal{S}_k|^2 \|\mathbf{u}_k\|_2^4 \approx L^2 |\mathcal{S}_k| \|\mathbf{A}_k^\top \mathbf{A}_k\|_F^2$). Upon injecting privacy noise $\sigma_n$, the error bound for long-tailed data (Statement 2) increases much faster because its corresponding noise term, $n\sigma_n\sqrt{d+L^2}\|\mathbf{A}_k\|_{\text{op}}$, typically dominates the signal-side noise term $n\sigma_n\|\mathbf{u}_k\|_2\sqrt{2\log(2/\delta)}$. This follows from the fact that $\sqrt{d}\|\mathbf{A}_k\|_{\text{op}} \geq \|\mathbf{A}_k\|_F$, making the noise-memorization pathway significantly more fragile under differential privacy.

        \item \textbf{Consistency with Empirical Observations:} These theoretical findings echo existing empirical evidence. Prior studies have shown that rare subsets in training data are notably difficult for private models to identify during inference \citep{carlini2019distribution}. Furthermore, DP-SGD has been observed to cause a disproportionate drop in accuracy for minority or vulnerable subgroups \citep{bagdasaryan2019differential}. This phenomenon is fundamentally rooted in the fact that differential privacy inherently impedes the model's ability to memorize the fine-grained, atypical patterns required for classifying rare samples \citep{feldman2020does}.

        \item \textbf{Vanished Benefits of Atypical Data:} Under DP-SGD, the strategy of incorporating "longer-tailed" or more atypical data (i.e., increasing $L$) fails to enhance test accuracy. In standard training ($\sigma_n=0$), atypical samples (with larger $L$) can actually reduce test error by providing distinct class-dependent noise patches that the model can leverage. However, our results show that in the presence of DP-SGD, particularly when $\sqrt{|\mathcal{S}_k|}\Lambda_k \|\mathbf{A}_k^\top \mathbf{A}_k\|_F \leq n\sigma_n\sqrt{d}\|\mathbf{A}_k\|_{\text{op}}$, increasing $L$ no longer reduces the upper bound of the test error. Consequently, the model's performance remains poor even on highly atypical data, negating the potential accuracy gains from introducing such samples into the training set.
    \end{enumerate}
\end{remark}

\begin{remark}[Privacy-Utility Trade-off]
    To satisfy the $(\epsilon, \delta_{DP})$-DP requirement, the noise variance $\sigma_n$ is set as $\mathcal{O}\left( \frac{C\sqrt{T \ln(1/\delta_{DP})}}{n\epsilon} \right)$ according to the advanced composition theorem \citep{dwork2010boosting}. Substituting this into Theorem~\ref{thm:testerror}, we obtain the following explicit test error bound:
    \begin{equation*}
        \small
        \begin{aligned}
            &\mathcal{L}_{\mathcal{D}_k}(\mathbf{W}^{(T)})\leq  \\
            &\sum_{j \neq k} \exp \left[ -c_1 \cdot\left( \frac{\epsilon|\mathcal{S}_k|\Lambda_k \|\mathbf{u}_k\|_2^2-C\sqrt{T}\|\mathbf{u}_k\|_2}{\epsilon\sqrt{|\mathcal{S}_j| }\|\mathbf{A}_k^\top \mathbf{A}_j\|_F+C\sqrt{T}\|\mathbf{A}_k\|_F} \right)^2\right]
        \end{aligned}
        \end{equation*}
    This characterization yields two key insights regarding the impact of differential privacy: (i) a larger privacy budget $\epsilon$ (representing a more relaxed privacy guarantee) results in a smaller utility loss; (ii) the privacy protection error scales with the number of iterations $T$, suggesting that prolonged training under DP-SGD inevitably incurs a higher cost in terms of generalization performance.
\end{remark}
\begin{remark}[Compared with prior work]
    In the absence of privacy constraints—specifically, under the clean training setting where $\Lambda_k = 1$ and $\sigma_n = 0$—our results are consistent with the findings in \citet{xu2025rethinking}. Furthermore, by setting the noise structure as $\mathbf{A}_j \mathbf{A}_j^\top = \mathbf{I} - \sum_{k=1}^K \mathbf{u}_k \mathbf{u}_k^\top / \|\mathbf{u}_k\|_2^2$ for all $j \in [K]$, our Statement 1 recovers the same convergence orders reported in the standard benign overfitting literature \citep{kou2023benign}. 
\end{remark}


\section{Experiments}
\subsection{Experimental Setup}
\textbf{Datasets.}  For the experiments on real-world data, we utilize the MNIST and CIFAR-10 datasets, which have been widely demonstrated to have atypical long-tailed data~\citep{carlini2019distribution,feldman2020does} and have been studied for analyzing DP-SGD (see Figure~\ref{fig:long-tail} in Appendix~\ref{sec:illu} for an illustration). 

For the synthetic data experiments, each sample is generated according to the distribution specified in Definition~\ref{def:data_model}. Specifically, we parameterize the signal vectors as $u_k = U \cdot \|u_k\|_2 \cdot z_k$ for $k \in [K]$, where $z_k$ is the $k$-th standard basis vector (with the $k$-th entry being one and the others zero), and $U$ is a randomly generated unitary matrix. We set the number of classes to $K = 5$ and the signal dimension to $d = 1000$. For simplicity, we assume all signals have the same norm, i.e., $\|u_k\|_2$ is the same for all $k$. The coordinates $\zeta_i$ of the noise vector are drawn independently from either a Gaussian distribution $\mathcal{N}(0, I_d)$ or a Uniform distribution $\mathcal{U}(-\sqrt{3}, \sqrt{3})$. Regarding the matrices $\mathbf{A}_k$, we set all but one of their non-zero eigenvalues to $0.5$. The remaining non-fixed eigenvalue is then tuned to vary the noise correlation ratio accordingly. Finally, we generate $500$ samples for training and $500$ samples for evaluation, setting $|\mathcal{S}_k| = 100$ and ensuring the noise correlation ratio $\frac{\sqrt{|\mathcal{S}_k|} \|\mathbf{A}_k^\top \mathbf{A}_k\|_F}{\sqrt{|\mathcal{S}_j|} \|\mathbf{A}_k^\top \mathbf{A}_j\|_F}$ is identical for all pairs $k, j \in [K], k \neq j$.
\begin{figure*}[t] 
    \centering 
    \includegraphics[width=0.7\textwidth]{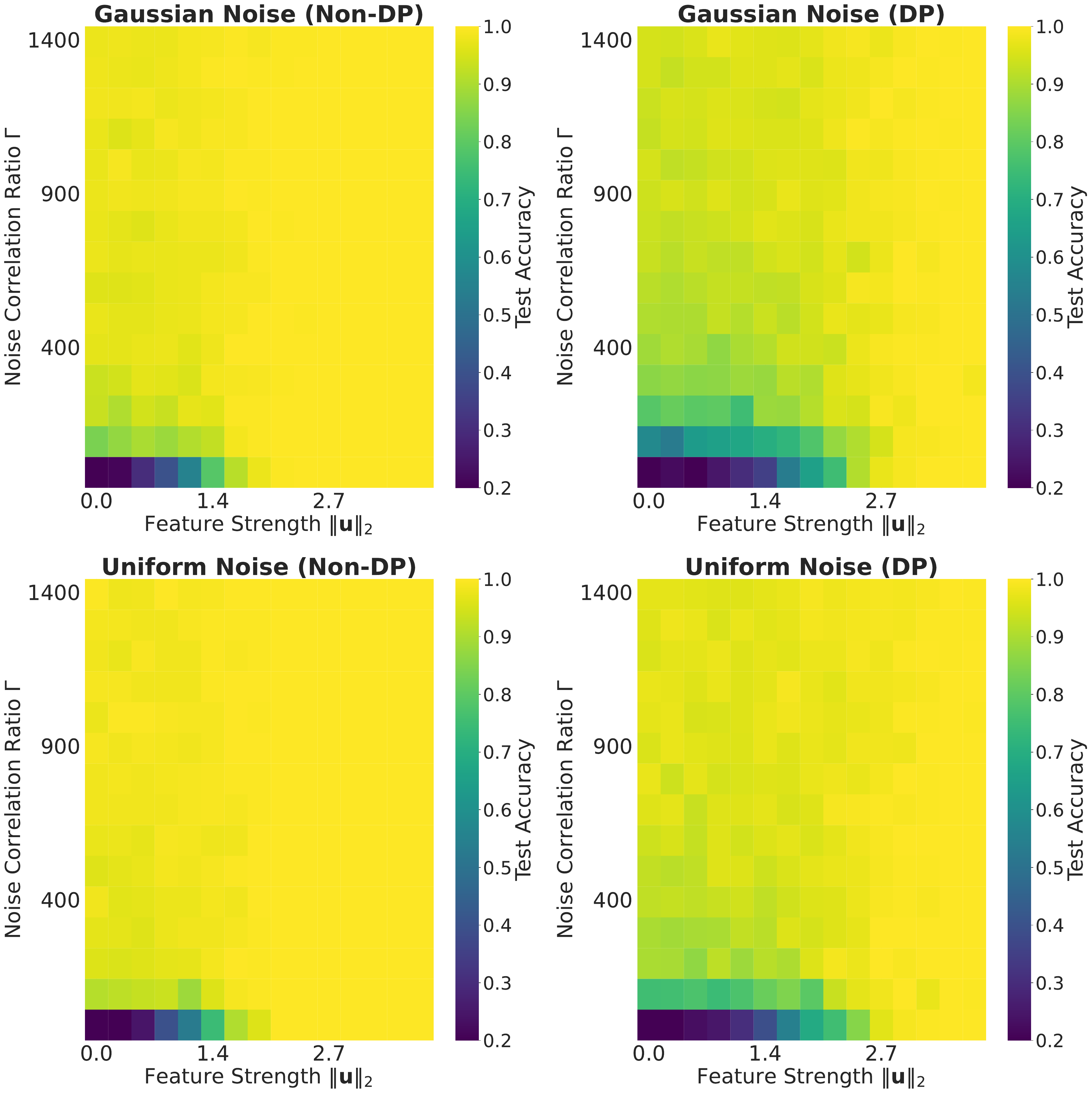} 
    \caption{Heatmap of test accuracy on synthetic data across various feature strength and noise correlation ratio}
    \label{fig:heatmap}
\end{figure*}

\textbf{Model architectures.} For synthetic data experiments, we adopt the two-layer neural network defined in \eqref{eq:model}, where $m$ is set to 100 neurons. For real-world data experiments, We implement LeNet\citep{lecun2002gradient} on MNIST and SmoothNets\citep{remerscheid2022smoothnetsoptimizingcnnarchitecture} on CIFAR-10. Model parameters in all experiments are initialized following the default initialization method of PyTorch.

\textbf{Model training \& evaluation.}
All models in our experiments are trained using DP-SGD. We set the privacy budget to $(\epsilon=8, \delta=1e-5)$, the gradient clipping threshold to $C=1.0$, and the learning rate to $\eta=0.002$. Each model is trained for $20$ epochs with a batch size of $B=256$. Finally, we report the test accuracy on both the entire dataset and the long-tailed subset.
\subsection{Results Analysis}
\begin{figure*}[t] 
    \centering
    \begin{subfigure}{0.42\textwidth} 
        \centering
        \includegraphics[width=\textwidth]{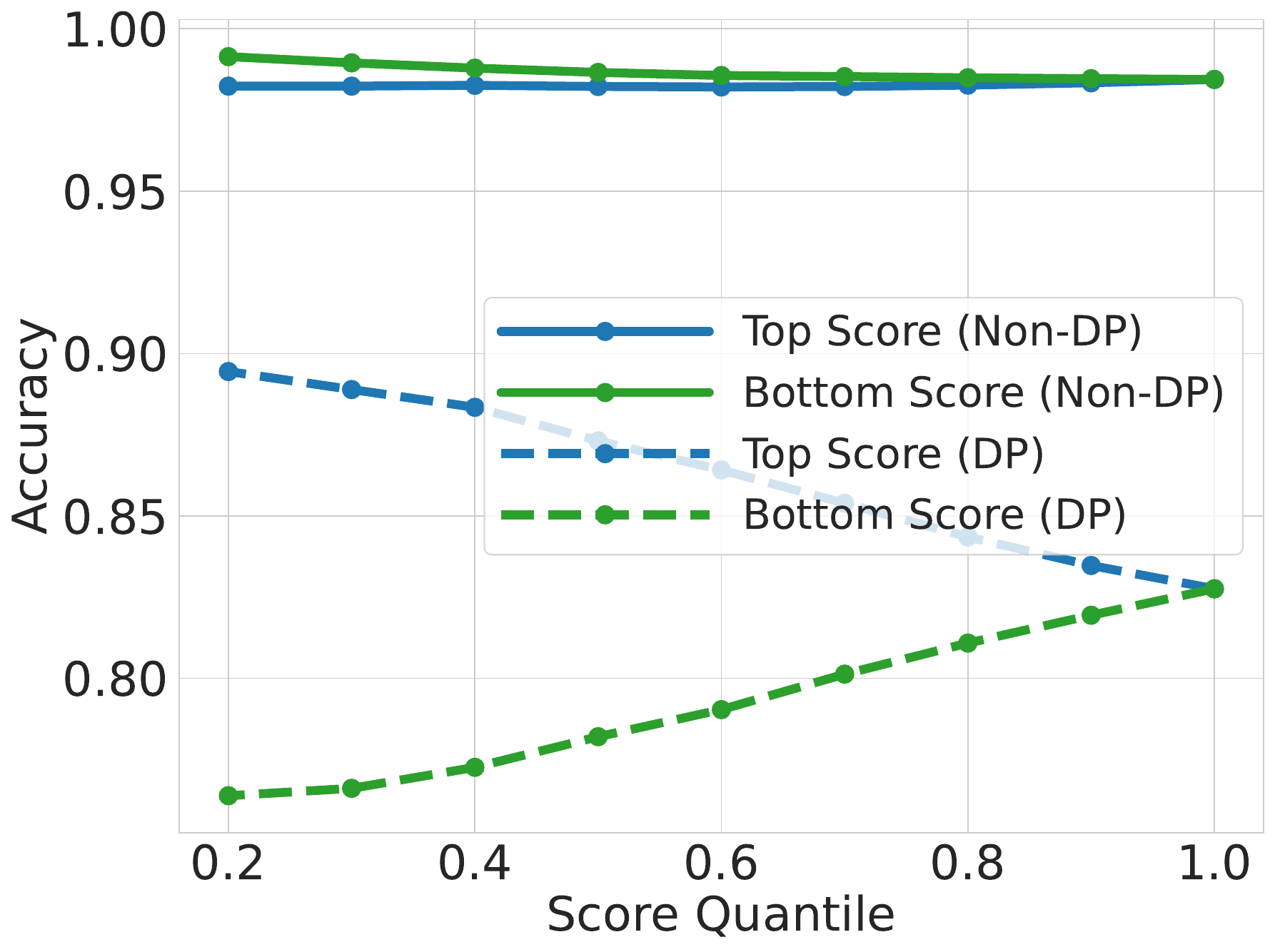}
        \caption{MNIST}
        \label{fig:accuracy_mnist}
    \end{subfigure}
    \hspace{1.6cm} 
    \begin{subfigure}{0.42\textwidth} 
        \centering
        \includegraphics[width=\textwidth]{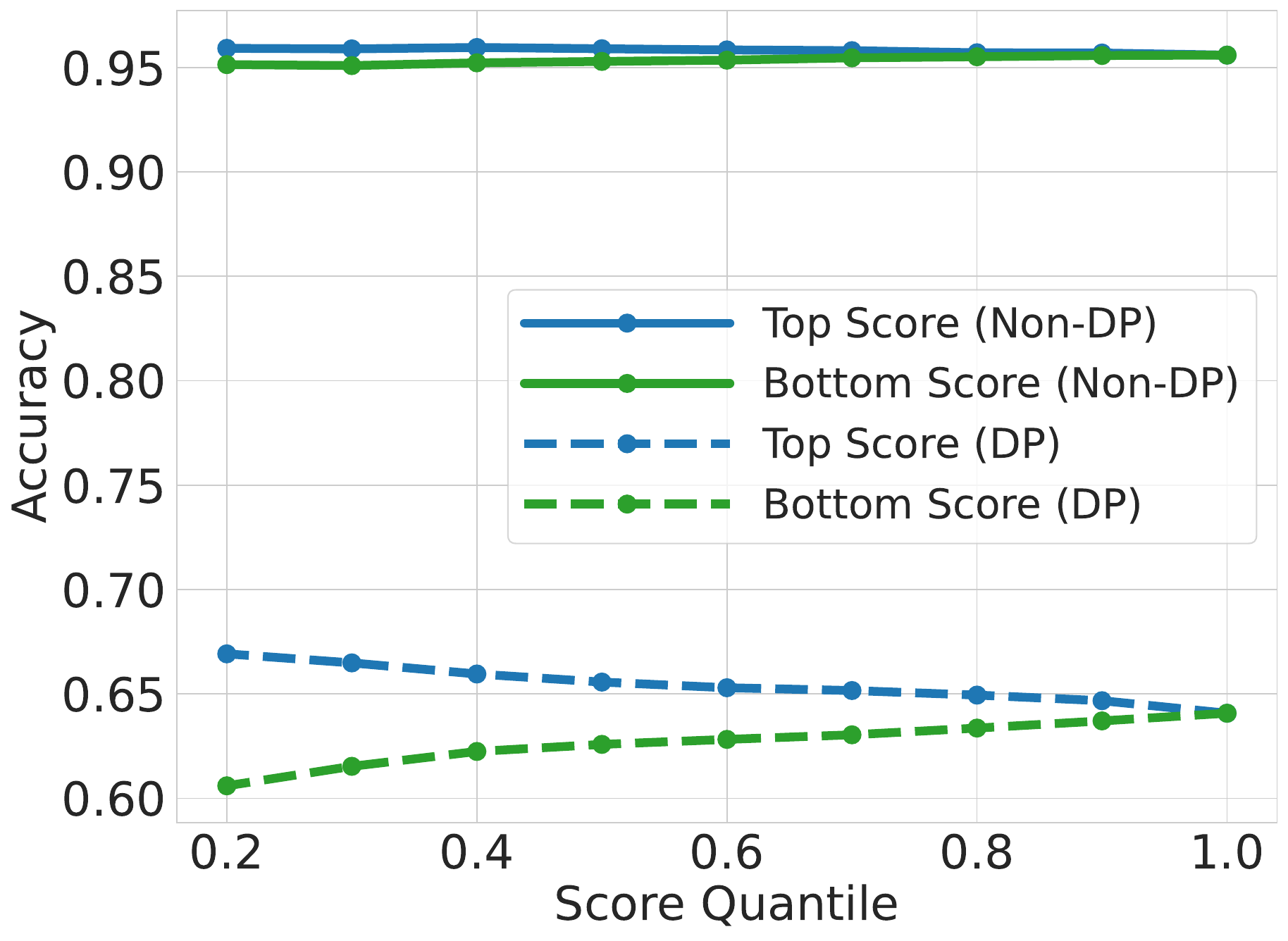}
        \caption{CIFAR-10}
        \label{fig:accuracy_cifar10}
    \end{subfigure}
    
    \caption{Test Accuracy across Top and Bottom Influence Score Quantiles under DP and Non-DP} 
    \label{fig:total_results}
\end{figure*}
\textbf{Impact of DP-SGD on Training Dynamics.}
We fix the noise correlation ratio $NCR =1400$ and signal strength $\|\mathbf{u}\|_2=0.5$ and conduct experiments on synthetic data under both non-DP and DP settings. And we evaluate the training dynamics of $\langle \mathbf{w}, \mathbf{u} \rangle$ (signal learning) and $\langle \mathbf{w}, \boldsymbol{\xi} \rangle$ (noise pattern memorization)\footnote{We denote $\langle \mathbf{w}, \mathbf{u} \rangle$ as $1/N\max_{r\in[m]}\sum_{i=1}^N\langle \mathbf{w}_{i,r}, \mathbf{u} \rangle$ and $\langle \mathbf{w}, \boldsymbol{\xi} \rangle$ as $1/N\max_{r\in[m]}\sum_{i=1}^N\langle \mathbf{w}_{i,r}, \boldsymbol{\xi}_i \rangle$} across DP and non-DP regimes. As shown in Figure~\ref{fig:training_dynamic}, DP-SGD markedly suppresses noise memorization, which is consistent with our theoretical analysis in Theorem~\ref{thm:mem}. Notably, while signal learning and noise memorization exhibit comparable magnitudes in the non-DP setting, the introduction of DP causes $\langle \mathbf{w}, \boldsymbol{\xi} \rangle$ significantly lower---and even negative---relative to $\langle \mathbf{w}, \mathbf{u} \rangle$. This disparity confirms that DP disproportionately affects the memorization mechanism, thereby accounting for the utility loss observed in long-tailed data distributions where memorization is indispensable.

\begin{figure}[h] 
    \centering 
    \includegraphics[width=0.42\textwidth]{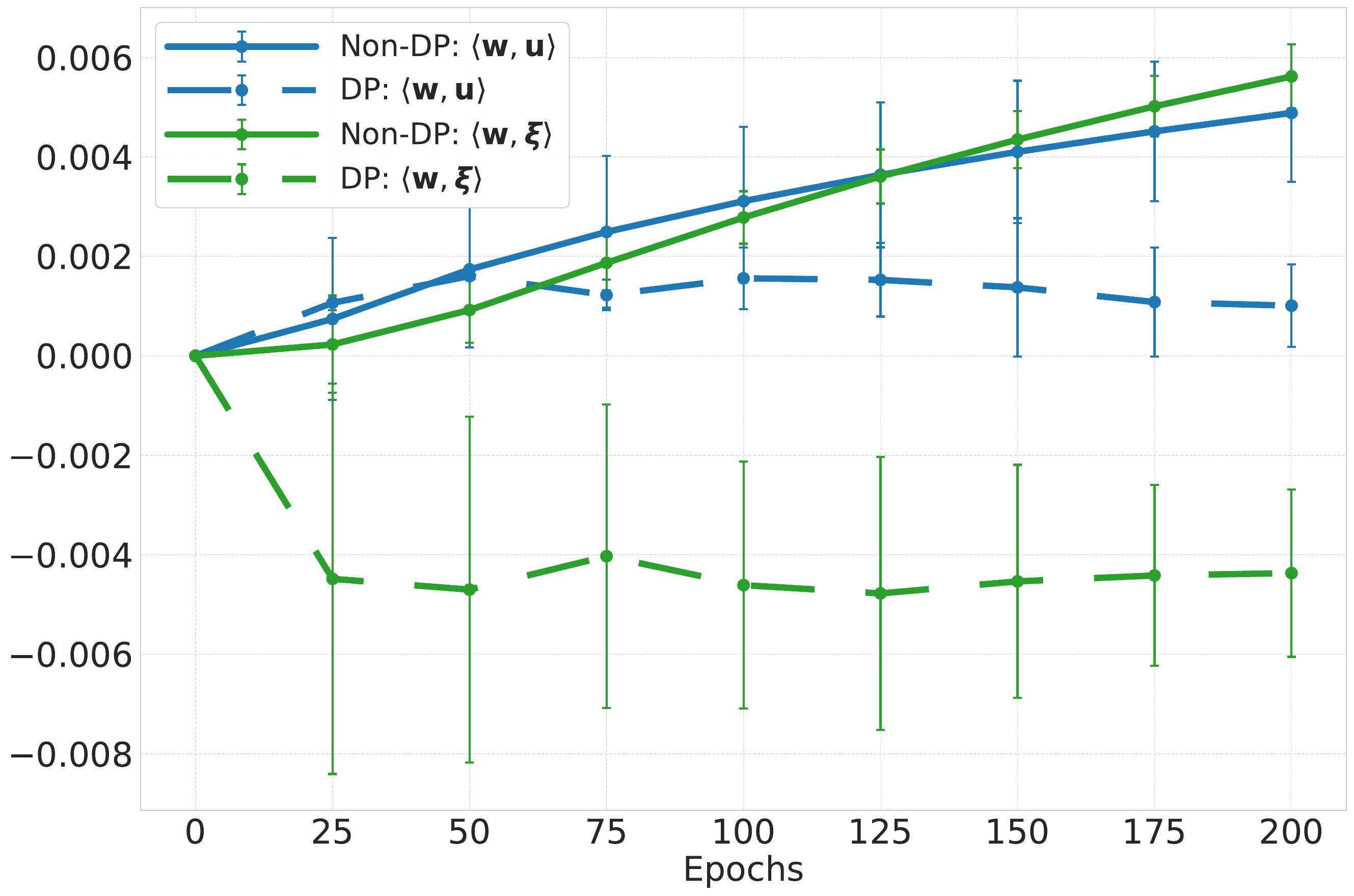} 
    \caption{Training Dynamics Under DP and Non-DP}
    \label{fig:training_dynamic}
\end{figure}
\textbf{Impact of DP-SGD on test accuracy(synthetic data).}
We vary the feature strength $\|\mathbf{u}\|_2$ from $0$ to $3.8$ and the noise correlation ratio $NCR$ from $0$ to $1400$, conducting a comparative study between DP and non-DP settings on synthetic data. Figure~\ref{fig:heatmap} presents the test accuracy heatmaps under different noise distributions (Gaussian and Uniform) and privacy regimes. 

Our empirical results yield several key observations. First, the test accuracy increases as both the feature strength and the noise correlation ratio grow, which aligns with the test error upper bounds established in Theorem~\ref{thm:testerror}. Second, it is noteworthy that even in the regime of minimal feature strength (e.g., $\|\mathbf{u}\|_2 \approx 0$), a high noise correlation ratio still results in a relatively low test error; this suggests that noise pattern memorization is the key to achieving high test accuracy in the absence of strong signals. 

Finally, a comparison between the DP and non-DP cases reveals that while models trained with DP can still achieve nearly optimal test accuracy when the feature strength is sufficiently large, their performance degrades significantly in the low-feature-strength regime, regardless of the noise correlation ratio. This indicates that DP primarily disrupts the memorization of noise patterns, which is indispensable for maintaining utility on rare, long-tailed samples that lack prominent features, corresponding to Remark~\ref{rm:11}.

\textbf{Impact of DP-SGD on test accuracy(real-world data).}
In real-world datasets, parameters such as signal strength and noise correlation ratios are inherent and immutable. Furthermore, as the data-generating distributions of real-world datasets do not strictly align with the data structure in Definition~\ref{def:data_model}, it is unfeasible to directly identify long-tailed samples using the criteria in Definition~\ref{def:longtail}. To address this, we adopt a surrogate approach by identifying long-tailed samples as those that significantly enhance the heterogeneity of noise patterns, specifically through their impact on the squared Frobenius norm $\|\mathbf{A}_y^{\top}\mathbf{A}_y\|_F^2$. 

Inspired by \citet{xu2025rethinking}, and observing that this norm is equivalent to the Frobenius norm of the covariance matrix for class $y$, we formalize the influence score of an image $(x, y) \in \mathcal{S}_{test,y}$ as follows:
\begin{equation}
    \mathcal{I}(x, y) = \|\hat{\Sigma}(\mathcal{S}_{test,y})\|_F^2 - \|\hat{\Sigma}(\mathcal{S}_{test,y} \setminus \{(x, y)\})\|_F^2,
\end{equation}
where $\hat{\Sigma}(\mathcal{S}_{test,y})$ denotes the estimated covariance of the underlying data distribution within the subset $\mathcal{S}_{test,y}$. Samples with higher influence scores are thus treated as long-tailed data, as they represent the primary sources of distributional variance and noise heterogeneity.

Figure~\ref{fig:total_results} illustrates our experimental results on MNIST and CIFAR-10 under both non-DP and DP settings, where we evaluate performance on subsets partitioned by the top and bottom $X\%$ influence scores. In the non-DP regime, the test accuracy remains nearly uniform and optimal across all subsets, suggesting that the noise patterns of long-tailed samples are effectively memorized to maintain high utility. Conversely, the introduction of DP leads to a marked divergence in accuracy, where the performance on the top influence score subset (representing long-tailed data) is significantly lower than that on the bottom subset; this disparity demonstrates that DP disproportionately impairs the utility of long-tailed samples that rely on noise pattern memorization, thereby empirically validating our theoretical results in Theorem~\ref{thm:testerror} and Remark~\ref{rm:11}.

\vspace{-0.1in}
\section{Conclusion}
This paper presents the first analysis of the training dynamics and generalization of two-layer neural networks trained via DP-SGD on long-tailed data. Our theoretical results demonstrate that DP suppresses the memorization of noise patterns, leading to sub-optimal training loss and a disproportionately more severe impact on the test error of long-tailed samples compared to the overall population. Empirical evaluations on both synthetic and real-world datasets validate our theoretical findings, exhibiting a high degree of consistency between the observed training dynamics, test accuracies, and our theoretical predictions.

\section*{Impact Statement}
This paper presents work whose goal is to advance the field of Machine
Learning. There are many potential societal consequences of our work, none
which we feel must be specifically highlighted here.

\nocite{langley00}

\bibliography{example_paper}

@article{wang2020empirical,
  title={Empirical risk minimization in the non-interactive local model of differential privacy},
  author={Wang, Di and Gaboardi, Marco and Smith, Adam and Xu, Jinhui},
  journal={Journal of machine learning research},
  volume={21},
  number={200},
  pages={1--39},
  year={2020}
}

@article{ding2024revisiting,
  title={Revisiting differentially private relu regression},
  author={Ding, Meng and Lei, Mingxi and Zhu, Liyang and Wang, Shaowei and Wang, Di and Xu, Jinhui},
  journal={Advances in Neural Information Processing Systems},
  volume={37},
  pages={55470--55506},
  year={2024}
}

@article{ding2025nearly,
  title={Nearly optimal differentially private relu regression},
  author={Ding, Meng and Lei, Mingxi and Wang, Shaowei and Zheng, Tianhang and Wang, Di and Xu, Jinhui},
  journal={arXiv preprint arXiv:2503.06009},
  year={2025}
}

@inproceedings{su2023differentially,
  title={Differentially private stochastic convex optimization in (non)-Euclidean space revisited},
  author={Su, Jinyan and Zhao, Changhong and Wang, Di},
  booktitle={Uncertainty in Artificial Intelligence},
  pages={2026--2035},
  year={2023},
  organization={PMLR}
}

@inproceedings{tao2022private,
  title={Private Stochastic Convex Optimization and Sparse Learning with Heavy-tailed Data Revisited},
  author={Tao, Youming and Wu, Yulian and Cheng, Xiuzhen and Wang, Di},
  booktitle={31st International Joint Conference on Artificial Intelligence, IJCAI 2022},
  pages={3947--3953},
  year={2022},
  organization={International Joint Conferences on Artificial Intelligence Organization}
}

@article{shen2023differentially,
  title={Differentially private non-convex learning for multi-layer neural networks},
  author={Shen, Hanpu and Wang, Cheng-Long and Xiang, Zihang and Ying, Yiming and Wang, Di},
  journal={arXiv preprint arXiv:2310.08425},
  year={2023}
}

@inproceedings{xiao2023theory,
  title={A theory to instruct differentially-private learning via clipping bias reduction},
  author={Xiao, Hanshen and Xiang, Zihang and Wang, Di and Devadas, Srinivas},
  booktitle={2023 IEEE Symposium on Security and Privacy (SP)},
  pages={2170--2189},
  year={2023},
  organization={IEEE}
}

@article{xu2025beyond,
  title={Beyond Ordinary Lipschitz Constraints: Differentially Private Stochastic Optimization with Tsybakov Noise Condition},
  author={Xu, Difei and Ding, Meng and Xiang, Zihang and Xu, Jinhui and Wang, Di},
  journal={arXiv preprint arXiv:2509.04668},
  year={2025}
}

@inproceedings{wang2019noninteractive,
  title={Noninteractive locally private learning of linear models via polynomial approximations},
  author={Wang, Di and Smith, Adam and Xu, Jinhui},
  booktitle={Algorithmic Learning Theory},
  pages={898--903},
  year={2019},
  organization={PMLR}
}

@inproceedings{huai2020pairwise,
  title={Pairwise learning with differential privacy guarantees},
  author={Huai, Mengdi and Wang, Di and Miao, Chenglin and Xu, Jinhui and Zhang, Aidong},
  booktitle={Proceedings of the AAAI Conference on Artificial Intelligence},
  volume={34},
  number={01},
  pages={694--701},
  year={2020}
}

@article{xiang2023practical,
  title={Practical differentially private and byzantine-resilient federated learning},
  author={Xiang, Zihang and Wang, Tianhao and Lin, Wanyu and Wang, Di},
  journal={Proceedings of the ACM on Management of Data},
  volume={1},
  number={2},
  pages={1--26},
  year={2023},
  publisher={ACM New York, NY, USA}
}

@inproceedings{wangprivate,
  title={Private Training Large-scale Models with Efficient DP-SGD},
  author={Wang, Liangyu and Wang, Junxiao and Ren, Jie and Xiang, Zihang and Keyes, David E and Wang, Di},
  booktitle={The Thirty-ninth Annual Conference on Neural Information Processing Systems},
  year={2025} 
}

@inproceedings{xue2021differentially,
  title={Differentially Private Pairwise Learning Revisited},
  author={Xue, Zhiyu and Yang, Shaoyang and Huai, Mengdi and Wang, Di},
  booktitle={30th International Joint Conference on Artificial Intelligence, IJCAI 2021},
  pages={3242--3248},
  year={2021},
  organization={International Joint Conferences on Artificial Intelligence Organization}
}

@article{zhang2025towards,
  title={Towards user-level private reinforcement learning with human feedback},
  author={Zhang, Jiaming and Lei, Mingxi and Ding, Meng and Li, Mengdi and Xiang, Zihang and Xu, Difei and Xu, Jinhui and Wang, Di},
  journal={arXiv preprint arXiv:2502.17515},
  year={2025}
}

@article{zhou2024ppml,
  title={PPML-Omics: A privacy-preserving federated machine learning method protects patients’ privacy in omic data},
  author={Zhou, Juexiao and Chen, Siyuan and Wu, Yulian and Li, Haoyang and Zhang, Bin and Zhou, Longxi and Hu, Yan and Xiang, Zihang and Li, Zhongxiao and Chen, Ningning and others},
  journal={Science Advances},
  volume={10},
  number={5},
  pages={eadh8601},
  year={2024},
  publisher={American Association for the Advancement of Science}
}

@article{zhou2024personalized,
  title={Personalized and privacy-preserving federated heterogeneous medical image analysis with PPPML-HMI},
  author={Zhou, Juexiao and Zhou, Longxi and Wang, Di and Xu, Xiaopeng and Li, Haoyang and Chu, Yuetan and Han, Wenkai and Gao, Xin},
  journal={Computers in Biology and Medicine},
  volume={169},
  pages={107861},
  year={2024},
  publisher={Elsevier}
}

@article{xuunderstanding,
  title={Understanding Impacts of Differential Privacy: A Unified Framework with Two-Layer Neural Networks},
  author={Xu, Ruichen and Chen, Kexin}
}

@article{ding2025understanding,
  title={Understanding Private Learning From Feature Perspective},
  author={Ding, Meng and Lei, Mingxi and Fu, Shaopeng and Wang, Shaowei and Wang, Di and Xu, Jinhui},
  journal={arXiv preprint arXiv:2511.18006},
  year={2025}
}

@article{xu2025rethinking,
  title={Rethinking benign overfitting in two-layer neural networks},
  author={Xu, Ruichen and Chen, Kexin},
  journal={arXiv preprint arXiv:2502.11893},
  year={2025}
}

@article{carlini2019distribution,
  title={Distribution density, tails, and outliers in machine learning: Metrics and applications},
  author={Carlini, Nicholas and Erlingsson, Ulfar and Papernot, Nicolas},
  journal={arXiv preprint arXiv:1910.13427},
  year={2019}
}

@inproceedings{feldman2020does,
  title={Does learning require memorization? a short tale about a long tail},
  author={Feldman, Vitaly},
  booktitle={Proceedings of the 52nd annual ACM SIGACT symposium on theory of computing},
  pages={954--959},
  year={2020}
}

@article{bagdasaryan2019differential,
  title={Differential privacy has disparate impact on model accuracy},
  author={Bagdasaryan, Eugene and Poursaeed, Omid and Shmatikov, Vitaly},
  journal={Advances in neural information processing systems},
  volume={32},
  year={2019}
}

@article{feldman2020neural,
  title={What neural networks memorize and why: Discovering the long tail via influence estimation},
  author={Feldman, Vitaly and Zhang, Chiyuan},
  journal={Advances in Neural Information Processing Systems},
  volume={33},
  pages={2881--2891},
  year={2020}
}

@inproceedings{pujol2020fair,
  title={Fair decision making using privacy-protected data},
  author={Pujol, David and McKenna, Ryan and Kuppam, Satya and Hay, Michael and Machanavajjhala, Ashwin and Miklau, Gerome},
  booktitle={Proceedings of the 2020 Conference on Fairness, Accountability, and Transparency},
  pages={189--199},
  year={2020}
}

@inproceedings{cummings2019compatibility,
  title={On the compatibility of privacy and fairness},
  author={Cummings, Rachel and Gupta, Varun and Kimpara, Dhamma and Morgenstern, Jamie},
  booktitle={Adjunct publication of the 27th conference on user modeling, adaptation and personalization},
  pages={309--315},
  year={2019}
}

@inproceedings{kou2023benign,
  title={Benign overfitting in two-layer relu convolutional neural networks},
  author={Kou, Yiwen and Chen, Zixiang and Chen, Yuanzhou and Gu, Quanquan},
  booktitle={International conference on machine learning},
  pages={17615--17659},
  year={2023},
  organization={PMLR}
}

@article{cao2022benign,
  title={Benign overfitting in two-layer convolutional neural networks},
  author={Cao, Yuan and Chen, Zixiang and Belkin, Misha and Gu, Quanquan},
  journal={Advances in neural information processing systems},
  volume={35},
  pages={25237--25250},
  year={2022}
}

@inproceedings{dwork2006calibrating,
  title={Calibrating noise to sensitivity in private data analysis},
  author={Dwork, Cynthia and McSherry, Frank and Nissim, Kobbi and Smith, Adam},
  booktitle={Theory of cryptography conference},
  pages={265--284},
  year={2006},
  organization={Springer}
}

@inproceedings{abadi2016deep,
  title={Deep learning with differential privacy},
  author={Abadi, Martin and Chu, Andy and Goodfellow, Ian and McMahan, H Brendan and Mironov, Ilya and Talwar, Kunal and Zhang, Li},
  booktitle={Proceedings of the 2016 ACM SIGSAC conference on computer and communications security},
  pages={308--318},
  year={2016}
}

@article{chaudhuri2011differentially,
  title={Differentially private empirical risk minimization.},
  author={Chaudhuri, Kamalika and Monteleoni, Claire and Sarwate, Anand D},
  journal={Journal of Machine Learning Research},
  volume={12},
  number={3},
  year={2011}
}

@inproceedings{bassily2014private,
  title={Private empirical risk minimization: Efficient algorithms and tight error bounds},
  author={Bassily, Raef and Smith, Adam and Thakurta, Abhradeep},
  booktitle={2014 IEEE 55th annual symposium on foundations of computer science},
  pages={464--473},
  year={2014},
  organization={IEEE}
}

@inproceedings{wang2019differentially,
  title={Differentially private empirical risk minimization with smooth non-convex loss functions: A non-stationary view},
  author={Wang, Di and Xu, Jinhui},
  booktitle={Proceedings of the AAAI Conference on Artificial Intelligence},
  volume={33},
  number={01},
  pages={1182--1189},
  year={2019}
}

@article{bassily2019private,
  title={Private stochastic convex optimization with optimal rates},
  author={Bassily, Raef and Feldman, Vitaly and Talwar, Kunal and Guha Thakurta, Abhradeep},
  journal={Advances in neural information processing systems},
  volume={32},
  year={2019}
}

@inproceedings{feldman2020private,
  title={Private stochastic convex optimization: optimal rates in linear time},
  author={Feldman, Vitaly and Koren, Tomer and Talwar, Kunal},
  booktitle={Proceedings of the 52nd Annual ACM SIGACT Symposium on Theory of Computing},
  pages={439--449},
  year={2020}
}

@inproceedings{wang2020differentially,
  title={On differentially private stochastic convex optimization with heavy-tailed data},
  author={Wang, Di and Xiao, Hanshen and Devadas, Srinivas and Xu, Jinhui},
  booktitle={International Conference on Machine Learning},
  pages={10081--10091},
  year={2020},
  organization={PMLR}
}

@inproceedings{hu2022high,
  title={High dimensional differentially private stochastic optimization with heavy-tailed data},
  author={Hu, Lijie and Ni, Shuo and Xiao, Hanshen and Wang, Di},
  booktitle={Proceedings of the 41st ACM SIGMOD-SIGACT-SIGAI Symposium on Principles of Database Systems},
  pages={227--236},
  year={2022}
}

@article{bassily2021differentially,
  title={Differentially private stochastic optimization: New results in convex and non-convex settings},
  author={Bassily, Raef and Guzm{\'a}n, Crist{\'o}bal and Menart, Michael},
  journal={Advances in Neural Information Processing Systems},
  volume={34},
  pages={9317--9329},
  year={2021}
}

@inproceedings{asi2021private,
  title={Private adaptive gradient methods for convex optimization},
  author={Asi, Hilal and Duchi, John and Fallah, Alireza and Javidbakht, Omid and Talwar, Kunal},
  booktitle={International Conference on Machine Learning},
  pages={383--392},
  year={2021},
  organization={PMLR}
}

@inproceedings{zhang2019learning,
  title={Learning one-hidden-layer relu networks via gradient descent},
  author={Zhang, Xiao and Yu, Yaodong and Wang, Lingxiao and Gu, Quanquan},
  booktitle={The 22nd international conference on artificial intelligence and statistics},
  pages={1524--1534},
  year={2019},
  organization={PMLR}
}

@article{bartlett2020benign,
  title={Benign overfitting in linear regression},
  author={Bartlett, Peter L and Long, Philip M and Lugosi, G{\'a}bor and Tsigler, Alexander},
  journal={Proceedings of the National Academy of Sciences},
  volume={117},
  number={48},
  pages={30063--30070},
  year={2020},
  publisher={National Academy of Sciences}
}

@misc{ruppert2004elements,
  title={The elements of statistical learning: data mining, inference, and prediction},
  author={Ruppert, David},
  year={2004},
  publisher={Taylor \& Francis}
}

@article{lundervold2019overview,
  title={An overview of deep learning in medical imaging focusing on MRI},
  author={Lundervold, Alexander Selvikv{\aa}g and Lundervold, Arvid},
  journal={Zeitschrift fuer medizinische Physik},
  volume={29},
  number={2},
  pages={102--127},
  year={2019},
  publisher={Elsevier}
}

@article{chlap2021review,
  title={A review of medical image data augmentation techniques for deep learning applications},
  author={Chlap, Phillip and Min, Hang and Vandenberg, Nym and Dowling, Jason and Holloway, Lois and Haworth, Annette},
  journal={Journal of medical imaging and radiation oncology},
  volume={65},
  number={5},
  pages={545--563},
  year={2021},
  publisher={Wiley Online Library}
}

@article{bi2024advanced,
  title={Advanced portfolio management in finance using deep learning and artificial intelligence techniques: Enhancing investment strategies through machine learning models},
  author={Bi, Shuochen and Lian, Yufan},
  journal={Journal of Artificial Intelligence Research},
  volume={4},
  number={1},
  pages={233--298},
  year={2024}
}

@article{oroojlooy2023review,
  title={A review of cooperative multi-agent deep reinforcement learning},
  author={Oroojlooy, Afshin and Hajinezhad, Davood},
  journal={Applied Intelligence},
  volume={53},
  number={11},
  pages={13677--13722},
  year={2023},
  publisher={Springer}
}

@inproceedings{shokri2017membership,
  title={Membership inference attacks against machine learning models},
  author={Shokri, Reza and Stronati, Marco and Song, Congzheng and Shmatikov, Vitaly},
  booktitle={2017 IEEE symposium on security and privacy (SP)},
  pages={3--18},
  year={2017},
  organization={IEEE}
}

@inproceedings{yeom2018privacy,
  title={Privacy risk in machine learning: Analyzing the connection to overfitting},
  author={Yeom, Samuel and Giacomelli, Irene and Fredrikson, Matt and Jha, Somesh},
  booktitle={2018 IEEE 31st computer security foundations symposium (CSF)},
  pages={268--282},
  year={2018},
  organization={IEEE}
}

@inproceedings{carlini2019secret,
  title={The secret sharer: Evaluating and testing unintended memorization in neural networks},
  author={Carlini, Nicholas and Liu, Chang and Erlingsson, {\'U}lfar and Kos, Jernej and Song, Dawn},
  booktitle={28th USENIX security symposium (USENIX security 19)},
  pages={267--284},
  year={2019}
}

@article{lecun2002gradient,
  title={Gradient-based learning applied to document recognition},
  author={LeCun, Yann and Bottou, L{\'e}on and Bengio, Yoshua and Haffner, Patrick},
  journal={Proceedings of the IEEE},
  volume={86},
  number={11},
  pages={2278--2324},
  year={2002},
  publisher={Ieee}
}

@inproceedings{dwork2010boosting,
  title={Boosting and differential privacy},
  author={Dwork, Cynthia and Rothblum, Guy N and Vadhan, Salil},
  booktitle={2010 IEEE 51st annual symposium on foundations of computer science},
  pages={51--60},
  year={2010},
  organization={IEEE}
}

@article{allen2020towards,
  title={Towards understanding ensemble, knowledge distillation and self-distillation in deep learning},
  author={Allen-Zhu, Zeyuan and Li, Yuanzhi},
  journal={arXiv preprint arXiv:2012.09816},
  year={2020}
}

@inproceedings{allen2022feature,
  title={Feature purification: How adversarial training performs robust deep learning},
  author={Allen-Zhu, Zeyuan and Li, Yuanzhi},
  booktitle={2021 IEEE 62nd annual symposium on foundations of computer science (FOCS)},
  pages={977--988},
  year={2022},
  organization={IEEE}
}

@book{vershynin2018high,
  title={High-dimensional probability: An introduction with applications in data science},
  author={Vershynin, Roman},
  volume={47},
  year={2018},
  publisher={Cambridge university press}
}

@misc{remerscheid2022smoothnetsoptimizingcnnarchitecture,
      title={SmoothNets: Optimizing CNN architecture design for differentially private deep learning}, 
      author={Nicolas W. Remerscheid and Alexander Ziller and Daniel Rueckert and Georgios Kaissis},
      year={2022},
      eprint={2205.04095},
      archivePrefix={arXiv},
      primaryClass={cs.CV},
      url={https://arxiv.org/abs/2205.04095}, 
}

@article{shi2025towards,
  title={Towards Understanding Generalization in DP-GD: A Case Study in Training Two-Layer CNNs},
  author={Shi, Zhongjie and Wang, Puyu and Zhang, Chenyang and Cao, Yuan},
  journal={arXiv preprint arXiv:2511.22270},
  year={2025}
}
\bibliographystyle{icml2026}

\newpage
\appendix
\onecolumn

\section{Illustration of long-tailed data}\label{sec:illu}

\begin{figure}[h] 
    \centering 
    \includegraphics[width=0.7\textwidth]{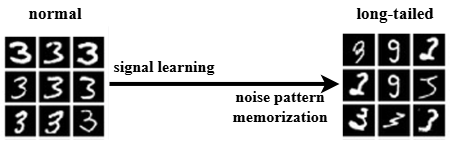} 
    \caption{Illustration of long-tailed data on MNIST}
    \label{fig:long-tail}
\end{figure}
Figure~\ref{fig:long-tail} illustrates both normal and long-tailed data from the MNIST dataset (digit 3). The samples on the left represent normal data that rely on signal learning. Conversely, the samples on the right represent atypical long-tailed data that rely on noise pattern memorization.

\begin{remark}[Justification for Definition~\ref{def:longtail}]
    We think that the essence of long-tailed data is atypicality rather than mere frequency. Tail samples possess unique noise patterns $\boldsymbol{\xi}$ that do not align with the global feature $\mathbf{u}$. Definition~\ref{def:longtail} identifies this ``intrinsic tail'' by selecting samples residing in the extreme tail of the noise distribution (i.e., $\zeta' \geq L$). Given the sub-Gaussian nature of the noise, the probability of such samples occurring is exponentially small, which formally characterizes their rare property. Crucially, these samples rely fundamentally on the memorization channel to be recognized, explaining their high sensitivity to privacy-preserving training.
\end{remark}
\section{Key lemmas}\label{sec:lem}
\begin{lemma}\label{lem:b1}
Let $\mathbf{Z} \in \mathbb{R}^{m \times n} (m > n)$ be a matrix whose entries are independent and identically distributed Gaussian variables, i.e., $\mathbf{Z}_{i,j} \sim \mathcal{N}(0, 1)$ for all $i \in [m], j \in [n]$. With probability at least $1 - \delta$, all singular values of $\mathbf{Z}$, $\lambda_i(\mathbf{Z})$, for all $i \in [n]$ satisfies
\begin{equation}
\lambda_i(\mathbf{Z}) \geq \sqrt{m} - 2 \sqrt{8 \log(\frac{2}{\delta}) + 8 \log(9)n}.
\end{equation}
\end{lemma}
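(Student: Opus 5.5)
This is the standard lower bound on the smallest singular value of a tall Gaussian matrix. We prove it by an $\epsilon$-net argument on the unit sphere $S^{n-1}$ combined with Gaussian concentration. Since $\lambda_{\min}(\mathbf{Z}) = \min_{\|\mathbf{v}\|_2=1}\|\mathbf{Z}\mathbf{v}\|_2$, it suffices to lower bound $\|\mathbf{Z}\mathbf{v}\|_2$ uniformly over unit vectors $\mathbf{v}$. The bound then holds for every $\lambda_i(\mathbf{Z})$, $i\in[n]$.

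\textbf{Step 1: a fixed direction.} Fix a unit $\mathbf{v}\in\mathbb{R}^n$. Then $\mathbf{Z}\mathbf{v}\sim\mathcal{N}(0,\mathbf{I}_m)$. The map $\mathbf{g}\mapsto\|\mathbf{g}\|_2$ is $1$-Lipschitz, and $\mathbb{E}\|\mathbf{g}\|_2\ge\sqrt{m}-1$ (or a comparable bound). Gaussian concentration therefore gives
\[
\mathbb{P}\bigl[\|\mathbf{Z}\mathbf{v}\|_2\le \sqrt{m}-t\bigr]\le 2\exp(-t^2/8),
\]
where we use the loose constant $8$ to match the statement and to absorb the $-1$ from the expectation.

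\textbf{Step 2: union bound over a net.} Take a $1/4$-net $\mathcal{N}$ of $S^{n-1}$ with $|\mathcal{N}|\le 9^n$, using the volumetric bound $(1+2/\epsilon)^n$. The union bound gives, with probability at least $1-2\cdot 9^n e^{-t^2/8}$, that $\|\mathbf{Z}\mathbf{v}\|_2\ge\sqrt{m}-t$ for all $\mathbf{v}\in\mathcal{N}$. Choosing $t=\sqrt{8\log(2/\delta)+8n\log 9}$ makes the failure probability at most $\delta$. On the same event, the analogous one-sided bound $\|\mathbf{Z}\mathbf{v}\|_2\le\sqrt{m}+t$ also holds on the net; to keep the probability at $\delta$, we either fold this into the same tail bound or use Gordon's inequality $\mathbb{E}\|\mathbf{Z}\|\le\sqrt{m}+\sqrt{n}$.

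\textbf{Step 3: extend from the net to the sphere.} This is the main technical point. For an arbitrary unit $\mathbf{v}$, pick $\mathbf{v}_0\in\mathcal{N}$ with $\|\mathbf{v}-\mathbf{v}_0\|_2\le 1/4$. Then
\[
\|\mathbf{Z}\mathbf{v}\|_2\ge\|\mathbf{Z}\mathbf{v}_0\|_2-\tfrac14\|\mathbf{Z}\|_{op}.
\]
A standard net argument bounds the operator norm by $\|\mathbf{Z}\|_{op}\le \frac{1}{1-1/4}\max_{\mathcal{N}}\|\mathbf{Z}\mathbf{v}\|_2$. Taken literally this loses a constant multiple of $\sqrt m$, so it cannot give $\sqrt m - 2t$ on its own. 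The cleaner route is to work with the quadratic form $\mathbf{v}^\top(\mathbf{Z}^\top\mathbf{Z}-m\mathbf{I})\mathbf{v}$. For a symmetric matrix $\mathbf{M}$, a $1/4$-net gives $\|\mathbf{M}\|_{op}\le 2\max_{\mathbf{v}\in\mathcal{N}}|\mathbf{v}^\top\mathbf{M}\mathbf{v}|$, and this factor $2$ is what produces the $2$ in front of the square root in the statement. Concretely, Step 1 applied to both tails shows $\bigl|\|\mathbf{Z}\mathbf{v}\|_2-\sqrt m\bigr|\le t$ on the net. This gives $|\mathbf{v}^\top\mathbf{M}\mathbf{v}|\le 2t\sqrt m+t^2$, and hence $\|\mathbf{M}\|_{op}\le 2(2t\sqrt m+t^2)$. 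Finally, $\lambda_{\min}(\mathbf{Z})^2\ge m-\|\mathbf{M}\|_{op}$, which yields $\lambda_{\min}(\mathbf{Z})\ge\sqrt m-2t$ after an elementary inequality (when $2t\ge\sqrt m$ the bound is trivial).

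The main obstacle is this constant bookkeeping in Step 3. If the factors do not close exactly, the fallback is to cite Vershynin (2018), Theorem 4.6.1, whose form $\sqrt m - C(\sqrt n+\sqrt{\log(2/\delta)})$ matches the statement with the explicit constants shown.
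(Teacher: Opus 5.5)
\textbf{Gap in Step 3.} The paper only cites Vershynin for this lemma, so your $\tfrac14$-net plus quadratic-form argument is a self-contained alternative, and the strategy is reasonable. However, the closing step of Step 3 is false, so as written the proof does not reach the stated constant. From $\bigl|\|\mathbf{Z}\mathbf{v}\|_2-\sqrt m\bigr|\le t$ on the net you get $\|\mathbf{M}\|_{\mathrm{op}}\le 4t\sqrt m+2t^2$, and hence $\lambda_{\min}(\mathbf{Z})^2\ge m-4t\sqrt m-2t^2$. But $(\sqrt m-2t)^2-(m-4t\sqrt m-2t^2)=6t^2>0$, so no elementary inequality turns this into $\lambda_{\min}(\mathbf{Z})\ge\sqrt m-2t$. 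The best uniform conclusion your chain supports is $\lambda_{\min}(\mathbf{Z})\ge\sqrt m-(2+\sqrt6)\,t$.

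Sharpening the net step does not help. Even the one-sided estimate $\lambda_{\min}(\mathbf{M})\ge\min_{\mathcal{N}}\mathbf{v}^\top\mathbf{M}\mathbf{v}-\tfrac12\|\mathbf{M}\|_{\mathrm{op}}\ge-4t\sqrt m$ still falls short of $(\sqrt m-2t)^2$ by $4t^2$. Your fallback does not recover the explicit constants either, because Theorem 4.6.1 in Vershynin carries an unspecified absolute constant $C$.

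\textbf{Fix.} The slack is lost in Step 1. A per-direction failure probability of $2e^{-t^2/8}$ actually pays for a deviation of about $t/2$, and squaring a norm deviation afterwards loses more. Work with the squared norm directly. Since $\|\mathbf{Z}\mathbf{v}\|_2^2\sim\chi^2_m$, the Laurent--Massart bounds at level $x=t^2/8$ give, with probability at least $1-2e^{-t^2/8}$,
\[
-t\sqrt m/\sqrt2\;\le\;\mathbf{v}^\top\mathbf{M}\mathbf{v}\;\le\;t\sqrt m/\sqrt2+t^2/4 .
\]
The rest then closes:
\begin{itemize}
\item The union bound over the $9^n$ net points still costs exactly $\delta$.
\item The net lemma gives $\|\mathbf{M}\|_{\mathrm{op}}\le\sqrt2\,t\sqrt m+t^2/2$.
\item When $\sqrt m>2t$, the inequality $m-\sqrt2\,t\sqrt m-t^2/2\ge(\sqrt m-2t)^2$ reduces to $(4-\sqrt2)\sqrt m\ge 4.5\,t$, which holds; the case $\sqrt m\le 2t$ is trivial.
\end{itemize}

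\textbf{Alternative route.} Gordon's inequality $\mathbb{E}\,\lambda_{\min}(\mathbf{Z})\ge\sqrt m-\sqrt n$, combined with Gaussian concentration of the $1$-Lipschitz (in Frobenius norm) map $\mathbf{Z}\mapsto\lambda_{\min}(\mathbf{Z})$, gives $\lambda_{\min}(\mathbf{Z})\ge\sqrt m-\sqrt n-\sqrt{2\log(1/\delta)}$ with probability at least $1-\delta$. This implies the lemma, because $\sqrt n+\sqrt{2\log(1/\delta)}\le 4\bigl(\sqrt{n\log 9}+\sqrt{\log(2/\delta)}\bigr)\le 2t$.
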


\noindent Lemma~\ref{lem:b1} follows from the concentration inequality of Gaussian random matrices \citep{vershynin2018high}.

\begin{lemma}\label{lem:b2}
For $n$ random $\sigma_p$ sub-Gaussian variable $x_1, \dots, x_n$, with probability $1 - \delta$, we have
\begin{equation}
\mathbb{P}[|x| \geq t] \leq 2 \exp \left( -\frac{t^2}{2\sigma_p^2} \right).
\end{equation}
\end{lemma}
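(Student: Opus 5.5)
We need to prove Lemma 2: for $n$ sub-Gaussian variables, the tail bound $\mathbb{P}[|x|\ge t]\le 2\exp(-t^2/(2\sigma_p^2))$, presumably holding for each $x_i$ and then, via a union bound, with probability $1-\delta$ giving $\max_i |x_i|\le \sigma_p\sqrt{2\log(2n/\delta)}$.

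The plan is to use the moment generating function characterization of a $\sigma_p$ sub-Gaussian variable, namely $\mathbb{E}[e^{\lambda x}]\le e^{\lambda^2\sigma_p^2/2}$ for all $\lambda\in\mathbb{R}$, together with the Chernoff method.

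\textbf{Upper tail.} For $\lambda>0$, Markov's inequality applied to $e^{\lambda x}$ gives
\[
\mathbb{P}[x\ge t]\le e^{-\lambda t}\,\mathbb{E}[e^{\lambda x}]\le \exp\left(-\lambda t+\tfrac{\lambda^2\sigma_p^2}{2}\right).
\]
Optimizing with $\lambda=t/\sigma_p^2$ yields $\mathbb{P}[x\ge t]\le \exp(-t^2/(2\sigma_p^2))$.

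\textbf{Lower tail and the two-sided bound.} Since $-x$ satisfies the same MGF bound (the condition holds for all real $\lambda$), the same argument gives $\mathbb{P}[x\le -t]\le \exp(-t^2/(2\sigma_p^2))$. Summing the two tails gives the factor $2$ in the stated inequality.

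\textbf{High-probability statement over all $n$ variables.} Apply the two-sided bound to each $x_i$ and take a union bound over $i\in[n]$:
\[
\mathbb{P}\left[\max_i|x_i|\ge t\right]\le 2n\exp\left(-t^2/(2\sigma_p^2)\right).
\]
Setting the right-hand side equal to $\delta$ gives $t=\sigma_p\sqrt{2\log(2n/\delta)}$, so with probability at least $1-\delta$ every $|x_i|$ stays below this level.

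The only obstacle is interpretive. If the paper defines sub-Gaussian through a $\psi_2$-norm instead of the MGF, the constants change by absolute factors, so I would either adopt the MGF definition explicitly or cite the equivalence in \citet{vershynin2018high}. No independence among the $x_i$ is needed, since the union bound does not require it.
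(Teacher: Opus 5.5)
Your proposal is correct and follows the same route as the paper. The paper applies the sub-Gaussian tail bound to each $x_i$ at level $\delta/n$, obtaining $|x_i|\le\sqrt{2\sigma_p^2\log(2n/\delta)}$ (its displayed $\geq$ is a typo), and then takes a union bound over $i\in[n]$; the only difference is that you derive the tail bound from the MGF via Chernoff, whereas the paper takes it directly as the definition of sub-Gaussianity.
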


\begin{proof}
Based on the definition of sub-Gaussian distribution, with probability of $1 - \delta/n$, we have
\begin{equation}
|x_i| \geq \sqrt{2\sigma_p^2 \log(\frac{2n}{\delta})}.
\end{equation}
By Union bound, we finishes the proof.
\end{proof}

\begin{lemma}[Half-normal distribution concentration bound]\label{lem:aa3}
    Suppose $x_1, x_2, \dots, x_n \sim \mathcal{N}(0, \sigma_0^2)$. Then,
\begin{equation}
\mathbb{P} \left[ \frac{1}{n} \sum_{i=1}^n |x_i| - \sqrt{\frac{2}{\pi}} \sigma_0 \geq t \sigma_0 \right], \mathbb{P} \left[ \frac{1}{n} \sum_{i=1}^n |x_i| - \sqrt{\frac{2}{\pi}} \sigma_0 \leq -t \sigma_0 \right] \leq \exp \left( -\frac{t^2}{2} \right), \forall t \geq 0. 
\end{equation}
\begin{proof}
    First, half-normal variables $|x_i|, \forall i \in [n]$ are sub-Gaussian as a half-normal variable has a negative tail bounded by $-\sqrt{\frac{2}{\pi}}$ and a Gaussian delay positive tail. Then, by Hoeffding's inequality, we have
\begin{equation}
\mathbb{P} \left[ \frac{1}{n} \sum_{i=1}^n |x_i| - \sqrt{\frac{2}{\pi}} \sigma_0 \geq t \sigma_0 \right], \mathbb{P} \left[ \frac{1}{n} \sum_{i=1}^n |x_i| - \sqrt{\frac{2}{\pi}} \sigma_0 \leq -t \sigma_0 \right] \leq \exp \left( -\frac{t^2}{2} \right), \forall t \geq 0. 
\end{equation}
\end{proof}
\end{lemma}

\begin{lemma}\label{lem:b3}
Suppose two zero-mean random vectors $\mathbf{x}, \mathbf{y} \in \mathbb{R}^d$ are generated as $\mathbf{x} = \mathbf{A}\boldsymbol{\zeta}_1, \mathbf{y} = \mathbf{A}\boldsymbol{\zeta}_2$, where $\boldsymbol{\zeta}_i$'s each coordinate is independent, symmetric, and $\sigma_p$ sub-Gaussian with $\mathbb{E}[\zeta_{i,j}^2] = 1$, for any $i \in [2], j \in [d]$. Then, $\mathbf{x}$ and $\mathbf{y}$ satisfy
\begin{equation}
\mathbb{P}[|\langle \mathbf{x}, \mathbf{y} \rangle| \geq t] \leq 2 \exp \left( -\Omega \left( \min \left\{ \frac{t^2}{\|\mathbf{A}^\top \mathbf{A}\|_F^2 \sigma_p^4}, \frac{t}{\|\mathbf{A}^\top \mathbf{A}\|_{\text{op}} \sigma_p^2} \right\} \right) \right).
\end{equation}
\end{lemma}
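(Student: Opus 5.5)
Lemma~\ref{lem:b3} is a Hanson--Wright-type bound for a bilinear form in two independent sub-Gaussian vectors. The plan is to rewrite the inner product as a quadratic form in one longer vector and then apply the Hanson--Wright inequality \citep{vershynin2018high}. Write
\[
\langle \mathbf{x}, \mathbf{y}\rangle = \boldsymbol{\zeta}_1^\top \mathbf{A}^\top \mathbf{A}\, \boldsymbol{\zeta}_2 .
\]
Stack $\boldsymbol{\eta} = (\boldsymbol{\zeta}_1, \boldsymbol{\zeta}_2) \in \mathbb{R}^{2d}$ and set $\mathbf{M} = \mathbf{A}^\top\mathbf{A}$. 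Then $\langle \mathbf{x},\mathbf{y}\rangle = \boldsymbol{\eta}^\top \mathbf{B}\boldsymbol{\eta}$, where $\mathbf{B}$ is the $2d\times 2d$ block matrix whose only nonzero block is the upper-right block, equal to $\mathbf{M}$. We assume, as the intended setting indicates, that $\boldsymbol{\zeta}_1$ and $\boldsymbol{\zeta}_2$ are independent. Then all $2d$ coordinates of $\boldsymbol{\eta}$ are independent, mean zero, and $\sigma_p$ sub-Gaussian, so $\boldsymbol{\eta}$ meets the hypotheses of Hanson--Wright.

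\textbf{Step 1: the mean vanishes.} The diagonal of $\mathbf{B}$ is zero. Equivalently, by independence, $\mathbb{E}[\boldsymbol{\zeta}_1^\top \mathbf{M}\boldsymbol{\zeta}_2] = \mathbb{E}[\boldsymbol{\zeta}_1]^\top \mathbf{M}\,\mathbb{E}[\boldsymbol{\zeta}_2] = 0$. Hence the centered quadratic form is just $\langle\mathbf{x},\mathbf{y}\rangle$ itself. The symmetry of the coordinates is not needed here; mean zero already follows from symmetry with unit variance.

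\textbf{Step 2: norms of $\mathbf{B}$.} We have $\|\mathbf{B}\|_F = \|\mathbf{M}\|_F = \|\mathbf{A}^\top\mathbf{A}\|_F$ and $\|\mathbf{B}\|_{\rm op} = \|\mathbf{M}\|_{\rm op} = \|\mathbf{A}^\top\mathbf{A}\|_{\rm op}$, since a block off-diagonal matrix has the same singular values as its nonzero block.

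\textbf{Step 3: apply Hanson--Wright.} Hanson--Wright states
\[
\mathbb{P}\bigl[|\boldsymbol{\eta}^\top\mathbf{B}\boldsymbol{\eta} - \mathbb{E}\boldsymbol{\eta}^\top\mathbf{B}\boldsymbol{\eta}| \ge t\bigr] \le 2\exp\Bigl(-c\min\Bigl\{\tfrac{t^2}{\sigma_p^4\|\mathbf{B}\|_F^2}, \tfrac{t}{\sigma_p^2\|\mathbf{B}\|_{\rm op}}\Bigr\}\Bigr).
\]
Substituting the norms from Step 2 gives exactly the claimed bound with $\Omega(\cdot)$ absorbing $c$.

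\textbf{Alternative route.} To avoid the stacking trick, one can instead condition on $\boldsymbol{\zeta}_2$. Given $\boldsymbol{\zeta}_2$, the variable $\langle\mathbf{x},\mathbf{y}\rangle$ is sub-Gaussian with variance proxy $\sigma_p^2\|\mathbf{M}\boldsymbol{\zeta}_2\|_2^2$. One then controls $\|\mathbf{M}\boldsymbol{\zeta}_2\|_2^2$ by Hanson--Wright applied to $\mathbf{M}^\top\mathbf{M}$, and integrates. This is messier, and the decoupled (stacked) version above is the cleanest.

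\textbf{Main obstacle.} The only delicate point is independence: the lemma statement does not explicitly say $\boldsymbol{\zeta}_1 \perp \boldsymbol{\zeta}_2$, and the argument needs it. Beyond that, the step requiring the most care is checking that the constants in Hanson--Wright depend only on the sub-Gaussian norm $K \asymp \sigma_p$, giving the $\sigma_p^4$ and $\sigma_p^2$ factors.
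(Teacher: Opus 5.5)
Your proof is correct, but it takes a different route from the paper's.

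\textbf{The paper's route.} The paper diagonalizes $\mathbf{A}^\top\mathbf{A} = \mathbf{U}^\top\boldsymbol{\Lambda}\mathbf{U}$ and sets $\tilde{\boldsymbol{\zeta}}_i = \mathbf{U}\boldsymbol{\zeta}_i$. It writes $\langle\mathbf{x},\mathbf{y}\rangle = \sum_k \lambda_k \tilde\zeta_{1,k}\tilde\zeta_{2,k}$ and applies Bernstein's inequality, on the grounds that the rotated vectors are isotropic. The two norms then appear as $(\sum_k\lambda_k^2)^{1/2} = \|\mathbf{A}^\top\mathbf{A}\|_F$ and $\max_k\lambda_k = \|\mathbf{A}^\top\mathbf{A}\|_{\rm op}$.

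\textbf{What each route buys.} The paper's argument is more elementary and makes the two tail regimes transparent. However, as written it is fully justified only for Gaussian $\boldsymbol{\zeta}_i$, where rotation invariance makes the rotated coordinates independent. For a general sub-Gaussian vector with independent coordinates, the rotated coordinates are uncorrelated but generally dependent, so Bernstein for sums of independent sub-exponential terms does not directly apply.

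Your stacking trick keeps you in the original coordinates, where independence genuinely holds. Hanson--Wright then handles the non-symmetric, zero-diagonal block matrix without any diagonalization. So your proof covers the lemma at its stated sub-Gaussian generality. The price is quoting a heavier theorem, whose off-diagonal part is itself proved by a decoupling-plus-conditioning argument much like your alternative route.

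\textbf{Independence.} Your flag is apt: both proofs need $\boldsymbol{\zeta}_1$ and $\boldsymbol{\zeta}_2$ to be independent. If $\boldsymbol{\zeta}_1=\boldsymbol{\zeta}_2$, the mean would be $\operatorname{Tr}(\mathbf{A}^\top\mathbf{A})\neq 0$ and the lemma would fail. The phrase ``for any $i\in[2], j\in[d]$'' is best read as joint independence of all $2d$ coordinates.

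\textbf{Wording fix.} Your sentence on symmetry is self-contradictory. Symmetry is exactly what supplies the mean-zero coordinates that Hanson--Wright requires, so it is used, just only through that.
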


\begin{proof}
We have
\begin{equation}
\langle \mathbf{x}, \mathbf{y} \rangle = \langle \mathbf{A}\boldsymbol{\zeta}_1, \mathbf{A}\boldsymbol{\zeta}_2 \rangle = \boldsymbol{\zeta}_1^\top \mathbf{A}^\top \mathbf{A} \boldsymbol{\zeta}_2 = \boldsymbol{\zeta}_1^\top \mathbf{U}^\top \boldsymbol{\Lambda} \mathbf{U} \boldsymbol{\zeta}_2 = \tilde{\boldsymbol{\zeta}}_1^\top \boldsymbol{\Lambda} \tilde{\boldsymbol{\zeta}}_2 = \operatorname{Tr} \left( \boldsymbol{\Lambda} \tilde{\boldsymbol{\zeta}}_2 \tilde{\boldsymbol{\zeta}}_1^\top \right).
\end{equation}
As $\tilde{\boldsymbol{\zeta}}_1$ and $\tilde{\boldsymbol{\zeta}}_2$ are isotropic, by Bernstein's inequality, we have
\begin{equation}
\mathbb{P} \left[ \operatorname{Tr} \left( \boldsymbol{\Lambda} \tilde{\boldsymbol{\zeta}}_2 \tilde{\boldsymbol{\zeta}}_1^\top \right) \geq t \right] \leq 2 \exp \left( -\Omega \left( \min \left\{ \frac{t^2}{\|\mathbf{A}^\top \mathbf{A}\|_F^2 \sigma_p^4}, \frac{t}{\|\mathbf{A}^\top \mathbf{A}\|_{\text{op}} \sigma_p^2} \right\} \right) \right).
\end{equation}
\end{proof}

\begin{lemma}\label{lem:b4}
Suppose a zero-mean random vector $\mathbf{x} \in \mathbb{R}^d$ is generated as $\mathbf{x} = \mathbf{A}\boldsymbol{\zeta}$, where $\boldsymbol{\zeta}$'s each coordinate is independent, symmetric, and $\sigma_p$ sub-Gaussian with $\mathbb{E}[\zeta_j^2] = 1$, for any $j \in [d]$. Then, $\mathbf{x}$ satisfies
\begin{equation}
\mathbb{P} \left[ \|\mathbf{x}\|_2^2 - \operatorname{Tr}(\mathbf{A}^\top \mathbf{A}) \geq t \right] \leq 2 \exp \left( -\Omega \left( \min \left\{ \frac{t^2}{\|\mathbf{A}^\top \mathbf{A}\|_F^2 \sigma_p^4}, \frac{t}{\|\mathbf{A}^\top \mathbf{A}\|_{\text{op}} \sigma_p^2} \right\} \right) \right).
\end{equation}
\end{lemma}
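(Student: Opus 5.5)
We need to prove Lemma~\ref{lem:b4}, the concentration of $\|\mathbf{x}\|_2^2$ around $\operatorname{Tr}(\mathbf{A}^\top\mathbf{A})$. This is a Hanson--Wright inequality applied to the quadratic form $\boldsymbol{\zeta}^\top \mathbf{M}\boldsymbol{\zeta}$ with $\mathbf{M}=\mathbf{A}^\top\mathbf{A}$.

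The steps, in order:

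\textbf{Step 1 (rewrite and center).} Write
\[
\|\mathbf{x}\|_2^2=\boldsymbol{\zeta}^\top \mathbf{A}^\top\mathbf{A}\boldsymbol{\zeta}=\boldsymbol{\zeta}^\top\mathbf{M}\boldsymbol{\zeta}.
\]
Since the coordinates of $\boldsymbol{\zeta}$ are independent with zero mean and $\mathbb{E}[\zeta_j^2]=1$, we get $\mathbb{E}\,\boldsymbol{\zeta}^\top\mathbf{M}\boldsymbol{\zeta}=\sum_j M_{jj}=\operatorname{Tr}(\mathbf{M})$. The deviation therefore splits into a diagonal part $\sum_j M_{jj}(\zeta_j^2-1)$ and an off-diagonal part $\sum_{i\neq j}M_{ij}\zeta_i\zeta_j$. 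It suffices to bound each part at level $t/2$.

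\textbf{Step 2 (diagonal part).} Each $\zeta_j^2-1$ is centered and sub-exponential with $\psi_1$-norm $O(\sigma_p^2)$. Bernstein's inequality for sums of independent sub-exponential variables gives the tail
\[
2\exp\!\Big(-\Omega\big(\min\{t^2/(\sigma_p^4\sum_j M_{jj}^2),\ t/(\sigma_p^2\max_j|M_{jj}|)\}\big)\Big).
\]
Then use $\sum_j M_{jj}^2\le\|\mathbf{M}\|_F^2$ and $\max_j|M_{jj}|\le\|\mathbf{M}\|_{\mathrm{op}}$ to reach the stated form.

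\textbf{Step 3 (off-diagonal part).} This is the main obstacle, because the $\zeta_i\zeta_j$ terms are dependent. The plan is to follow the standard Hanson--Wright route:
\begin{itemize}
\item[(i)] Decouple, replacing the sum with $\sum_{i,j}M_{ij}\zeta_i\zeta'_j$ for an independent copy $\boldsymbol{\zeta}'$, at the cost of constants in the MGF.
\item[(ii)] Condition on $\boldsymbol{\zeta}'$. The sum is then sub-Gaussian with variance proxy $\sigma_p^2\|\mathbf{M}\boldsymbol{\zeta}'\|_2^2$.
\item[(iii)] Compare with Gaussian vectors $\mathbf{g},\mathbf{g}'$ and diagonalize $\mathbf{M}$ by its SVD. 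This reduces the MGF to $\prod_i(1-\lambda^2 s_i^2\sigma_p^4)^{-1/2}$, which is bounded by $\exp(C\lambda^2\sigma_p^4\|\mathbf{M}\|_F^2)$ for $|\lambda|\le c/(\sigma_p^2\|\mathbf{M}\|_{\mathrm{op}})$.
\item[(iv)] Apply Markov's inequality and optimize over $\lambda$ to obtain the mixed tail $\min\{t^2/\|\mathbf{M}\|_F^2\sigma_p^4,\ t/\|\mathbf{M}\|_{\mathrm{op}}\sigma_p^2\}$.
\end{itemize}
The same rotation idea is used for Lemma~\ref{lem:b3}, so this parallels that proof. Alternatively, one can simply cite Hanson--Wright as in \citet{vershynin2018high} (Theorem 6.2.1), whose hypotheses (independent, mean-zero, sub-Gaussian coordinates) hold here.

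\textbf{Step 4 (combine).} A union bound over the two parts yields the claimed bound. The two-sided version also holds, though only the upper tail is stated.
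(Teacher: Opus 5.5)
Your proposal is correct and follows the same route as the paper. The paper writes $\|\mathbf{x}\|_2^2=\boldsymbol{\zeta}^\top\mathbf{A}^\top\mathbf{A}\boldsymbol{\zeta}$, notes that its mean is $\operatorname{Tr}(\mathbf{A}^\top\mathbf{A})$, and then cites ``Bernstein's inequality'' for the quadratic form. Plain Bernstein only justifies your diagonal Step 2, so your decoupling and Gaussian-comparison argument in Step 3 (equivalently, citing Hanson--Wright, Theorem 6.2.1 in \citet{vershynin2018high}) is what that one-line citation actually needs, and your version is the more careful one.
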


\begin{proof}
We have
\begin{equation}
\|\mathbf{x}\|_2^2 = \langle \mathbf{A}\boldsymbol{\zeta}_1, \mathbf{A}\boldsymbol{\zeta}_1 \rangle = \boldsymbol{\zeta}_1^\top \mathbf{A}^\top \mathbf{A} \boldsymbol{\zeta}_1 = \operatorname{Tr} \left( \mathbf{A}^\top \mathbf{A} \boldsymbol{\zeta}_1 \boldsymbol{\zeta}_1^\top \right).
\end{equation}
Then, expectation of $\|\mathbf{x}\|_2$ satisfies
\begin{equation}
\mathbb{E} \left[ \|\mathbf{x}\|_2^2 \right] = \operatorname{Tr}(\mathbf{A}^\top \mathbf{A}).
\end{equation}
By Bernstein's inequality, we have
\begin{equation}
\mathbb{P} \left[ \|\mathbf{x}\|_2^2 - \operatorname{Tr}(\mathbf{A}^\top \mathbf{A}) \geq t \right] \leq 2 \exp \left( -\Omega \left( \min \left\{ \frac{t^2}{\|\mathbf{A}^\top \mathbf{A}\|_F^2 \sigma_p^4}, \frac{t}{\|\mathbf{A}^\top \mathbf{A}\|_{\text{op}} \sigma_p^2} \right\} \right) \right).
\end{equation}
\end{proof}

\begin{lemma}\label{lem:b5}
Suppose two zero-mean random vectors $\mathbf{x}_1, \mathbf{x}_2 \in \mathbb{R}^d$ are generated as $\mathbf{x}_1 = \mathbf{A}\boldsymbol{\zeta}_1, \mathbf{x}_2 = \mathbf{B}\boldsymbol{\zeta}_2$, where $\boldsymbol{\zeta}_i$'s each coordinate is independent, symmetric, and $\sigma_p$ sub-Gaussian with $\mathbb{E}[\zeta_{i,j}^2] = 1$, for any $i \in [2], j \in [d]$. Then, $\mathbf{x}_1$ and $\mathbf{x}_2$ satisfy
\begin{equation}
\mathbb{P} \left[ |\langle \mathbf{x}_1, \mathbf{x}_2 \rangle| \geq t \right] \leq 2 \exp \left( -\Omega \left( \min \left\{ \frac{t^2}{\|\mathbf{A}^\top \mathbf{B}\|_F^2}, \frac{t}{\|\mathbf{A}^\top \mathbf{B}\|_{\text{op}}} \right\} \right) \right).
\end{equation}
\end{lemma}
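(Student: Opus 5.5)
The plan is to reduce this to a Hanson--Wright-type bilinear form, exactly as in the proof of Lemma~\ref{lem:b3}, but with two different mixing matrices. We write
\[
\langle \mathbf{x}_1,\mathbf{x}_2\rangle=\boldsymbol{\zeta}_1^\top \mathbf{A}^\top\mathbf{B}\,\boldsymbol{\zeta}_2=\sum_{j,l}M_{jl}\,\zeta_{1,j}\zeta_{2,l},\qquad \mathbf{M}:=\mathbf{A}^\top\mathbf{B}.
\]
Since $\boldsymbol{\zeta}_1$ and $\boldsymbol{\zeta}_2$ are independent, this is an off-diagonal (decoupled) chaos. We stack $\mathbf{z}=(\boldsymbol{\zeta}_1,\boldsymbol{\zeta}_2)\in\mathbb{R}^{2d}$, whose coordinates are independent, mean zero and $\sigma_p$ sub-Gaussian (symmetry gives mean zero). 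Then
\[
\langle \mathbf{x}_1,\mathbf{x}_2\rangle=\mathbf{z}^\top \tilde{\mathbf{M}}\mathbf{z},\qquad \tilde{\mathbf{M}}=\begin{pmatrix}0&\mathbf{M}\\ 0&0\end{pmatrix}.
\]
This form has zero diagonal, so its expectation is zero. It also satisfies $\|\tilde{\mathbf{M}}\|_F=\|\mathbf{A}^\top\mathbf{B}\|_F$ and $\|\tilde{\mathbf{M}}\|_{op}=\|\mathbf{A}^\top\mathbf{B}\|_{op}$.

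Next we apply the Hanson--Wright inequality \citep{vershynin2018high}:
\[
\mathbb{P}\bigl[|\mathbf{z}^\top\tilde{\mathbf{M}}\mathbf{z}-\mathbb{E}\,\mathbf{z}^\top\tilde{\mathbf{M}}\mathbf{z}|\ge t\bigr]\le 2\exp\Bigl(-c\min\Bigl\{\tfrac{t^2}{\sigma_p^4\|\tilde{\mathbf{M}}\|_F^2},\tfrac{t}{\sigma_p^2\|\tilde{\mathbf{M}}\|_{op}}\Bigr\}\Bigr).
\]
Because $\sigma_p=\Theta(1)$ by Definition~\ref{def:data_model}, the $\sigma_p$ factors are absorbed into $\Omega(\cdot)$, which gives the stated bound.

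As a sanity check, we can instead avoid Hanson--Wright and argue by conditioning. Given $\boldsymbol{\zeta}_2$, the inner product is a sub-Gaussian linear form in $\boldsymbol{\zeta}_1$ with variance proxy $\sigma_p^2\|\mathbf{M}\boldsymbol{\zeta}_2\|_2^2$. By Lemma~\ref{lem:b4} applied to $\mathbf{M}\boldsymbol{\zeta}_2$, this proxy concentrates around $\|\mathbf{M}\|_F^2$, with deviations controlled by $\|\mathbf{M}^\top\mathbf{M}\|_F\le\|\mathbf{M}\|_{op}\|\mathbf{M}\|_F$ and $\|\mathbf{M}^\top\mathbf{M}\|_{op}=\|\mathbf{M}\|_{op}^2$. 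Combining the two tails produces the same mixed sub-Gaussian/sub-exponential form.

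The main obstacle is making sure that nothing depends on the matrix $\mathbf{M}$ being symmetric or positive semidefinite. The earlier diagonalization trick in Lemma~\ref{lem:b3} used the eigendecomposition of $\mathbf{A}^\top\mathbf{A}$, which is not available here. The decoupled, zero-diagonal embedding above handles this cleanly. The only point to check is that the Frobenius and operator norms of $\tilde{\mathbf{M}}$ coincide with those of $\mathbf{A}^\top\mathbf{B}$, which is immediate from its block structure.
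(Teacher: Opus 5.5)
Your proposal is correct and follows essentially the same route as the paper, which writes $\langle \mathbf{x}_1,\mathbf{x}_2\rangle=\boldsymbol{\zeta}_1^\top\mathbf{A}^\top\mathbf{B}\boldsymbol{\zeta}_2$ and applies a Bernstein (Hanson--Wright-type) bound to this bilinear form, relying implicitly on the independence of $\boldsymbol{\zeta}_1$ and $\boldsymbol{\zeta}_2$ just as you do. Your zero-diagonal block embedding makes that one-line invocation rigorous for the non-symmetric $\mathbf{A}^\top\mathbf{B}$, and absorbing $\sigma_p=\Theta(1)$ into $\Omega(\cdot)$ gives the bound exactly as stated.
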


\begin{proof}
We have
\begin{equation}
\langle \mathbf{x}, \mathbf{y} \rangle = \langle \mathbf{A}\boldsymbol{\zeta}_1, \mathbf{B}\boldsymbol{\zeta}_2 \rangle = \boldsymbol{\zeta}_1^\top \mathbf{A}^\top \mathbf{B} \boldsymbol{\zeta}_2 = \operatorname{Tr} \left( \mathbf{A}^\top \mathbf{B} \boldsymbol{\zeta}_2 \boldsymbol{\zeta}_1^\top \right).
\end{equation}
Then, by Bernstein's inequality, we have
\begin{equation}
\begin{aligned}
\mathbb{P} \left[ |\langle \mathbf{x}_1, \mathbf{x}_2 \rangle| \geq t \right] &= \mathbb{P} \left[ \operatorname{Tr} \left( \mathbf{A}^\top \mathbf{B} \boldsymbol{\zeta}_2 \boldsymbol{\zeta}_1^\top \right) \geq t \right] \\
&\leq 2 \exp \left( -\Omega \left( \min \left\{ \frac{t^2}{\|\mathbf{A}^\top \mathbf{B}\|_F^2 \sigma_p^4}, \frac{t}{\|\mathbf{A}^\top \mathbf{B}\|_{\text{op}} \sigma_p^2} \right\} \right) \right).
\end{aligned}
\end{equation}
\end{proof}
\begin{lemma}\label{lem:nmu}
Let $\mathbf{n}^{(t)} \sim \mathcal{N}(0, \sigma_n^2 \mathbf{I}_d)$ and $\mathbf{u} \in \mathbb{R}^d$ be a fixed vector. For any $\delta \in (0, 1)$, with probability at least $1 - \delta$, we have:
\begin{equation}
    |\langle \mathbf{n}^{(t)}, \mathbf{u} \rangle| \le \sigma_n \|\mathbf{u}\|_2 \sqrt{2 \log (2/\delta)}.
\end{equation}
\end{lemma}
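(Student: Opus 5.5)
Since $\mathbf{n}^{(t)} \sim \mathcal{N}(0, \sigma_n^2 \mathbf{I}_d)$ and $\mathbf{u}$ is fixed (deterministic, or at least independent of $\mathbf{n}^{(t)}$), the plan is to reduce the claim to a one-dimensional Gaussian tail bound.

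First, identify the law of $\langle \mathbf{n}^{(t)}, \mathbf{u} \rangle = \sum_{i=1}^d u_i n^{(t)}_i$. It is a linear combination of independent centered Gaussians, hence itself centered Gaussian with variance $\sum_i u_i^2 \sigma_n^2 = \sigma_n^2 \|\mathbf{u}\|_2^2$. If $\mathbf{u} = 0$ the claim is trivial. Otherwise we may write $\langle \mathbf{n}^{(t)}, \mathbf{u} \rangle = \sigma_n \|\mathbf{u}\|_2\, g$ with $g \sim \mathcal{N}(0,1)$. Rotational invariance of the isotropic Gaussian gives the same conclusion directly.

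Second, apply the standard Chernoff bound. For $s \ge 0$, $\mathbb{E} e^{\lambda g} = e^{\lambda^2/2}$ gives $\mathbb{P}[g \ge s] \le \inf_{\lambda>0} e^{\lambda^2/2 - \lambda s} = e^{-s^2/2}$. By symmetry,
\begin{equation*}
\mathbb{P}[|g| \ge s] \le 2 e^{-s^2/2}.
\end{equation*}
Setting the right-hand side equal to $\delta$ yields $s = \sqrt{2\log(2/\delta)}$. Hence with probability at least $1-\delta$ we have $|g| \le \sqrt{2\log(2/\delta)}$, and multiplying by $\sigma_n \|\mathbf{u}\|_2$ gives the stated bound.

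There is no real obstacle here. The only point requiring care arises when the lemma is later applied with $\mathbf{u}$ replaced by a data-dependent vector such as $\mathbf{w}^{(t)}$ or $\boldsymbol{\xi}$. There I would condition on everything except $\mathbf{n}^{(t)}$ and note that $\mathbf{n}^{(t)}$ is drawn fresh at step $t$, independently of the past, so the bound holds conditionally and therefore unconditionally. Combining several such events over $t \le T$ would then use a union bound with $\delta$ replaced by $\delta/T$.
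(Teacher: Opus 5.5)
Your proof is correct and follows the paper's argument: both identify $\langle \mathbf{n}^{(t)}, \mathbf{u}\rangle$ as a centered Gaussian with variance $\sigma_n^2\|\mathbf{u}\|_2^2$, apply the two-sided Gaussian tail bound $2e^{-s^2/2}$ (in standardized form), and solve for $s=\sqrt{2\log(2/\delta)}$. Your Chernoff step derives the tail bound the paper simply cites. Your remark on conditioning on the past when $\mathbf{u}$ is data-dependent, since $\mathbf{n}^{(t)}$ is fresh and independent, is a sensible point the paper leaves implicit.
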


\begin{proof}
Let $Z = \langle \mathbf{n}^{(t)}, \mathbf{u} \rangle$. Since $\mathbf{n}^{(t)}$ is a Gaussian vector, $Z$ is a zero-mean Gaussian random variable with variance $\mathbb{E}[Z^2] = \mathbf{u}^\top \mathbb{E}[\mathbf{n}^{(t)} (\mathbf{n}^{(t)})^\top] \mathbf{u} = \sigma_n^2 \|\mathbf{u}\|_2^2$. 
Specifically, $Z$ is $\sigma_n \|\mathbf{u}\|_2$-sub-Gaussian. Applying the standard sub-Gaussian tail bound :
\begin{equation}
    \mathbb{P}(|Z| \ge \epsilon) \le 2 \exp \left( - \frac{\epsilon^2}{2 \sigma_n^2 \|\mathbf{u}\|_2^2} \right).
\end{equation}
Setting $\epsilon = \sigma_n \|\mathbf{u}\|_2 \sqrt{2 \log \frac{2}{\delta}}$ completes the proof.
\end{proof}

\begin{lemma}
\label{lem:nxi}
Let $\mathbf{n}^{(t)} \sim \mathcal{N}(0, \sigma_n^2 \mathbf{I}_d)$ and $\boldsymbol{\zeta} \sim \mathcal{N}(0, \sigma_p^2 \mathbf{I}_d)$ be two independent isotropic Gaussian vectors. Let $\boldsymbol{\xi} = \mathbf{A} \boldsymbol{\zeta}$ for a fixed matrix $\mathbf{A} \in \mathbb{R}^{d \times d}$. Then for any $\delta \in (0, 1)$, there exists a universal constant $C > 0$ such that the following bound holds with probability at least $1 - \delta$:
\begin{equation}
    |\langle \mathbf{n}^{(t)}, \boldsymbol{\xi} \rangle| \le C \sigma_n \sigma_p \left( \|\mathbf{A}\|_F \sqrt{\log(2/\delta)} + \|\mathbf{A}\|_2 \log(2/\delta) \right)
\end{equation}
where $\|\mathbf{A}\|_F$ and $\|\mathbf{A}\|_2$ denote the Frobenius norm and the spectral norm of $\mathbf{A}$, respectively.
\end{lemma}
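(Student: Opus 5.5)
Condition on $\boldsymbol{\zeta}$ first. Write $\langle \mathbf{n}^{(t)}, \boldsymbol{\xi} \rangle = \langle \mathbf{n}^{(t)}, \mathbf{A}\boldsymbol{\zeta} \rangle$. For fixed $\boldsymbol{\zeta}$, independence of $\mathbf{n}^{(t)}$ and $\boldsymbol{\zeta}$ makes this a centered Gaussian with variance $\sigma_n^2 \|\mathbf{A}\boldsymbol{\zeta}\|_2^2$. Then apply Lemma~\ref{lem:nmu} with $\mathbf{u} = \mathbf{A}\boldsymbol{\zeta}$ and failure probability $\delta/2$. With conditional probability at least $1-\delta/2$ this gives
\[
|\langle \mathbf{n}^{(t)}, \boldsymbol{\xi} \rangle| \le \sigma_n \|\mathbf{A}\boldsymbol{\zeta}\|_2 \sqrt{2\log(4/\delta)} .
\]

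The remaining task is a high-probability bound on $\|\mathbf{A}\boldsymbol{\zeta}\|_2$. The map $\boldsymbol{\zeta} \mapsto \|\mathbf{A}\boldsymbol{\zeta}\|_2$ is $\|\mathbf{A}\|_2$-Lipschitz, and $\boldsymbol{\zeta} \sim \mathcal{N}(0,\sigma_p^2 \mathbf{I}_d)$. Gaussian Lipschitz concentration then gives, with probability at least $1-\delta/2$,
\[
\|\mathbf{A}\boldsymbol{\zeta}\|_2 \le \mathbb{E}\|\mathbf{A}\boldsymbol{\zeta}\|_2 + \sigma_p \|\mathbf{A}\|_2 \sqrt{2\log(2/\delta)} \le \sigma_p \|\mathbf{A}\|_F + \sigma_p \|\mathbf{A}\|_2 \sqrt{2\log(2/\delta)} ,
\]
using Jensen's inequality for $\mathbb{E}\|\mathbf{A}\boldsymbol{\zeta}\|_2 \le \sqrt{\mathbb{E}\|\mathbf{A}\boldsymbol{\zeta}\|_2^2} = \sigma_p\|\mathbf{A}\|_F$. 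Alternatively, Lemma~\ref{lem:b4} (Hanson--Wright) applied to $\|\mathbf{A}\boldsymbol{\zeta}\|_2^2$ gives $\|\mathbf{A}\boldsymbol{\zeta}\|_2^2 \lesssim \sigma_p^2(\|\mathbf{A}\|_F^2 + \|\mathbf{A}^\top\mathbf{A}\|_F\sqrt{\log(2/\delta)} + \|\mathbf{A}\|_2^2\log(2/\delta))$. Here $\|\mathbf{A}^\top\mathbf{A}\|_F \le \|\mathbf{A}\|_2\|\mathbf{A}\|_F$ and AM--GM absorb the cross term, so taking square roots yields the same form.

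Multiply the two bounds and take a union bound, giving total failure probability $\delta$:
\[
|\langle \mathbf{n}^{(t)}, \boldsymbol{\xi} \rangle| \le \sigma_n\sigma_p\sqrt{2\log(4/\delta)}\left(\|\mathbf{A}\|_F + \|\mathbf{A}\|_2\sqrt{2\log(2/\delta)}\right).
\]
Since $\log(4/\delta) \le 3\log(2/\delta)$ for $\delta<1$, this is at most $C\sigma_n\sigma_p(\|\mathbf{A}\|_F\sqrt{\log(2/\delta)} + \|\mathbf{A}\|_2\log(2/\delta))$ for a universal constant $C$, which is the claim.

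There is no real obstacle here. The only care needed is in controlling $\|\mathbf{A}\boldsymbol{\zeta}\|_2$ so that the $\|\mathbf{A}\|_2$ term carries only an extra $\sqrt{\log}$ and not a worse factor, and the Lipschitz-concentration route handles this most cleanly. A direct alternative is to view $\mathbf{n}^\top \mathbf{A}\boldsymbol{\zeta}$ as a bilinear Gaussian chaos and use decoupled Hanson--Wright with matrix $\mathbf{A}$. That gives the tail $\exp(-c\min\{t^2/(\sigma_n^2\sigma_p^2\|\mathbf{A}\|_F^2),\, t/(\sigma_n\sigma_p\|\mathbf{A}\|_2)\})$, which inverts to exactly the stated form and could serve as a cross-check.
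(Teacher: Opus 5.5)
Your proof is correct, but it takes a different route from the paper's. The paper writes $\langle \mathbf{n}^{(t)}, \boldsymbol{\xi}\rangle$ as a single quadratic form in the concatenated vector $(\mathbf{n}^{(t)}, \boldsymbol{\zeta})$, using the symmetric dilation $\mathbf{Q} = \begin{bmatrix} \mathbf{0} & \mathbf{A} \\ \mathbf{A}^\top & \mathbf{0}\end{bmatrix}$. It then whitens the vector and computes $\|\tilde{\mathbf{Q}}\|_F^2 = \tfrac12\sigma_n^2\sigma_p^2\|\mathbf{A}\|_F^2$ and $\|\tilde{\mathbf{Q}}\|_2 = \tfrac12\sigma_n\sigma_p\|\mathbf{A}\|_2$. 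Finally it inverts the Hanson--Wright tail in one step; this is essentially the cross-check you mention in your last paragraph.

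You instead condition on $\boldsymbol{\zeta}$, which decouples the two sources of randomness. You then need only a scalar Gaussian tail (Lemma~\ref{lem:nmu}) and a high-probability bound on $\|\mathbf{A}\boldsymbol{\zeta}\|_2$, and the cost is a union bound that splits $\delta$ and affects only constants. Your route is more elementary and gives explicit constants. It also shows why the $\|\mathbf{A}\|_2\log(2/\delta)$ term appears: it is the product of the $\sqrt{\log}$ factor from the conditional Gaussian tail and the $\sqrt{\log}$ deviation of $\|\mathbf{A}\boldsymbol{\zeta}\|_2$. Your Lemma~\ref{lem:b4} variant uses only independent sub-Gaussian coordinates of $\boldsymbol{\zeta}$, which matches the data model of Definition~\ref{def:data_model}, where $\boldsymbol{\zeta}$ need not be Gaussian.

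The paper's route is a one-line application of a heavier standard tool that treats both vectors symmetrically. When inverting that tail, one should note that $\mathbb{E}\langle \mathbf{n}^{(t)}, \boldsymbol{\xi}\rangle = 0$, since $\tilde{\mathbf{Q}}$ has zero trace, so Hanson--Wright applies to the uncentered form directly.
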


\begin{proof}
Consider the random variable $Z = \langle \mathbf{n}^{(t)}, \mathbf{A} \boldsymbol{\zeta} \rangle$. We can write $Z$ as a quadratic form:
\begin{equation}
    Z = \mathbf{n}^{(t)\top} \mathbf{A} \boldsymbol{\zeta} = \frac{1}{2} \mathbf{z}^\top \mathbf{Q} \mathbf{z}, \quad \text{where } \mathbf{z} = \begin{bmatrix} \mathbf{n}^{(t)} \\ \boldsymbol{\zeta} \end{bmatrix} \in \mathbb{R}^{2d}, \quad \mathbf{Q} = \begin{bmatrix} \mathbf{0} & \mathbf{A} \\ \mathbf{A}^\top & \mathbf{0} \end{bmatrix}
\end{equation}
The vector $\mathbf{z}$ is a concatenated Gaussian vector with covariance $\boldsymbol{\Sigma} = \text{diag}(\sigma_n^2 \mathbf{I}_d, \sigma_p^2 \mathbf{I}_d)$. Let $\tilde{\mathbf{z}} = \boldsymbol{\Sigma}^{-1/2} \mathbf{z} \sim \mathcal{N}(0, \mathbf{I}_{2d})$, then $Z = \tilde{\mathbf{z}}^\top \tilde{\mathbf{Q}} \tilde{\mathbf{z}}$ where:
\begin{equation}
    \tilde{\mathbf{Q}} = \frac{1}{2} \boldsymbol{\Sigma}^{1/2} \mathbf{Q} \boldsymbol{\Sigma}^{1/2} = \frac{1}{2} \begin{bmatrix} \mathbf{0} & \sigma_n \sigma_p \mathbf{A} \\ \sigma_n \sigma_p \mathbf{A}^\top & \mathbf{0} \end{bmatrix}
\end{equation}
The random variable $Z$ is a sum of dependent sub-exponential variables. By the Hanson-Wright inequality, for any $t > 0$:
\begin{equation}
    \mathbb{P}(|Z| > t) \le 2 \exp \left( - c \min \left( \frac{t^2}{\|\tilde{\mathbf{Q}}\|_F^2}, \frac{t}{\|\tilde{\mathbf{Q}}\|_2} \right) \right)
\end{equation}
We compute the norms of $\tilde{\mathbf{Q}}$:
\begin{itemize}
    \item $\|\tilde{\mathbf{Q}}\|_F^2 = \frac{1}{4} ( \sigma_n^2 \sigma_p^2 \|\mathbf{A}\|_F^2 + \sigma_n^2 \sigma_p^2 \|\mathbf{A}^\top\|_F^2 ) = \frac{1}{2} \sigma_n^2 \sigma_p^2 \|\mathbf{A}\|_F^2$.
    \item $\|\tilde{\mathbf{Q}}\|_2 = \frac{1}{2} \sigma_n \sigma_p \|\mathbf{A}\|_2$ .
\end{itemize}
Setting the tail probability to $\delta$, we have $t \lesssim \|\tilde{\mathbf{Q}}\|_F \sqrt{\log(2/\delta)}$  and $t \lesssim \|\tilde{\mathbf{Q}}\|_2 \log(2/\delta)$ . Combining these yields:
\begin{equation}
    |Z| \le C \left( \sigma_n \sigma_p \|\mathbf{A}\|_F \sqrt{\log(2/\delta)} + \sigma_n \sigma_p \|\mathbf{A}\|_2 \log(2/\delta) \right)
\end{equation}
This completes the proof.
\end{proof}
\begin{lemma}\label{lem:b6}
Let $x_1, \dots, x_m$ be $m$ independent zero-mean Gaussian variables. Denote $z_i$ as indicators for signs of $x_i$, i.e., for all $i \in [m]$,
\begin{equation}
z_i = \begin{cases} 1, & x_i > 0, \\ 0, & x_i \leq 0. \end{cases}
\end{equation}
Then, we have
\begin{equation}
\operatorname{Pr} \left[ \sum_{i=1}^m z_i \geq 0.4m \right] \geq 1 - \exp \left( -\frac{8}{25}m \right).
\end{equation}
\end{lemma}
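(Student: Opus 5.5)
We need to prove Lemma~\ref{lem:b6}: the number of positive entries among $m$ independent zero-mean Gaussians is at least $0.4m$ with probability at least $1-\exp(-\frac{8}{25}m)$.

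\textbf{Reduction to a Bernoulli sum.}
Since each $x_i$ is a zero-mean Gaussian, it is symmetric and (for nonzero variance) has no atom at $0$. Hence each $z_i$ is Bernoulli$(1/2)$. Independence of the $x_i$ gives independence of the $z_i$. If some variance is zero, $z_i=0$ almost surely; we exclude this degenerate case, as the paper implicitly does, since its Gaussians are initial weights with $\sigma_0>0$.

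So $S=\sum_{i=1}^m z_i$ is a sum of $m$ i.i.d. $[0,1]$-valued variables with $\mathbb{E}[S]=m/2$. The event $\{S<0.4m\}$ is the same as $\{S-\mathbb{E}S< -0.1m\}$.

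\textbf{Concentration step.}
Apply Hoeffding's inequality for bounded variables in $[0,1]$:
\[
\Pr[S-\mathbb{E}S\le -tm]\le \exp(-2mt^2).
\]
With $t=0.1$ this yields $\exp(-0.02m)$. That gives the statement with exponent $m/50$, not the claimed $\frac{8}{25}m$.

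\textbf{The main obstacle.}
The only real obstacle is matching the claimed constant $\frac{8}{25}$. No choice of $t=0.1$ in Hoeffding, nor the sharper Chernoff bound, reaches it. The Chernoff bound gives the KL exponent $\mathrm{KL}(0.4\,\|\,0.5)\approx 0.0201$. This is the exact large-deviation rate, so the stated constant cannot hold as written for large $m$.

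The number $8/25=2\cdot(0.4)^2$ suggests the authors plugged $t=0.4$ into Hoeffding, which would be a slip. I would therefore prove the lemma with constant $\frac{1}{50}$ via Hoeffding, or with $\mathrm{KL}(0.4\|0.5)$ via Chernoff. I would note that downstream uses only need failure probability $\exp(-\Omega(m))$, which is absorbed into $\delta$ by Condition~\ref{condition}(b) ($m\ge c_1\log(n/\delta)$). With that correction the proof is two lines.
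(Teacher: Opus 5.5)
Your argument is the paper's argument: the paper also applies Hoeffding to the $[0,1]$-valued indicators $z_i$, with deviation $\epsilon=0.1$ below the mean $m/2$. Your objection to the constant is correct, and the paper's proof shows exactly where $8/25$ comes from. It writes the bound as $\exp\bigl(-2m^2\epsilon^2/(m\cdot\tfrac{1}{16})\bigr)=\exp(-32m\epsilon^2)$. That is, it uses $(b_i-a_i)^2=1/16$ in place of $1$ for each $z_i$, and with $\epsilon=0.1$ this gives $0.32m$.

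So the slip is in the range term, not a choice $t=0.4$. The two are numerically equivalent, because shrinking the range by a factor $4$ is the same as quadrupling the deviation.

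You can make your refutation non-asymptotic instead of relying on the large-deviation rate. For $m=3$ we have $\Pr[\sum_i z_i\ge 1.2]=\Pr[\mathrm{Bin}(3,1/2)\ge 2]=1/2$, while $1-e^{-0.96}\approx 0.617$. So the lemma as stated is false already at $m=3$.

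Your corrected version, $\Pr[\sum_i z_i\ge 0.4m]\ge 1-\exp(-m/50)$, is what the paper's argument actually proves. As you say, it suffices downstream. In Lemmas~\ref{lem:b15} and~\ref{lem:b16} the constant is absorbed by the hypotheses $m\ge\Omega(\log(n/\delta))$ and $n\ge\Omega(\log(m/\delta))$. The same applies to the ``$1-\exp(-2m)$, at least $m/4$ active neurons'' claim in the training-loss analysis, where Hoeffding actually gives failure probability $\exp(-m/8)$.
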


\begin{proof}
Because $z_i, i \in [m]$ are bounded in $[0, 1]$, $z_i, i \in [m]$ are sub-Gaussian variables. By Hoeffding's inequality, we have
\begin{equation}
\operatorname{Pr} \left[ m \left( \frac{1}{m} \sum_{i=1}^m z_i \right) \leq m \left( \frac{1}{2} - \epsilon \right) \right] \leq \exp \left( -\frac{2m^2\epsilon^2}{m(1/16)} \right).
\end{equation}
Let $\epsilon = 0.1$, we have
\begin{equation}
\operatorname{Pr} \left[ \sum_{i=1}^m z_i \leq 0.4m \right] \leq \exp \left( -\frac{8}{25}m \right).
\end{equation}
Therefore, we have
\begin{equation}
\operatorname{Pr} \left[ \sum_{i=1}^m z_i \geq 0.4m \right] \geq 1 - \exp \left( -\frac{8}{25}m \right).
\end{equation}
This completes the proof.
\end{proof}

\begin{lemma}\label{lem:b7}
For any constant $t \in (0, 1]$ and $x \in [-a, b], a, b > 0$, we have
\begin{equation}
\log(1 + t(\exp(x) - 1)) \leq \Gamma(x)x,
\end{equation}
where $\Gamma(x) = \mathbb{I}(x \geq 0) + [\log(1 + t(\exp(-a) - 1)) / -a] \mathbb{I}(x < 0)$.
\end{lemma}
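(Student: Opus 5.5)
The plan is to study the one-variable function $g(x)=\log(1+t(\exp(x)-1))$ on $[-a,b]$ through two elementary facts: its derivative is bounded by $1$, and it is convex. We treat the cases $x\ge 0$ and $x<0$ separately, matching the two branches of $\Gamma$.

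First we check that $g$ is well defined. Since $t\in(0,1]$, we have $1+t(\exp(x)-1)=(1-t)+t\exp(x)>0$ for every real $x$. Next we compute
\[
g'(x)=\frac{t\exp(x)}{(1-t)+t\exp(x)}, \qquad g''(x)=\frac{t(1-t)\exp(x)}{\bigl((1-t)+t\exp(x)\bigr)^2}.
\]
Because $1-t\ge 0$, these give $0<g'(x)\le 1$ and $g''(x)\ge 0$. So $g$ is nondecreasing, $1$-Lipschitz and convex on all of $\mathbb{R}$. We also note that $g(0)=\log 1=0$.

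For $x\ge 0$ we have $\Gamma(x)=1$. Integrating $g'\le 1$ from $0$ to $x$ gives $g(x)=\int_0^x g'(s)\,ds\le x=\Gamma(x)x$.

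For $x\in[-a,0)$ we use convexity of $g$ on $[-a,0]$. Write $x=\lambda(-a)+(1-\lambda)\cdot 0$ with $\lambda=-x/a\in(0,1]$. Convexity gives
\[
g(x)\le \lambda g(-a)+(1-\lambda)g(0)=\frac{-x}{a}\,g(-a)=x\cdot\frac{\log(1+t(\exp(-a)-1))}{-a}=\Gamma(x)x .
\]
In words, a convex function lies below its chord from $(-a,g(-a))$ to $(0,0)$, and the slope of that chord is exactly the constant in the $x<0$ branch of $\Gamma$.

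There is no real obstacle here. The only points to be careful about are positivity of the argument of the logarithm (this is where $t\le 1$ is needed) and the sign bookkeeping in the chord step, since both $x$ and $-a$ are negative. The endpoint $b$ plays no role beyond restricting the domain.
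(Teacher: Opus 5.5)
Your proof is correct and matches the paper's: for $x<0$ the paper also shows $\frac{\partial^2}{\partial x^2}\log(1+t(\exp(x)-1))=\frac{t(1-t)\exp(x)}{[1+t(\exp(x)-1)]^2}\ge 0$ and then uses the chord from $(-a,g(-a))$ to $(0,0)$, which you write out more explicitly. The one difference is the case $x\ge 0$. There the paper observes that the expression is nondecreasing in $t$ and equals $x$ at $t=1$, while you bound $g'(x)\le 1$ and integrate from $0$; both are one-line arguments.
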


\begin{proof}
First, considering $x \geq 0$, we have
\begin{equation}
\frac{\partial \log(1 + t(\exp(x) - 1))}{\partial t} = \frac{\exp(x) - 1}{1 + t(\exp(x) - 1)} \geq 0.
\end{equation}
Thus, $\log(1 + t(\exp(x) - 1)) \leq x, \forall x > 0$. Second, considering $x < 0$, we have
\begin{equation}
\frac{\partial^2 \log(1 + t(\exp(x) - 1))}{\partial x^2} = \frac{(1 - t)t \exp(x)}{[1 + t(\exp(x) - 1)]^2} \geq 0.
\end{equation}
So $\log(1 + t(\exp(x) - 1))$ is a convex function of $x$. We can conclude that
\begin{equation}
\log(1 + t(\exp(x) - 1)) \leq \frac{\log(1 + t(\exp(-a) - 1))}{-a} x, \forall x < 0.
\end{equation}
This completes the proof.
\end{proof}

\begin{lemma}\label{lem:convex}
With input $\mathbf{x} \in \mathbb{R}^K$, the function $f(\mathbf{x}) = -\log(\exp(x_i) / \sum_{j \in [K]} \exp(x_j))$ with any $i \in [K]$ is convex.
\end{lemma}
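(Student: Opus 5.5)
Convexity of $f$ follows from the log-sum-exp structure. I would write
\[
f(\mathbf{x}) = -x_i + \log\Big(\sum_{j\in[K]} \exp(x_j)\Big).
\]
The first term $-x_i$ is linear, hence convex. It therefore suffices to show that $g(\mathbf{x}) = \log\sum_j \exp(x_j)$ is convex, since a sum of convex functions is convex.

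For $g$, I would compute the Hessian directly. Let $p_j = \exp(x_j)/\sum_l \exp(x_l)$, so that $\mathbf{p}$ is the softmax vector with $p_j>0$ and $\sum_j p_j = 1$. Then
\[
\nabla g = \mathbf{p}, \qquad \nabla^2 g = \operatorname{diag}(\mathbf{p}) - \mathbf{p}\mathbf{p}^\top .
\]
For any $\mathbf{v}\in\mathbb{R}^K$,
\[
\mathbf{v}^\top \nabla^2 g\, \mathbf{v} = \sum_j p_j v_j^2 - \Big(\sum_j p_j v_j\Big)^2 .
\]
This is the variance of $v_J$ when $J$ is drawn with probability $p_J$, so it is nonnegative. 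Equivalently, nonnegativity follows from Cauchy--Schwarz (or Jensen) applied with weights $p_j$. Hence $\nabla^2 g \succeq 0$ everywhere, so $g$ is convex on $\mathbb{R}^K$, and therefore $f$ is convex.

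An alternative route avoids the Hessian: verify $g(\theta\mathbf{x}+(1-\theta)\mathbf{y})\le \theta g(\mathbf{x})+(1-\theta)g(\mathbf{y})$ directly using Hölder's inequality with exponents $1/\theta$ and $1/(1-\theta)$ on $\sum_j e^{\theta x_j}e^{(1-\theta)y_j}$. None of the steps poses a real obstacle. The only point needing care is the Hessian computation, specifically the identity $\partial p_j/\partial x_l = p_j(\mathbb{I}[j=l]-p_l)$.
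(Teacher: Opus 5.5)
Your proof is correct. Your primary argument differs from the paper's, while the alternative you sketch at the end is essentially what the paper does.

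The paper never differentiates. It checks $\alpha f(\mathbf{x}_1)+(1-\alpha)f(\mathbf{x}_2)\ge f(\alpha\mathbf{x}_1+(1-\alpha)\mathbf{x}_2)$ directly, by merging both terms into a single $-\log$ of a ratio. The numerator $\exp(\alpha x_{1,i}+(1-\alpha)x_{2,i})$ is the same on both sides; this is the linearity of $-x_i$ that you split off explicitly. So everything reduces to $\bigl(\sum_j e^{x_{1,j}}\bigr)^{\alpha}\bigl(\sum_j e^{x_{2,j}}\bigr)^{1-\alpha}\ge\sum_j e^{\alpha x_{1,j}+(1-\alpha)x_{2,j}}$. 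The paper asserts this step without comment. It is exactly H\"older's inequality with exponents $1/\alpha$ and $1/(1-\alpha)$, so your remark supplies the missing justification.

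Your Hessian route replaces this with a second-order criterion. It uses that $f$ is $C^2$ on the open convex set $\mathbb{R}^K$, which is harmless here. In return it exhibits the curvature explicitly as $\operatorname{diag}(\mathbf{p})-\mathbf{p}\mathbf{p}^\top$, the covariance of the softmax distribution. This also shows at a glance that $f$ is convex but not strictly convex: the Hessian annihilates $\mathbf{1}$, matching $f(\mathbf{x}+c\mathbf{1})=f(\mathbf{x})$.

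In short, the paper's argument is derivative-free and needs only the definition of convexity plus one classical inequality. Yours is more mechanical and tells you more about the function.
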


\begin{proof}
For any $\mathbf{x}_1, \mathbf{x}_2 \in \mathbb{R}^K$ and $\alpha \in [0, 1]$, we have
\begin{equation}
\begin{aligned}
\alpha f(\mathbf{x}_1) + (1 - \alpha) f(\mathbf{x}_2) &= - \alpha \log \left( \frac{\exp(x_{1,i})}{\sum_{j \in [K]} \exp(x_{1,j})} \right) - (1 - \alpha) \log \left( \frac{\exp(x_{2,i})}{\sum_{j \in [K]} \exp(x_{2,j})} \right) \\
&= - \log \left( \left( \frac{\exp(x_{1,i})}{\sum_{j \in [K]} \exp(x_{1,j})} \right)^\alpha \left( \frac{\exp(x_{2,i})}{\sum_{j \in [K]} \exp(x_{2,j})} \right)^{1-\alpha} \right) \\
&= - \log \left( \frac{\exp(\alpha x_{1,i})}{(\sum_{j \in [K]} \exp(x_{1,j}))^\alpha} \frac{\exp((1 - \alpha)x_{2,i})}{(\sum_{j \in [K]} \exp(x_{2,j}))^{1-\alpha}} \right) \\
&= - \log \left( \frac{\exp(\alpha x_{1,i} + (1 - \alpha)x_{2,i})}{(\sum_{j \in [K]} \exp(x_{1,j}))^\alpha (\sum_{j \in [K]} \exp(x_{2,j}))^{1-\alpha}} \right) \\
&\geq - \log \left( \frac{\exp(\alpha x_{1,i} + (1 - \alpha)x_{2,i})}{\sum_{j \in [K]} \exp(\alpha x_{1,j} + (1 - \alpha)x_{2,j})} \right) \\
&= f(\alpha \mathbf{x}_1 + (1 - \alpha) \mathbf{x}_2).
\end{aligned}
\end{equation}
This finishes the proof.
\end{proof}
\begin{lemma}\label{lem:b9}
A vector $\mathbf{z}$ uniformly sampled from $\mathbb{S}^{d-1}$ satisfies
\begin{equation}
\mathbb{P} \left[ \left| \frac{1}{d} \sum_{i=1}^d \mathbb{I}(z_i) - \frac{1}{2} \right| \geq t \right] \leq \exp \left( -2 d t^2 \right).
\end{equation}
\end{lemma}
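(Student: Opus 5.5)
The plan is to reduce the statement to a sum of i.i.d. Bernoulli$(1/2)$ variables and then apply Hoeffding's inequality, exactly as in the proof of Lemma~\ref{lem:b6}. I read $\mathbb{I}(z_i)$ as $\mathbb{I}(z_i>0)$, consistent with Lemma~\ref{lem:b6}.

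\textbf{Step 1 (reduction to signs).} Represent $\mathbf{z}=\mathbf{g}/\|\mathbf{g}\|_2$ with $\mathbf{g}\sim\mathcal{N}(0,\mathbf{I}_d)$. Normalisation does not change signs, so $\mathbb{I}(z_i>0)=\mathbb{I}(g_i>0)$. The coordinates $g_i$ are independent and symmetric, so the indicators $b_i:=\mathbb{I}(z_i>0)$ are i.i.d. Bernoulli$(1/2)$; the null event $\{g_i=0\}$ is ignored. Equivalently, the uniform law on $\mathbb{S}^{d-1}$ is invariant under coordinate sign flips, so the sign pattern is uniform on $\{\pm1\}^d$ and independent of $(|z_1|,\dots,|z_d|)$. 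Hence $\frac1d\sum_{i}\mathbb{I}(z_i)$ is distributed as $\frac1d\,\mathrm{Bin}(d,1/2)$, with mean $1/2$.

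\textbf{Step 2 (Hoeffding, one tail at a time).} Each $b_i\in[0,1]$, so Hoeffding's inequality gives
\[
\mathbb{P}\Bigl[\tfrac1d\textstyle\sum_i b_i-\tfrac12\ge t\Bigr]\le \exp(-2dt^2),
\qquad
\mathbb{P}\Bigl[\tfrac1d\textstyle\sum_i b_i-\tfrac12\le -t\Bigr]\le \exp(-2dt^2).
\]
This is the paper's bound with constant $1$, stated for each one-sided deviation. It is the same convention the paper uses in Lemma~\ref{lem:aa3}, where the two tails are listed separately, each with constant $1$.

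\textbf{Step 3 (the absolute value, and the main obstacle).} The only delicate point is the absolute value in the displayed statement. A union bound over the two tails of Step 2 yields $2\exp(-2dt^2)$, not $\exp(-2dt^2)$. The factor $2$ cannot be removed in general. For $d=1$ and $t=1/2$ we have $|\mathbb{I}(z_1)-\tfrac12|=\tfrac12$ almost surely, so the left-hand side equals $1>e^{-1/2}$.

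I would therefore establish the statement in the form in which it is true and usable downstream. Namely, I would prove that each of $\mathbb{P}[\frac1d\sum_i\mathbb{I}(z_i)-\frac12\ge t]$ and $\mathbb{P}[\frac1d\sum_i\mathbb{I}(z_i)-\frac12\le -t]$ is at most $\exp(-2dt^2)$, and that the two-sided event has probability at most $2\exp(-2dt^2)$. I would also state explicitly that the displayed constant-$1$ bound on the two-sided event holds only for the individual tails, and add a remark to that effect so that later applications invoke the one-sided form.

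In the applications, which typically ask for at least roughly $0.4d$ positive coordinates as in Lemma~\ref{lem:b6}, only the lower tail is needed. For that use the constant-$1$ bound is exactly what Step 2 delivers.
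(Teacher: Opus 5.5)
Your argument is the paper's argument: the paper's whole proof is ``this follows directly from Hoeffding's inequality,'' and your reduction of the sign indicators of a uniform point on $\mathbb{S}^{d-1}$ to i.i.d.\ Bernoulli$(1/2)$ variables supplies the step it leaves implicit. Your caveat is also correct: Hoeffding gives $\exp(-2dt^2)$ for each tail but only $2\exp(-2dt^2)$ for the absolute-value event, and your $d=1$, $t=1/2$ example does refute the displayed constant; the only downstream use, the orthant count in Lemma~\ref{lem:b12}, needs just the lower tail with $t=0.4$, which yields the stated $e^{-0.32m}$.
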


\noindent This lemma directly follows from Hoeffding's inequality.

\begin{lemma}
For a constant $0 < t < 1$, a $\sigma_p$ sub-Gaussian variable $x$ with variance $1$ satisfies
\begin{equation}
\mathbb{P}[|x| > t] \geq \Omega((1 - t^2)^2).
\end{equation}
\end{lemma}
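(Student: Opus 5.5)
The claim is an anti-concentration bound: a variance-one, $\sigma_p$ sub-Gaussian variable $x$ satisfies $\mathbb{P}[|x|>t]\geq \Omega((1-t^2)^2)$ for a constant $t\in(0,1)$. I would prove it with the Paley--Zygmund inequality applied to $Z=x^2$. Here $Z\ge 0$ and $\mathbb{E}[Z]=1$. For $\theta=t^2\in(0,1)$, Paley--Zygmund gives
\begin{equation*}
\mathbb{P}[|x|>t]=\mathbb{P}[Z>\theta\,\mathbb{E}Z]\ \geq\ (1-\theta)^2\frac{(\mathbb{E}Z)^2}{\mathbb{E}[Z^2]}=\frac{(1-t^2)^2}{\mathbb{E}[x^4]}.
\end{equation*}
This already has the claimed $(1-t^2)^2$ dependence, so the remaining work is to bound the denominator.

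The second step is to show $\mathbb{E}[x^4]=O(1)$. Since $x$ is $\sigma_p$ sub-Gaussian, the tail bound of Lemma~\ref{lem:b2} gives $\mathbb{P}[|x|\ge s]\le 2\exp(-s^2/(2\sigma_p^2))$. I would then integrate the tail:
\begin{equation*}
\mathbb{E}[x^4]=\int_0^\infty 4s^3\,\mathbb{P}[|x|\ge s]\,ds\le 8\int_0^\infty s^3 e^{-s^2/(2\sigma_p^2)}\,ds=16\sigma_p^4.
\end{equation*}
The condition $\sigma_p=\Theta(1)$ from Definition~\ref{def:data_model} makes this an absolute constant. Substituting back gives $\mathbb{P}[|x|>t]\ge (1-t^2)^2/(16\sigma_p^4)=\Omega((1-t^2)^2)$.

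Two points need care. The first is to state Paley--Zygmund with a strict inequality, or to observe that the event $\{Z>\theta\}$ and the event $\{Z\ge\theta\}$ give the same bound up to constants. The second is that the implicit constant in $\Omega(\cdot)$ depends on $\sigma_p$. This dependence is harmless because $\sigma_p$ is a constant, and the variance-one normalization is used only to fix $\mathbb{E}Z=1$. I do not expect any step to be a real obstacle; the only place requiring attention is the fourth-moment bound, and that is a routine tail integration.
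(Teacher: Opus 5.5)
Your proof is correct and follows the paper's argument: Paley--Zygmund applied to $x^2$, with the denominator controlled by the sub-Gaussian fourth-moment bound $\mathbb{E}[x^4]=O(\sigma_p^4)$, which you derive explicitly by tail integration where the paper cites it. Your threshold $\theta=t^2$ matches the statement more precisely than the paper's write-up, which applies the inequality to $x^2$ at level $t$ and so literally bounds $\mathbb{P}[|x|\ge\sqrt{t}]$ from below by $(1-t)^2/\mathbb{E}[x^4]$.
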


\begin{proof}
Since $x$ is a sub-Gaussian variable with variance $1$, we have $\mathbb{E}[x] = 1$. Applying Paley–Zygmund inequality to $x^2$, we have
\begin{equation}
\mathbb{P}[x^2 \geq t] \geq (1 - t)^2 \frac{1}{\mathbb{E}[x^4]}.
\end{equation}
As the fourth moment of sub-Gaussian variable $\mathbb{E}[x^4]$ is bounded by $\mathcal{O}(\sigma_p^4)$ \citep{vershynin2018high}, we have
\begin{equation}
\mathbb{P}[x^2 \geq t] \geq \frac{C_4}{\sigma_p^4}(1 - t)^2,
\end{equation}
where $C_4$ is a constant. This completes the proof.
\end{proof}
\begin{lemma}\label{lem:b11}
Let $\mathbf{A} \in \mathbb{R}^{m \times n}$ be a matrix with rank $M$. Suppose $\delta > 0$ and $m \geq \Omega(\log(n/\delta)(\lambda_{\max}^+(\mathbf{A}))^2 / ((\lambda_{\min}^+(\mathbf{A}))^2))$. With probability $1 - \delta$, for any orthant $\mathcal{T} \in \mathbb{R}^m$ and vector $\mathbf{x} \in \mathbb{R}^n$ is a random sub-Gaussian vector with each coordinate follows $\mathcal{D}_{\xi}$, we have
\begin{equation}
\mathbb{P} [\mathbf{Ax} \in \mathcal{T}] \leq (0.6)^M.
\end{equation}
\end{lemma}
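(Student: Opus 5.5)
The plan is to reduce the orthant event to $M$ ``fresh'' scalar sign events and to bound each one by $0.6$ by conditioning. First, since $\operatorname{rank}(\mathbf{A}) = M$, I select $M$ rows $\mathbf{a}_{i_1},\dots,\mathbf{a}_{i_M}$ of $\mathbf{A}$ that are linearly independent. The event $\{\mathbf{A}\mathbf{x}\in\mathcal{T}\}$ is contained in the event that each $\langle \mathbf{a}_{i_\ell},\mathbf{x}\rangle$ has the sign prescribed by $\mathcal{T}$ for $\ell\in[M]$, so it suffices to bound the latter. Next I run Gram--Schmidt on these rows. This writes
\[
\mathbf{a}_{i_\ell} = \mathbf{p}_\ell + \mathbf{q}_\ell ,
\]
where $\mathbf{p}_\ell\in\operatorname{span}\{\mathbf{a}_{i_1},\dots,\mathbf{a}_{i_{\ell-1}}\}$ and $\mathbf{q}_\ell\perp$ that span. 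Because of the singular values of $\mathbf{A}$, the residual satisfies $\|\mathbf{q}_\ell\|_2\gtrsim\lambda_{\min}^+(\mathbf{A})$, while $\|\mathbf{p}_\ell\|_2\le\lambda_{\max}^+(\mathbf{A})$.

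The core step is a chain-rule bound
\[
\mathbb{P}[\text{all } M \text{ signs correct}] = \prod_{\ell=1}^{M}\mathbb{P}\bigl[\text{sign of }\langle\mathbf{a}_{i_\ell},\mathbf{x}\rangle\text{ correct}\,\big|\,\text{previous signs correct}\bigr].
\]
I then show each factor is at most $0.6$. Write $\langle\mathbf{a}_{i_\ell},\mathbf{x}\rangle = \langle\mathbf{p}_\ell,\mathbf{x}\rangle + \langle\mathbf{q}_\ell,\mathbf{x}\rangle$. Given the first $\ell-1$ inner products, the first term is a shift $s_\ell$. The second term is a symmetric random variable with scale $\|\mathbf{q}_\ell\|_2$, since each coordinate of $\mathbf{x}$ is symmetric with distribution $\mathcal{D}_\xi$. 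For a centered symmetric variable $g$ and any shift $s$, we have $\mathbb{P}[g+s>0]\le \tfrac12+\mathbb{P}[0<g<|s|]$. The condition $\mathbb{P}[\zeta>c']\ge 0.4$ from Condition~\ref{condition} then caps this at $0.6$ as long as $|s_\ell|\lesssim c'\|\mathbf{q}_\ell\|_2$. In the Gaussian case, $\langle\mathbf{q}_\ell,\mathbf{x}\rangle$ is exactly independent of the conditioning, and this step is clean.

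To control the shifts, I use sub-Gaussian concentration (Lemma~\ref{lem:b2}) for each $\langle\mathbf{p}_\ell,\mathbf{x}\rangle$. This gives $|s_\ell|\lesssim\lambda_{\max}^+(\mathbf{A})\sqrt{\log(n/\delta)}$ simultaneously for all $\ell$, via a union bound over at most $n$ indices, with probability $1-\delta$. The hypothesis $m\ge\Omega(\log(n/\delta)(\lambda_{\max}^+)^2/(\lambda_{\min}^+)^2)$ is exactly what is needed to make this shift small relative to the effective scale of the orthogonal component, after normalizing by $\sqrt m$ as in Lemma~\ref{lem:b1}. On that good event each conditional factor is $\le 0.6$, and multiplying the $M$ factors gives $(0.6)^M$.

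The main obstacle is the conditioning step for non-Gaussian $\mathbf{x}$. There, $\langle\mathbf{q}_\ell,\mathbf{x}\rangle$ is uncorrelated with, but not independent of, the earlier projections, so the conditional law need not remain symmetric. I would first try to avoid this by choosing the $M$ independent rows and a coordinate set $J\subset[n]$ of size $M$ such that the $M\times M$ submatrix $\mathbf{A}_{\{i_\ell\},J}$ is invertible and triangular after reordering. Each new row then involves a fresh coordinate $x_{j_\ell}$ that is independent of everything conditioned on so far. The sign bound then uses only the marginal symmetry of $x_{j_\ell}$ and $\mathbb{P}[\zeta>c']\ge0.4$, with the remaining coordinates absorbed into the shift controlled as above. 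If this triangularization cannot be made quantitative, the fallback is to prove the bound for Gaussian $\mathbf{x}$ and transfer it via a Lindeberg-type comparison. This costs only a small additive error per factor, which the slack between $1/2$ and $0.6$ absorbs.
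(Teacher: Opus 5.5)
There is a genuine gap, and it is in the step you call the core step: the claim that every conditional factor is at most $0.6$. For that you need the shift to be small compared with the fresh fluctuation, $\|\mathbf{p}_\ell\|_2\sqrt{\log(n/\delta)}\lesssim c'\|\mathbf{q}_\ell\|_2$. In other words, each selected row must be nearly orthogonal to the span of its predecessors. Neither of your two inputs gives this.

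\begin{itemize}
\item The bound $\|\mathbf{q}_\ell\|_2\gtrsim\lambda_{\min}^+(\mathbf{A})$ is false, because the singular values of $\mathbf{A}$ do not control individual rows. If $\mathbf{A}$ consists of $M$ columns of an $m\times m$ Hadamard matrix scaled by $1/\sqrt m$, then $\lambda_{\min}^+=\lambda_{\max}^+=1$, yet every row, and hence every residual, has norm at most $\sqrt{M/m}$.
\item Even granting that bound, comparing $|s_\ell|\lesssim\lambda_{\max}^+\sqrt{\log(n/\delta)}$ with $c'\lambda_{\min}^+$ would require $(\lambda_{\max}^+/\lambda_{\min}^+)\sqrt{\log(n/\delta)}\lesssim c'$. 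The width $m$ does not appear in that condition.
\item The phrase ``after normalizing by $\sqrt m$ as in Lemma~\ref{lem:b1}'' stands in for a mechanism your decomposition does not contain. Lemma~\ref{lem:b1} concerns Gaussian random matrices, and nothing in your argument produces a factor $1/\sqrt m$ in $\langle\mathbf{p}_\ell,\mathbf{x}\rangle$.
\end{itemize}

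The per-factor claim in fact fails under the lemma's own hypotheses. Take $n=2$, with half the rows equal to $(1,0)$ and half equal to $(\tfrac12,\tfrac{\sqrt3}{2})$. Then $M=2$ and $(\lambda_{\max}^+/\lambda_{\min}^+)^2=3$, so the hypothesis holds for all large $m$. Yet for Gaussian $\mathbf{x}$, the second sign is correct with conditional probability $2/3>0.6$, in either order. The orthant probability is $1/3\le 0.36$, so the lemma itself survives here, but not factor by factor. Positive correlation between rows is exactly what the chain rule has to pay for, and neither $m$ nor the condition number controls it.

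What your argument does prove, for Gaussian $\mathbf{x}$, is the bound under an extra row-geometry hypothesis: that $M$ rows can be ordered so that $\|\mathbf{p}_\ell\|_2/\|\mathbf{q}_\ell\|_2\lesssim c'/\sqrt{\log(n/\delta)}$. That is a different lemma.

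The paper gets the role of $m$ from a different device. It compares $\mathbf{A}\mathbf{x}$ with a canonical configuration, asserted rather than proved in the paper. In that configuration the top singular direction is the flat vector $\mathbf{1}/\sqrt m$, and the remainder is replaced by $\lambda_{\min}^+$ times a vector with independent coordinates. The dominant direction then contributes only a common shift $\lambda_{\max}^+\sqrt{2\sigma_p^2\log(2n/\delta)}/\sqrt m$ to every coordinate. This dilution is where the hypothesis on $m$ is spent: it makes the shift at most $c'\lambda_{\min}^+$. The $0.4$ bound is then applied coordinatewise, using the independence built into the comparison. Your Gram--Schmidt chain has no analogue of this dilution.

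Your repairs for non-Gaussian $\mathbf{x}$ also fail as stated.
\begin{itemize}
\item Dense rows admit no triangular $\mathbf{A}_{I,J}$ in the original coordinates. Even when one exists, the fresh coefficient is a single entry $|A_{i_\ell j_\ell}|$, which would have to dominate $\sqrt{\log(n/\delta)}$ times the norm of the rest of the row. That is the same shift-versus-scale problem in a worse form.
\item A Lindeberg comparison controls the joint probability only up to an additive error that is not exponentially small in $M$. So it cannot be absorbed against a target of order $(0.6)^M$, and it does not act ``per factor''.
\item $\mathbb{P}[\zeta>c']\ge 0.4$ is a one-coordinate property. It does not pass to the normalized combination $\langle\mathbf{q}_\ell,\mathbf{x}\rangle/\|\mathbf{q}_\ell\|_2$.
\end{itemize}
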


\begin{proof}
Using singular value decomposition for $\mathbf{A}$, for any orthant $\mathcal{T} \in \mathbb{R}^m$, we have
\begin{equation}
\mathbb{P} [\mathbf{Ax} \in \mathcal{T}] = \mathbb{P} [\mathbf{U \Sigma x} \in \mathcal{T}].
\end{equation}
Without loss of generality, we assume the orthant $\mathcal{T}$ is that with all positive entries. Then, we have
\begin{equation}
\mathbb{P} [\mathbf{Ax} \in \mathcal{T}] \leq \mathbb{P} [\mathbf{\tilde{U} \tilde{\Sigma} x} \in \mathcal{T}],
\end{equation}
where
\begin{equation}
\mathbf{\tilde{U}} = \left[ \mathbf{1} \cdot 1/\sqrt{m} \quad \mathbf{\tilde{U}}_2 \quad \dots \quad \mathbf{\tilde{U}}_m \right],
\end{equation}
and
\begin{equation}
\mathbf{\tilde{\Sigma}} = \begin{bmatrix}
\lambda_{\max}^+(\mathbf{A}) & 0 & \dots & 0 \\
0 & \lambda_{\min}^+(\mathbf{A}) & \dots & 0 \\
\vdots & \vdots & & \vdots \\
0 & 0 & \dots & \lambda_{\min}^+(\mathbf{A}) \\
\dots & \dots & \dots & 0
\end{bmatrix},
\end{equation}
Here, $\mathbf{\tilde{\Sigma}}$ is generated by replacing the non-zero singular values other than the largest one with $\lambda_{\min}^+(\mathbf{A})$. $\mathbf{1}$ is a vector with all one entries. In the following, we abbreviate $\lambda_{\max}^+(\mathbf{A})$ and $\lambda_{\min}^+(\mathbf{A})$ as $\lambda_{\max}^+$ and $\lambda_{\min}^+$ for convenience. Then, we have
\begin{equation}
\mathbf{\tilde{U} \tilde{\Sigma} x} = \lambda_{\max}^+ \cdot \frac{1}{\sqrt{m}} \cdot \mathbf{1} \cdot x_1 + \lambda_{\min}^+ \sum_{i=2}^M x_i \mathbf{\tilde{U}}_i.
\end{equation}
Here, as each coordinate of $\mathbf{x}$ is generate from $\mathcal{D}_{\xi}$, by Lemma~\ref{lem:b2}, we have
\begin{equation}
|x_i| \leq \sqrt{2 \sigma_p^2 \log \left( \frac{2n}{\delta} \right)},
\end{equation}
with probability $1 - \delta$. Then, we have
\begin{equation}
\begin{aligned}
\mathbb{P} [\mathbf{\tilde{U} \tilde{\Sigma} x} \in \mathcal{T}] &= \mathbb{P} \left[ (\lambda_{\max}^+ - \lambda_{\min}^+) \cdot \frac{1}{\sqrt{m}} \cdot \mathbf{1} \cdot x_1 + \lambda_{\min}^+ \sum_{i=1}^M x_i \mathbf{\tilde{U}}_i \in \mathcal{T} \right] \\
&\leq \mathbb{P} \left[ \lambda_{\max}^+ \cdot \frac{1}{\sqrt{m}} \cdot \mathbf{1} \cdot \sqrt{2 \sigma_p^2 \log \left( \frac{2n}{\delta} \right)} + \lambda_{\min}^+ \sum_{i=1}^M x_i \mathbf{\tilde{U}}_i \in \mathcal{T} \right] \\
&= \mathbb{P} \left[ \mathbf{1} \cdot \frac{\sqrt{2 \sigma_p^2 \log(2n/\delta)} \lambda_{\max}^+}{\sqrt{m}} + \lambda_{\min}^+ \mathbf{x} \in \mathcal{T} \right].
\end{aligned}
\end{equation}
As the entries of $\mathbf{x}$ are independent, we can bound each entry independently. Then, when $m \geq \Omega(\log(n/\delta)(\lambda_{\max}^+)^2 / ((\lambda_{\min}^+)^2))$, we have
\begin{equation}
\mathbb{P} \left[ \lambda_{\min}^+ x_i + \frac{\sqrt{2 \sigma_p^2 \log(2n/\delta)} \lambda_{\max}^+}{\sqrt{m}} < 0 \right] \geq 0.4,
\end{equation}
by the property of $\mathcal{D}_{\xi}$, resulting in
\begin{equation}
\mathbb{P} [\mathbf{\tilde{U} \tilde{\Sigma} x} \in \mathcal{T}] \leq (0.6)^M.
\end{equation}
This completes the proof.
\end{proof}
\begin{lemma}\label{lem:b12}
Let $\mathbf{A} \in \mathbb{R}^{m \times n}$ be a matrix with rank $M$. Suppose $m \geq \Omega(\log(n/\delta)(\lambda_{\max}^+(\mathbf{A}))^2 / ((\lambda_{\min}^+(\mathbf{A}))^2))$. With probability $1 - \delta$, we have
\begin{equation}
\mathbb{P} \left[ \sum_{i=1}^m \mathbb{I}((\mathbf{Ax})_i \leq 0) \geq 0.9m \right] \leq \exp(-0.07m).
\end{equation}
\end{lemma}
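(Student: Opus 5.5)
We would prove Lemma~\ref{lem:b12} by reducing it to Lemma~\ref{lem:b11} and then applying a binomial tail bound. We read the statement as in \citet{xu2025rethinking}: the outer ``with probability $1-\delta$'' is over the good event for the relevant bounded quantities, and the inner probability is over the random sub-Gaussian vector $\mathbf{x}$ with coordinates drawn from $\mathcal{D}_\xi$. The event $\sum_i \mathbb{I}((\mathbf{Ax})_i\le 0)\ge 0.9m$ says that at most $0.1m$ coordinates of $\mathbf{Ax}$ are positive. We plan a union bound over the sign patterns that certify this event.

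\textbf{Step 1 (single-coordinate bound).} We first show that, under the width condition $m\ge\Omega(\log(n/\delta)(\lambda_{\max}^+)^2/(\lambda_{\min}^+)^2)$, each coordinate satisfies $\mathbb{P}[(\mathbf{Ax})_i\le 0]\le 0.6$ on the good event. We reuse the reduction in the proof of Lemma~\ref{lem:b11}: replace $\mathbf{A}$ by $\tilde{\mathbf{U}}\tilde{\boldsymbol{\Sigma}}$, and bound the large-singular-value component by $\lambda_{\max}^+\sqrt{2\sigma_p^2\log(2n/\delta)}/\sqrt m$ using Lemma~\ref{lem:b2}. Each coordinate then looks like $\lambda_{\min}^+ x_i$ plus a small shift. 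Condition~\ref{condition}(a) gives $\mathbb{P}[\zeta>c']\ge 0.4$, and this makes the probability that the coordinate is nonpositive at most $0.6$, as in Lemma~\ref{lem:b11}.

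\textbf{Step 2 (independence and counting).} After the reduction, the coordinates are driven by independent entries of $\mathbf{x}$. The indicators $z_i=\mathbb{I}((\mathbf{Ax})_i\le 0)$ are therefore dominated by independent Bernoulli variables with mean at most $0.6$. Alternatively, we would invoke Lemma~\ref{lem:b11} directly: for each set $S$ of coordinates with $|S|\ge 0.9m$, the event $\{(\mathbf{Ax})_i\le 0\ \forall i\in S\}$ lies in a union of orthants, and its probability is at most $0.6^{|S|}$.

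\textbf{Step 3 (tail bound).} By Hoeffding's inequality as in Lemma~\ref{lem:b6}, we get $\mathbb{P}[\sum_i z_i\ge 0.9m]\le \exp(-2m(0.9-0.6)^2)=\exp(-0.18m)$, which is at most $\exp(-0.07m)$. Via the union-bound route instead, we get $\binom{m}{0.1m}0.6^{0.9m}\le \exp(m(H(0.1)+0.9\log 0.6))$. Here $H(0.1)\approx0.325$ and $0.9\log0.6\approx-0.46$, so the bound is $\exp(-0.135m)\le\exp(-0.07m)$.

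\textbf{Main obstacle.} The hard step is justifying independence, or at least the product bound, across coordinates of $\mathbf{Ax}$ when $\mathbf{A}$ mixes the entries of $\mathbf{x}$. We would lean on the orthant bound of Lemma~\ref{lem:b11}, applied to sub-orthants defined by restricting to the coordinates in $S$ with rank $M\gtrsim|S|$. If the rank is smaller than $m$, we would use $\min\{m,\operatorname{rank}\}-0.9m\ge Cn$ from Condition~\ref{condition}(b) to retain enough effective coordinates for the exponent.
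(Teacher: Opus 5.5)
Your Step 2 is where the argument breaks, and it is the step that carries the content of the lemma. The coordinates of $\mathbf{Ax}$ are linear combinations of the same entries of $\mathbf{x}$. The reduction in Lemma~\ref{lem:b11} does not make them independent: that lemma bounds a \emph{whole} orthant by $0.6^{M}$ with $M=\operatorname{rank}(\mathbf{A})$, not by $0.6^{m}$. In fact the statement is false without a rank hypothesis. Take $\mathbf{A}=\mathbf{1}_m\mathbf{b}^\top$. Then $M=1$, $\lambda^+_{\max}=\lambda^+_{\min}$, and the width condition reduces to $m\gtrsim\log(n/\delta)$. All coordinates of $\mathbf{Ax}=(\mathbf{b}^\top\mathbf{x})\mathbf{1}_m$ share a sign, so the event has probability at least $1/2$. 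Your Hoeffding route never uses the rank, so it cannot be correct.

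Your union-bound alternative has the same defect in another form. The event $\{(\mathbf{Ax})_i\le 0\ \forall i\in S\}$ is a union of $2^{m-|S|}$ full orthants, so Lemma~\ref{lem:b11} only gives $2^{m-|S|}\,0.6^{M}$ for it, not $0.6^{|S|}$. To get $0.6^{|S|}$ you would have to apply Lemma~\ref{lem:b11} to the row submatrix $\mathbf{A}_S$. Its rank can drop to $M-(m-|S|)$, and when $M<m$ its smallest nonzero singular value is not controlled by $\lambda^+_{\min}(\mathbf{A})$. So the width hypothesis does not transfer to $\mathbf{A}_S$. This would be harmless if $M=m$, but Lemma~\ref{lem:d4} only supplies $M\ge 0.9m$.

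The missing idea is to union-bound over full orthants of $\mathbb{R}^m$ rather than sub-orthants, and to make the rank assumption explicit. This is what the paper does. By Lemma~\ref{lem:b9} there are at most $2^m e^{-0.32m}$ sign patterns with at least $0.9m$ nonpositive coordinates. Each has probability at most $0.6^{M}$ by Lemma~\ref{lem:b11}, so the total is at most $\exp(-0.51M+0.38m)$. The proof then invokes $M\ge 0.9m$ to reach $\exp(-0.07m)$. This assumption is absent from the statement; it is supplied in Lemma~\ref{lem:d4} via Condition~\ref{condition}(b). Your entropy count works equally well here, giving $e^{mH(0.1)}\,0.6^{0.9m}\approx e^{-0.135m}$, so your final arithmetic survives. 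But the exponent $0.9m$ must come from $M\ge 0.9m$, not from $|S|$. Your ``main obstacle'' paragraph points toward the rank, but it neither states this hypothesis nor uses it in a way that closes the gap.
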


\begin{proof}
By Lemma~\ref{lem:b9}, the number $N_o$ of orthants that have more than $0.9m$ negative entries satisfies
\begin{equation}
N_o \leq 2^m \cdot \exp(-0.32m) = \exp((\log 2 - 0.32)m).
\end{equation}
Then, by Lemma~\ref{lem:b11}, we have
\begin{equation}
\begin{aligned}
\mathbb{P} \left[ \sum_{i=1}^m \mathbb{I}((\mathbf{Ax})_i \leq 0) \geq 0.9m \right] &\leq 0.6^M \cdot \exp((\log 2 - 0.32)m) \\
&= \exp(\log(0.6)M + (\log 2 - 0.32)m) \\
&\leq \exp(-0.51M + 0.38m).
\end{aligned}
\end{equation}
As long as $M \geq 0.9m$, we have
\begin{equation}
\mathbb{P} \left[ \sum_{i=1}^m \mathbb{I}((\mathbf{Ax})_i \leq 0) \geq 0.9m \right] \leq \exp(-0.07m).
\end{equation}
This completes the proof.
\end{proof}

\begin{lemma}\label{lem:b13}
Suppose that $\delta > 0$ and $\operatorname{Tr}(\mathbf{A}_i^\top \mathbf{A}_i) = \Omega \left( \max \left\{ \left( \|\mathbf{A}_i^\top \mathbf{A}_j\|_F^2 \sigma_p^4 \log(6n/\delta) \right)^{1/2}, \|\mathbf{A}_i^\top \mathbf{A}_j\|_{\text{op}} \sigma_p^2 \log(6n/\delta) \right\} \right)$. For all $i, j \in [K]$, with probability $1 - \delta$, we have
\begin{equation}
\frac{1}{2} \operatorname{Tr}(\mathbf{A}_{y_i}^\top \mathbf{A}_{y_i}) \leq \|\boldsymbol{\xi}_i\|_2^2 \leq \frac{3}{2} \operatorname{Tr}(\mathbf{A}_{y_i}^\top \mathbf{A}_{y_i}),
\end{equation}
\begin{equation}
|\langle \boldsymbol{\xi}_i, \boldsymbol{\xi}_j \rangle| \leq \mathcal{O} \left( \max \left\{ \left( \|\mathbf{A}_{y_i}^\top \mathbf{A}_{y_j}\|_F^2 \log\left(\frac{6n^2}{\delta}\right) \right)^{1/2}, \|\mathbf{A}_{y_i}^\top \mathbf{A}_{y_j}\|_{\text{op}} \log\left(\frac{6n^2}{\delta}\right) \right\} \right).
\end{equation}
\end{lemma}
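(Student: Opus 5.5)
This follows from Lemma~\ref{lem:b4} (norm concentration) and Lemma~\ref{lem:b5} (inner-product concentration), combined by a union bound over all $n$ samples and all $n^2$ pairs. Throughout I condition on the labels, so each $\boldsymbol{\xi}_i=\mathbf{A}_{y_i}\boldsymbol{\zeta}_i$ has independent, symmetric, unit-variance $\sigma_p$ sub-Gaussian coordinates, independent across $i$. The failure budget is split into $\delta/3$ for the upper norm tail, $\delta/3$ for the lower norm tail, and $\delta/3$ for the cross terms. This split is what produces the $6n/\delta$ and $6n^2/\delta$ in the logarithms.

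\textbf{Norm bounds.} Fix $i$ and apply Lemma~\ref{lem:b4} with $\mathbf{A}=\mathbf{A}_{y_i}$. I will use it two-sided: Bernstein/Hanson--Wright controls the quadratic form $\boldsymbol{\zeta}^\top\mathbf{A}^\top\mathbf{A}\boldsymbol{\zeta}$ in both directions, and the lemma as stated only records the upper tail. Take $t=\frac12\operatorname{Tr}(\mathbf{A}_{y_i}^\top\mathbf{A}_{y_i})$. The failure probability is then
\[
2\exp\Bigl(-\Omega\bigl(\min\{t^2/(\|\mathbf{A}_{y_i}^\top\mathbf{A}_{y_i}\|_F^2\sigma_p^4),\ t/(\|\mathbf{A}_{y_i}^\top\mathbf{A}_{y_i}\|_{op}\sigma_p^2)\}\bigr)\Bigr).
\]
The hypothesis with $j=i$ makes $t$ exceed a large constant times both $\|\mathbf{A}_{y_i}^\top\mathbf{A}_{y_i}\|_F\sigma_p^2\sqrt{\log(6n/\delta)}$ and $\|\mathbf{A}_{y_i}^\top\mathbf{A}_{y_i}\|_{op}\sigma_p^2\log(6n/\delta)$. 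Hence each exponent is at least $\log(6n/\delta)$, and each one-sided tail has probability at most $\delta/(3n)$. A union bound over $i\in[n]$ and both sides gives
\[
\tfrac12\operatorname{Tr}(\mathbf{A}_{y_i}^\top\mathbf{A}_{y_i})\le\|\boldsymbol{\xi}_i\|_2^2\le\tfrac32\operatorname{Tr}(\mathbf{A}_{y_i}^\top\mathbf{A}_{y_i})
\]
for all $i$, with total failure probability at most $2\delta/3$.

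\textbf{Cross terms.} For $i\ne j$, $\boldsymbol{\zeta}_i$ and $\boldsymbol{\zeta}_j$ are independent, so Lemma~\ref{lem:b5} applies with $\mathbf{A}=\mathbf{A}_{y_i}$ and $\mathbf{B}=\mathbf{A}_{y_j}$. Choose
\[
t=C\max\bigl\{\|\mathbf{A}_{y_i}^\top\mathbf{A}_{y_j}\|_F\sqrt{\log(6n^2/\delta)},\ \|\mathbf{A}_{y_i}^\top\mathbf{A}_{y_j}\|_{op}\log(6n^2/\delta)\bigr\}
\]
with $C$ large enough that both $t^2/\|\cdot\|_F^2$ and $t/\|\cdot\|_{op}$ exceed the inverse of the hidden $\Omega$-constant times $\log(6n^2/\delta)$. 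Each pair then fails with probability at most $\delta/(3n^2)$, and a union bound over the at most $n^2$ pairs gives total failure probability at most $\delta/3$. The $\sigma_p=\Theta(1)$ factors are absorbed into the $\mathcal{O}(\cdot)$. The diagonal case $i=j$ is covered by the norm bound and is not part of the cross-term statement.

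\textbf{Main obstacle.} There is no real difficulty; the only care needed is in the norm step. Lemma~\ref{lem:b4} is stated for the upper tail only, so I must invoke the symmetric Hanson--Wright lower tail to get the $\frac12$ bound. I must also check that the stated trace hypothesis dominates the sub-exponential branch, which it does because the hypothesis explicitly includes the operator-norm-times-$\log$ term.
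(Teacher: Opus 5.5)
Your proposal is correct and follows the paper's own proof. Both use Lemma~\ref{lem:b4} for the norms, with the trace hypothesis at $j=i$ turning the deviation into the $\frac12,\frac32$ factors, and Lemma~\ref{lem:b5} for the cross terms, combined through a $\delta/3$ budget split and union bounds over the $n$ samples and $n^2$ pairs. You are slightly more careful than the paper, which uses the lower tail without noting that Lemma~\ref{lem:b4} states only the upper tail.
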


\begin{proof}
By Lemma~\ref{lem:b4}, with probability at least $1 - \delta/3n$, there exists a constant $C_1 > 0$ such that
\[
\|\boldsymbol{\xi}_i\|_2^2 \leq \operatorname{Tr}(\mathbf{A}_{y_i}^\top \mathbf{A}_{y_i}) + \max \left\{ \left( \|\mathbf{A}_{y_i}^\top \mathbf{A}_{y_i}\|_F^2 \sigma_p^4 \frac{1}{C_1} \log\left(\frac{6n}{\delta}\right) \right)^{1/2}, \|\mathbf{A}_{y_i}^\top \mathbf{A}_{y_i}\|_{\text{op}} \sigma_p^2 \frac{1}{C_1} \log\left(\frac{6n}{\delta}\right) \right\},
\]
and
\[
\|\boldsymbol{\xi}_{y_i}\|_2^2 \geq \operatorname{Tr}(\mathbf{A}_{y_i}^\top \mathbf{A}_{y_i}) - \max \left\{ \left( \|\mathbf{A}_{y_i}^\top \mathbf{A}_{y_i}\|_F^2 \sigma_p^4 \frac{1}{C_1} \log\left(\frac{6n}{\delta}\right) \right)^{1/2}, \|\mathbf{A}_{y_i}^\top \mathbf{A}_{y_i}\|_{\text{op}} \sigma_p^2 \frac{1}{C_1} \log\left(\frac{6n}{\delta}\right) \right\}.
\]
In addition, by Lemmas~\ref{lem:b3} and ~\ref{lem:b5}, with probability at least $1 - \delta/3n^2$, there exists a constant $C_2 > 0$ such that
\[
|\langle \boldsymbol{\xi}_{y_i}, \boldsymbol{\xi}_{y_j} \rangle| \leq \max \left\{ \left( \|\mathbf{A}_{y_i}^\top \mathbf{A}_{y_j}\|_F^2 \sigma_p^4 \frac{1}{C_2} \log\left(\frac{6n^2}{\delta}\right) \right)^{1/2}, \|\mathbf{A}_{y_i}^\top \mathbf{A}_{y_j}\|_{\text{op}} \sigma_p^2 \frac{1}{C_2} \log\left(\frac{6n^2}{\delta}\right) \right\}.
\]
Applying $\sigma_p = \Theta(1)$ finishes the proof.
\end{proof}
\begin{lemma}\label{lem:17}
Suppose that $d = \Omega(\log(mn/\delta))$ and $m = \Omega(\log(1/\delta))$. With probability at least $1-\delta$, for all $r \in [m], j \in [2], i \in [n]$,
\begin{equation}\label{eq:54}
\begin{aligned}
|\langle \mathbf{w}_{j,r}^{(0)}, \mathbf{u}_k \rangle| &\leq \sqrt{2 \log \left( \frac{4Km}{\delta} \right)} \|\mathbf{u}_k\|_2 \sigma_0,   
\end{aligned}
\end{equation}
\begin{equation}\label{eq:55}
\begin{aligned}
|\langle \mathbf{w}_{j,r}^{(0)}, \boldsymbol{\xi}_i \rangle| &\leq \mathcal{O} \left( \log \left( \frac{Km}{\delta} \right) \|\mathbf{A}_{y_i}\|_F \sigma_0 \right). 
\end{aligned}
\end{equation}
\end{lemma}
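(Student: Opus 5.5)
The plan is to condition on the data and bound each initial inner product as a one-dimensional Gaussian, then union bound over neurons, classes and samples. Since $\mathbf{w}_{j,r}^{(0)} \sim \mathcal{N}(0,\sigma_0^2\mathbf{I})$ is drawn independently of the training set $\mathcal{S}$, for a fixed vector $\mathbf{v}$ the scalar $\langle \mathbf{w}_{j,r}^{(0)}, \mathbf{v}\rangle$ is distributed as $\mathcal{N}(0,\sigma_0^2\|\mathbf{v}\|_2^2)$. This is the same computation as in Lemma~\ref{lem:nmu}, with $\mathbf{n}^{(t)}$ replaced by $\mathbf{w}_{j,r}^{(0)}$ and $\sigma_n$ by $\sigma_0$. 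Hence $|\langle \mathbf{w}_{j,r}^{(0)},\mathbf{v}\rangle|\le \sigma_0\|\mathbf{v}\|_2\sqrt{2\log(2/\delta')}$ with probability $1-\delta'$.

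For \eqref{eq:54}, the $\mathbf{u}_k$ are deterministic, so we apply the Gaussian bound with $\mathbf{v}=\mathbf{u}_k$ and $\delta'=\delta/(2Km\cdot K)$. A union bound over the $K$ classes of filters, the $m$ neurons and the $K$ signals gives the claim. The only subtlety is bookkeeping of the constant inside the logarithm. The stated $\log(4Km/\delta)$ corresponds to allocating half the failure budget to this event and treating the number of filter classes as absorbed, or equivalently to re-indexing. Since $K=\Theta(1)$, any mismatch only changes absolute constants.

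For \eqref{eq:55} we need two steps, since $\boldsymbol{\xi}_i$ is random.

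\textbf{Step 1: the noise norms.} By Lemma~\ref{lem:b4}, or directly Lemma~\ref{lem:b13} (whose hypothesis follows from Condition~\ref{condition}(a)), with probability $1-\delta/4$ we have $\|\boldsymbol{\xi}_i\|_2^2\le \frac32\operatorname{Tr}(\mathbf{A}_{y_i}^\top\mathbf{A}_{y_i})=\frac32\|\mathbf{A}_{y_i}\|_F^2$ for all $i\in[n]$.

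\textbf{Step 2: the inner products.} Conditioning on the $\boldsymbol{\xi}_i$ and using independence from the initialization, the Gaussian bound with $\delta'=\delta/(4Kmn)$ and a union bound over $(j,r,i)$ give
\[
|\langle \mathbf{w}_{j,r}^{(0)},\boldsymbol{\xi}_i\rangle|\le \sigma_0\sqrt{\tfrac32}\,\|\mathbf{A}_{y_i}\|_F\sqrt{2\log(8Kmn/\delta)}.
\]

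The remaining issue, and the only real obstacle, is that the statement has $\log(Km/\delta)$ with no $n$, and to the first power rather than the square root. I would handle this by noting that the first power dominates: $\sqrt{\log(8Kmn/\delta)}\le \log(8Kmn/\delta)$ once the argument exceeds $e$. I would then absorb $\log n$ using the over-parameterization assumptions in Condition~\ref{condition}(b), which tie $\log n$ to $m$ (for instance $m\ge c_1\log(n/\delta)\cdot(\cdot)$ gives $\log(n/\delta)\lesssim m$), or else accept $\log(Kmn/\delta)$ inside the $\mathcal{O}(\cdot)$.

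An alternative that avoids conditioning is to write $\langle\mathbf{w},\mathbf{A}\boldsymbol{\zeta}\rangle$ as a bilinear form in the two independent vectors $\mathbf{w}$ and $\boldsymbol{\zeta}$. Hanson--Wright, exactly as in Lemma~\ref{lem:nxi}, then yields $\sigma_0(\|\mathbf{A}\|_F\sqrt{\log} + \|\mathbf{A}\|_{op}\log)\le \mathcal{O}(\sigma_0\|\mathbf{A}\|_F\log(Km/\delta))$, which matches the stated first-power logarithm. I would present this route as the main argument.

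Finally, I would combine the events by a union bound to get total probability at least $1-\delta$. The assumptions $d=\Omega(\log(mn/\delta))$ and $m=\Omega(\log(1/\delta))$ are needed only to ensure these logarithmic factors are meaningful (arguments larger than a constant) and are not otherwise used.
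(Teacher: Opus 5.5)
Your main argument is essentially the paper's proof. It uses a one-dimensional Gaussian tail bound plus a union bound for \eqref{eq:54}. For \eqref{eq:55} it uses the bilinear Bernstein/Hanson--Wright bound of Lemma~\ref{lem:b5}, with $\mathbf{w}_{j,r}^{(0)}=\sigma_0\mathbf{g}$ so the relevant matrix is $\sigma_0\mathbf{A}_{y_i}$, followed by $\max\{\sqrt{\log(Km/\delta)}\,\|\mathbf{A}_{y_i}\|_F,\ \log(Km/\delta)\,\|\mathbf{A}_{y_i}\|_{\mathrm{op}}\}\sigma_0\le \log(Km/\delta)\,\|\mathbf{A}_{y_i}\|_F\sigma_0$.

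The one step to fix is how you absorb the $\log n$ produced by the union over $i\in[n]$, which the paper drops silently. Your justification $m\gtrsim\log(n/\delta)$ only gives $\log n\lesssim m$, which does not bound $\log n$ by $\mathcal{O}(\log(Km/\delta))$. The correct justification is the Condition~\ref{condition}(b) requirement $\min\{m,d,\operatorname{rank}(\mathbf{A}_j)\}-0.9m\ge Cn$. It forces $m\ge 10Cn$, hence $\log(Kmn/\delta)\le 2\log(Km/\delta)+\mathcal{O}(1)$, and the stated form follows.
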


\begin{proof}
We prove the first bound (~\ref{eq:54}) with Hoeffding's inequality. For all $j \in [K], r \in [m], i \in [n]$, with probability $1 - \delta/(2Km)$,
\begin{equation}
|\langle \mathbf{w}_{j,r}^{(0)}, \mathbf{u}_k \rangle| \leq \sqrt{2 \log \left( \frac{4Km}{\delta} \right)} \|\mathbf{u}_k\|_2 \sigma_0. 
\end{equation}
Then, we prove the second bound (~\ref{eq:55}) leveraging Lemma B.5. For all $j \in [K], r \in [m], i \in [n]$, with probability $1 - \delta/(2Km)$,
\begin{equation}
\begin{aligned}
|\langle \mathbf{w}_{j,r}^{(0)}, \boldsymbol{\xi}_i \rangle| &\leq \mathcal{O} \left( \max \left\{ \sqrt{\log \left( \frac{Km}{\delta} \right)} \|\mathbf{A}_{y_i}\|_F, \log \left( \frac{Km}{\delta} \right) \|\mathbf{A}_{y_i}\|_{\text{op}} \right\} \sigma_0 \right) \\
&\leq \mathcal{O} \left( \log \left( \frac{Km}{\delta} \right) \|\mathbf{A}_{y_i}\|_F \sigma_0 \right).
\end{aligned}
\end{equation}
\end{proof}

\begin{lemma}\label{lem:b15}
Suppose $\delta > 0$ and $m \geq \Omega(\log(n/\delta))$. With probability at least $1 - \delta$, we have
\begin{equation}
|\mathcal{S}_i^{(0)}| \geq 0.4m.
\end{equation}
\end{lemma}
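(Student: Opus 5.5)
We read $\mathcal{S}_i^{(0)}$ as the set of neurons of the correct class that are activated at initialization by the noise patch of the $i$-th training point, namely $\mathcal{S}_i^{(0)}=\{r\in[m]:\langle \mathbf{w}_{y_i,r}^{(0)},\boldsymbol{\xi}_i\rangle>0\}$. The plan is to condition on the data, which makes the $m$ inner products independent centered Gaussians. Lemma~\ref{lem:b6} then handles one sample, and a union bound over $i\in[n]$ finishes the argument; this union bound is where the assumption $m\ge\Omega(\log(n/\delta))$ is spent. If $\mathcal{S}_i^{(0)}$ is instead defined through the signal vectors $\mathbf{u}_k$, the same argument applies verbatim with $\boldsymbol{\xi}_i$ replaced by $\mathbf{u}_k$, and the union bound runs over $k\in[K]$.

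\textbf{Step 1 (conditioning).} The initialization $\mathbf{w}_{j,r}^{(0)}\sim\mathcal{N}(0,\sigma_0^2\mathbf{I})$ is drawn independently of $\mathcal{S}$. Hence, conditionally on $\boldsymbol{\xi}_i$, the variables $x_r:=\langle \mathbf{w}_{y_i,r}^{(0)},\boldsymbol{\xi}_i\rangle$, $r\in[m]$, are i.i.d. $\mathcal{N}(0,\sigma_0^2\|\boldsymbol{\xi}_i\|_2^2)$. We need $\|\boldsymbol{\xi}_i\|_2>0$ so that each $x_r$ is a genuine zero-mean Gaussian with $\mathbb{P}[x_r>0]=1/2$. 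This holds on the high-probability event of Lemma~\ref{lem:b13}, which gives $\|\boldsymbol{\xi}_i\|_2^2\ge\frac12\operatorname{Tr}(\mathbf{A}_{y_i}^\top\mathbf{A}_{y_i})>0$. (Even without that lemma, a zero noise vector has probability zero for a nondegenerate $\mathbf{A}_{y_i}$.)

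\textbf{Step 2 (single sample).} Applying Lemma~\ref{lem:b6} to $x_1,\dots,x_m$ with $z_r=\mathbb{I}(x_r>0)$ gives
\[
\mathbb{P}\bigl[|\mathcal{S}_i^{(0)}|\ge 0.4m \,\big|\, \boldsymbol{\xi}_i\bigr]\ge 1-\exp(-8m/25).
\]
Integrating over $\boldsymbol{\xi}_i$ yields the same unconditional bound.

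\textbf{Step 3 (union bound).} Over all $i\in[n]$, the failure probability is at most $n\exp(-8m/25)$. This is at most $\delta$ as soon as $m\ge\frac{25}{8}\log(n/\delta)$, which is exactly the hypothesis $m\ge\Omega(\log(n/\delta))$; it is also implied by Condition~\ref{condition}(b).

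The only step needing care is Step 1: we must make sure the events are taken over the initialization randomness with the data held fixed, and that the nondegeneracy of $\boldsymbol{\xi}_i$ holds. If the result is to be combined with Lemma~\ref{lem:b13}, we split $\delta$ between the two events. Everything else is a direct application of Lemma~\ref{lem:b6}.
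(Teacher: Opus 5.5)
Your proof is correct and is the paper's argument: the paper states only that the lemma follows from Lemma~\ref{lem:b6} and a union bound over the $n$ samples, and your conditioning on $\boldsymbol{\xi}_i$ plus the nondegeneracy check via Lemma~\ref{lem:b13} make that explicit. One caveat is that the constant $\frac{25}{8}$ you inherit from Lemma~\ref{lem:b6} is not valid, because Hoeffding actually gives failure probability $\exp(-m/50)$ (for $m=10$ the true failure probability $176/1024$ already exceeds $e^{-3.2}$); this only changes the absolute constant hidden in $m \ge \Omega(\log(n/\delta))$, so your conclusion is unaffected.
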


\noindent Lemma~\ref{lem:b15} follows from Lemma~\ref{lem:b6} and union bound.

\begin{lemma}\label{lem:b16}
Suppose $\delta > 0$ and $n \geq \Omega(\log(m/\delta))$. For any neuron $\mathbf{w}_{j,r}^{(0)}, j \in [2], r \in [m]$, with probability at least $1 - \delta$, we have
\begin{equation}
\sum_{i=1}^n \mathbb{I} (\langle \mathbf{w}_{j,r}^{(0)}, \boldsymbol{\xi}_i \rangle) \geq 0.4n. 
\end{equation}
\end{lemma}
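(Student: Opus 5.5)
We prove Lemma~\ref{lem:b16} by conditioning on the initialization and then applying a concentration bound over the $n$ independent samples. The indicator $\mathbb{I}(\langle \mathbf{w}_{j,r}^{(0)}, \boldsymbol{\xi}_i \rangle)$ is read as $\mathbb{I}(\langle \mathbf{w}_{j,r}^{(0)}, \boldsymbol{\xi}_i \rangle > 0)$, as in Lemma~\ref{lem:b6}.

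Fix a neuron $(j,r)$ and condition on $\mathbf{w}_{j,r}^{(0)}$. For each $i$ we have $\langle \mathbf{w}_{j,r}^{(0)}, \boldsymbol{\xi}_i \rangle = \langle \mathbf{A}_{y_i}^\top \mathbf{w}_{j,r}^{(0)}, \boldsymbol{\zeta}_i \rangle$. Since $\boldsymbol{\zeta}_i$ has i.i.d. symmetric coordinates, this is a symmetric random variable given $y_i$ and $\mathbf{w}_{j,r}^{(0)}$.

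The first step is to rule out the degenerate case $\mathbf{A}_{y_i}^\top \mathbf{w}_{j,r}^{(0)} = 0$. This holds almost surely, because $\mathbf{w}_{j,r}^{(0)}$ is Gaussian and $\operatorname{rank}(\mathbf{A}_k) \geq 1$ by Condition~\ref{condition}(b).

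The second step handles a possible atom at zero. For a symmetric variable $Z$, $\mathbb{P}[Z>0] = (1-\mathbb{P}[Z=0])/2$. A discrete $\mathcal{D}_\zeta$ could give $Z$ an atom at zero, and then this probability falls below $1/2$. To control it, use the assumption $\mathbb{P}[\zeta > c'] \geq 0.4$ from Condition~\ref{condition}(a). Writing $\mathbf{v} = \mathbf{A}_{y_i}^\top \mathbf{w}_{j,r}^{(0)}$, pick a coordinate $\ell$ with $v_\ell \neq 0$ and condition on the other coordinates, which fixes a partial sum $s$. By symmetry, $\zeta_\ell$ pushes $v_\ell \zeta_\ell$ above $c'|v_\ell|$ or below $-c'|v_\ell|$ each with probability at least $0.4$. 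Comparing $s + v_\ell\zeta_\ell$ with $-s - v_\ell\zeta_\ell$ then shows $\mathbb{P}[Z>0] \geq 0.4$, and symmetrizing gives $\mathbb{P}[Z > 0] \geq 1/2 - \mathbb{P}[Z=0]/2$ with $\mathbb{P}[Z=0] \leq 0.2$. So $p_i := \mathbb{P}[\langle \mathbf{w}_{j,r}^{(0)}, \boldsymbol{\xi}_i\rangle > 0] \geq 0.4$ for every $i$, and indeed $p_i = 1/2$ in the continuous case.

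The third step is concentration. Conditional on $\mathbf{w}_{j,r}^{(0)}$, the indicators are independent across $i$, since the pairs $(y_i,\boldsymbol{\zeta}_i)$ are i.i.d., and they take values in $[0,1]$. Hoeffding's inequality, as in Lemma~\ref{lem:b6}, gives
\[
\mathbb{P}\Big[\textstyle\sum_i \mathbb{I}(\cdot) < \sum_i p_i - \epsilon n\Big] \leq \exp(-2n\epsilon^2).
\]
If the mean is $n/2$, taking $\epsilon = 0.1$ bounds the failure probability by $\exp(-0.02 n)$. Otherwise, a slightly lower constant fraction is obtained in the same way, and we use the bound with $p_i$.

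The last step is a union bound over the $Km$ neurons. With $n \geq \Omega(\log(m/\delta))$ (and $K = \Theta(1)$), the total failure probability is at most $\delta$.

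The main obstacle is the second step: ensuring the threshold is exactly $0.4n$ when $\mathcal{D}_\zeta$ may have atoms. In the continuous case $p_i = 1/2$ and the argument is immediate. If the constant $0.4$ cannot be reached under atoms, I would fall back to invoking $\mathbb{P}[\zeta>c']\geq 0.4$ together with a slightly larger constant in the sample-size condition.
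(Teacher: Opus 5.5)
Your overall frame is the right one: condition on $\mathbf{w}_{j,r}^{(0)}$, so that the indicators are conditionally i.i.d.\ Bernoulli (the $\boldsymbol{\xi}_i$ are i.i.d.\ and independent of the initialization), then apply Hoeffding and a union bound over the $Km$ neurons. It is more careful than the paper's one-line appeal to Lemma~\ref{lem:b6}, since the $\langle\mathbf{w}_{j,r}^{(0)},\boldsymbol{\xi}_i\rangle$ are not independent across $i$ unconditionally. The gap is your second step.

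The claim that $\mathbb{P}[\zeta>c']\ge 0.4$ forces $\mathbb{P}[Z=0]\le 0.2$ for every nonzero $\mathbf{v}$ is false. Take $\zeta$ Rademacher (symmetric, variance $1$, sub-Gaussian, $\mathbb{P}[\zeta>1/2]=1/2$) and $\mathbf{v}=(1,1,0,\dots,0)$. Then $Z=\zeta_1+\zeta_2$ has $\mathbb{P}[Z=0]=1/2$ and $\mathbb{P}[Z>0]=1/4$. Conditioning on the other coordinates only bounds the conditional atom by $\max_a\mathbb{P}[\zeta_\ell=a]$. An atom at some $a>c'$ can carry mass up to $1/2$, so this route gives at best $p_i\ge 1/4$.

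Moreover, even $p_i\ge 0.4$ would not give the lemma. If $\sum_i p_i=0.4n$, the count reaches $0.4n$ with probability about $1/2$ for every $n$. Enlarging the constant in $n\ge\Omega(\log(m/\delta))$ cannot rescue this: Hoeffding needs the conditional mean bounded away above $0.4n$.

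The missing idea is to use the Gaussianity of $\mathbf{w}_{j,r}^{(0)}$ in Step 2, not only to exclude $\mathbf{A}_y^\top\mathbf{w}_{j,r}^{(0)}=0$. This is where the paper's probability $1/2$ comes from.

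Fix a class $y$, write $\boldsymbol{\xi}=\mathbf{A}_y\boldsymbol{\zeta}$, and set $N(\mathbf{w})=\mathbb{P}_{\boldsymbol{\zeta}}[\langle\mathbf{w},\boldsymbol{\xi}\rangle=0]$ and $q=\mathbb{P}[\boldsymbol{\xi}=0]$.
\begin{itemize}
\item Clearly $N(\mathbf{w})\ge q$ for every $\mathbf{w}$.
\item By Fubini, $\mathbb{E}_{\mathbf{w}}[N(\mathbf{w})]=\mathbb{E}_{\boldsymbol{\zeta}}\bigl[\mathbb{P}_{\mathbf{w}}[\langle\mathbf{w},\boldsymbol{\xi}\rangle=0]\bigr]=q$, because $\langle\mathbf{w},\boldsymbol{\xi}\rangle\sim\mathcal{N}(0,\sigma_0^2\|\boldsymbol{\xi}\|_2^2)$ has no atom when $\boldsymbol{\xi}\neq 0$.
\item Hence $N(\mathbf{w}_{j,r}^{(0)})=q$ almost surely, and symmetry of $\boldsymbol{\zeta}$ gives $p=(1-q)/2$.
\item Finally, $q\le\mathbb{P}\bigl[\|\boldsymbol{\xi}\|_2^2<\tfrac12\operatorname{Tr}(\mathbf{A}_y^\top\mathbf{A}_y)\bigr]$. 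By the Hanson--Wright lower tail behind Lemma~\ref{lem:b13}, together with $\operatorname{Tr}(\mathbf{A}_y^\top\mathbf{A}_y)\ge c_1 n\|\mathbf{A}_y^\top\mathbf{A}_y\|_F\log(n^2/\delta)$ from Condition~\ref{condition}(a), this is exponentially small in $n$.
\end{itemize}
So $p\ge 0.45$, and Hoeffding with $\epsilon=0.05$ gives failure probability $\exp(-n/200)$ per neuron. Your union bound with $n\ge\Omega(\log(Km/\delta))$ then finishes. The assumption $\mathbb{P}[\zeta>c']\ge 0.4$ is not needed, and your Step 1 is subsumed.
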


\noindent Lemma~\ref{lem:b16} follows from Lemma~\ref{lem:b6} and union bound.
\section{Training Loss analysis}\label{sec:trainingloss}

In this section, we analyze the training loss. These results are based on the high probability conclusions in Appendix~\ref{sec:lem}. Due to the Assumption~\ref{ass:non-perfect}, we consider that the training data $(\mathbf{x}, y) \in \mathcal{S}$ satisfies $1 - \operatorname{logit}_y(\mathbf{W}, \mathbf{x}) \geq1-\exp(-s)= \Theta(1)$ during all training process.

\subsection{Network Gradient}

The mini-batch gradient on the neuron $\mathbf{w}_{k,r}$ at iteration $t$ is
\begin{equation}\label{eq:grad}
\begin{aligned}
\nabla_{\mathbf{w}_{k,r}^{(t)}} \mathcal{L}(\mathbf{W}^{(t)}, \mathbf{x}, y) = 
& - \frac{1}{mB} \sum_{(\mathbf{x}, y) \in \mathcal{S}^{(t)}} \left[ \mathbb{I}(y = k) (1 - \operatorname{logit}_k(\mathbf{W}^{(t)}, \mathbf{x})) \sum_{j=1}^2 \sigma' \left( \langle \mathbf{w}_{k,r}^{(t)}, \mathbf{x}^{(j)} \rangle \right) \mathbf{x}^{(j)} \right] \\
& + \frac{1}{mB} \sum_{(\mathbf{x}, y) \in \mathcal{S}^{(t)}} \left[ \mathbb{I}(y \neq k) \operatorname{logit}_k(\mathbf{W}^{(t)}, \mathbf{x}) \sum_{j=1}^2 \sigma' \left( \langle \mathbf{w}_{k,r}^{(t)}, \mathbf{x}^{(j)} \rangle \right) \mathbf{x}^{(j)} \right].
\end{aligned}
\end{equation}

\subsection{Bound of the Clipping Multiplier $h(C, \mathbf{x}, y)$}
\begin{lemma}\label{lem:bound_grad}
{In each iteration $t$, with probability at least $1 - \exp(-\Omega(d))$, for any $(\mathbf{x}, y) \in \mathcal{D}_K$, we have}
\begin{equation}
\left\| \nabla_{\mathbf{W}^{(t)}} \mathcal{L}(\mathbf{W}^{(t)}, \mathbf{x}, y) \right\|_2 \le \mathcal{O} \left( \frac{1}{\sqrt{m}} \cdot \left( \|\mathbf{u}_{k}\|_2 + \sqrt{Tr(A_k^{T}A_k}) \right) \right). 
\end{equation}
Lemma~\ref{lem:bound_grad} follows from Lemma~\ref{lem:b4}.
\end{lemma}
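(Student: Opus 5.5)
We bound the per-sample gradient directly from its explicit form and then control the norms of the input patches with high probability. For a single example $(\mathbf{x},y)$ with $y=k$, differentiating \eqref{eq:model} and the cross-entropy loss gives, for each neuron $\mathbf{w}_{j,r}$,
\begin{equation*}
\nabla_{\mathbf{w}_{j,r}}\mathcal{L}(\mathbf{W}^{(t)},\mathbf{x},y) = -\frac{1}{m}\bigl(\mathbb{I}(y=j)-\operatorname{logit}_j(\mathbf{W}^{(t)},\mathbf{x})\bigr)\sum_{p=1}^2 \sigma'\bigl(\langle \mathbf{w}_{j,r}^{(t)},\mathbf{x}^{(p)}\rangle\bigr)\mathbf{x}^{(p)},
\end{equation*}
which is the single-sample version of \eqref{eq:grad}. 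The coefficient satisfies $|\mathbb{I}(y=j)-\operatorname{logit}_j|\le 1$ and $\sigma'\in\{0,1\}$. By the triangle inequality, $\|\nabla_{\mathbf{w}_{j,r}}\mathcal{L}\|_2 \le \frac{1}{m}|\mathbb{I}(y=j)-\operatorname{logit}_j|\,(\|\mathbf{u}_k\|_2+\|\boldsymbol{\xi}\|_2)$, and this holds uniformly in $t$, regardless of the current weights.

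Next we sum over all $Km$ neurons. Squaring gives
\begin{equation*}
\|\nabla_{\mathbf{W}}\mathcal{L}\|_2^2 \le \frac{(\|\mathbf{u}_k\|_2+\|\boldsymbol{\xi}\|_2)^2}{m^2}\sum_{r=1}^m\sum_{j=1}^K \bigl(\mathbb{I}(y=j)-\operatorname{logit}_j\bigr)^2 .
\end{equation*}
The inner sum over $j$ equals $(1-\operatorname{logit}_y)^2+\sum_{j\ne y}\operatorname{logit}_j^2\le 2(1-\operatorname{logit}_y)^2\le 2$. Summing over $r$ therefore contributes a factor $m$, and the whole expression is bounded by $2(\|\mathbf{u}_k\|_2+\|\boldsymbol{\xi}\|_2)^2/m$. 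Taking square roots gives the $1/\sqrt{m}$ factor in the statement. Using $K=\Theta(1)$ would also suffice if one bounds the logit terms crudely.

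It remains to control $\|\boldsymbol{\xi}\|_2$. We apply Lemma~\ref{lem:b4} with $t=\operatorname{Tr}(\mathbf{A}_k^\top\mathbf{A}_k)/2$. The tail is $\exp(-\Omega(\min\{\operatorname{Tr}^2/\|\mathbf{A}_k^\top\mathbf{A}_k\|_F^2,\ \operatorname{Tr}/\|\mathbf{A}_k^\top\mathbf{A}_k\|_{op}\}))$. Both ratios are effective-rank quantities of $\mathbf{A}_k^\top\mathbf{A}_k$ and are at least of order $\operatorname{rank}(\mathbf{A}_k)$ up to the spectrum spread. Condition~\ref{condition}(b) makes the rank linear in $d$ (through $\min\{m,d,\operatorname{rank}\}-0.9m\ge Cn$ together with the eigenvalue-ratio assumption), so the failure probability is $\exp(-\Omega(d))$. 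On the complementary event, $\|\boldsymbol{\xi}\|_2\le\sqrt{1.5\operatorname{Tr}(\mathbf{A}_k^\top\mathbf{A}_k)}$, which yields the claim and matches the constant used in Condition~\ref{condition}(c).

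The main obstacle is this last step: turning the Bernstein tail into an honest $\exp(-\Omega(d))$. This requires an effective-rank lower bound of order $d$, which must be extracted from Condition~\ref{condition}; if only $\operatorname{rank}(\mathbf{A}_k)$ is available, the probability should instead be stated as $\exp(-\Omega(\operatorname{rank}))$. Uniformity over $(\mathbf{x},y)$ is not an issue, since the deterministic part of the bound holds for every weight configuration and only the single noise draw is random.
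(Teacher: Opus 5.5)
Your route is the same as the paper's, whose entire argument is the citation of Lemma~\ref{lem:b4}. Your deterministic half is correct and more careful than the paper: the per-neuron gradient, the estimate $\sum_{j}(\mathbb{I}(y=j)-\operatorname{logit}_j)^2\le 2(1-\operatorname{logit}_y)^2$, and hence $\|\nabla_{\mathbf{W}}\mathcal{L}\|_2\le\sqrt{2/m}\,(\|\mathbf{u}_k\|_2+\|\boldsymbol{\xi}\|_2)$ for every weight configuration.

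The gap is in the probability step. Your inference that Condition~\ref{condition}(b) makes $\operatorname{rank}(\mathbf{A}_k)$ linear in $d$ is false. The constraint $\min\{m,d,\operatorname{rank}(\mathbf{A}_k)\}-0.9m\ge Cn$ only gives $\operatorname{rank}(\mathbf{A}_k)\ge 0.9m+Cn$, and $d$ is free to be much larger. The eigenvalue-ratio assumption only bounds $(\lambda^+_{\max}/\lambda^+_{\min})^2$ by $m/(c_1\log(n/\delta))$, not by a constant.

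Write $\boldsymbol{\Sigma}=\mathbf{A}_k^\top\mathbf{A}_k$. Since $\|\boldsymbol{\Sigma}\|_F^2\le\|\boldsymbol{\Sigma}\|_{\text{op}}\operatorname{Tr}(\boldsymbol{\Sigma})$, the exponent in Lemma~\ref{lem:b4} at $t=\operatorname{Tr}(\boldsymbol{\Sigma})/2$ is $\Theta(\operatorname{Tr}(\boldsymbol{\Sigma})/\|\boldsymbol{\Sigma}\|_{\text{op}})$. That is the stable rank, which can be far below $\operatorname{rank}(\mathbf{A}_k)$, so your fallback $\exp(-\Omega(\operatorname{rank}))$ is also off. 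Condition~\ref{condition}(b) only guarantees the stable rank is of order $\operatorname{rank}(\mathbf{A}_k)\,c_1\log(n/\delta)/m\gtrsim\log(n/\delta)$.

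No argument through $\|\boldsymbol{\xi}\|_2$ can do better in general. For Gaussian $\boldsymbol{\zeta}$, $\|\boldsymbol{\xi}\|_2^2\ge\|\boldsymbol{\Sigma}\|_{\text{op}}\,g^2$ with $g\sim\mathcal{N}(0,1)$. So $\|\boldsymbol{\xi}\|_2^2$ exceeds any fixed multiple of $\operatorname{Tr}(\boldsymbol{\Sigma})$ with probability $e^{-O(\operatorname{Tr}(\boldsymbol{\Sigma})/\|\boldsymbol{\Sigma}\|_{\text{op}})}$, which does not depend on $d$. Meanwhile, zero-padding an admissible instance into a larger ambient dimension preserves Condition~\ref{condition} (after shrinking $\eta$) while sending $d\to\infty$.

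The ingredient you actually need is Condition~\ref{condition}(a), not (b). Taking $i=j=k$ there gives $\operatorname{Tr}(\boldsymbol{\Sigma})\ge c_1 n\|\boldsymbol{\Sigma}\|_F\log(n^2/\delta)\ge c_1 n\|\boldsymbol{\Sigma}\|_{\text{op}}\log(n^2/\delta)$. Lemma~\ref{lem:b4} then yields $\|\boldsymbol{\xi}\|_2^2\le\frac32\operatorname{Tr}(\boldsymbol{\Sigma})$ except with probability $\exp(-\Omega(n\log(n^2/\delta)))$. For $c_1$ large this is at most $\delta/n$, which survives a union bound over the training set; this is essentially Lemma~\ref{lem:b13}.

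With that substitution your proof is complete, but it proves the bound with probability $1-\delta$, or $1-2\exp(-\Omega(\operatorname{Tr}(\boldsymbol{\Sigma})/\|\boldsymbol{\Sigma}\|_{\text{op}}))$ for a single draw. The $1-\exp(-\Omega(d))$ in the statement needs an extra effective-rank assumption $\operatorname{Tr}(\boldsymbol{\Sigma})\gtrsim d\,\|\boldsymbol{\Sigma}\|_{\text{op}}$, for example when $\mathbf{A}_k$ is a scaled projection of rank $\Theta(d)$. The paper's one-line appeal to Lemma~\ref{lem:b4} does not supply this either.
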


For convenience, we first define the clipping multiplier of data $(\mathbf{x}, y)$ as
\begin{equation}\label{eq:36}
h(C, \mathbf{x}, y) = \frac{1}{\max \left\{ 1, \frac{\| \nabla \mathcal{L}(\mathbf{W}^{(t)}, \mathbf{x}, y) \|_2}{C} \right\}}.
\end{equation}
Then, we compute the gradient of the neural networks and prove a bound for it.

By definition (\ref{eq:36}), we know that
\begin{equation*}
    h(C, \mathbf{x}, y) \le 1.
\end{equation*}
In addition, from Lemma~\ref{lem:bound_grad}, we know that with probability at least $1 - \exp(\Omega(d))$,
\begin{equation*}
    h(C, \mathbf{x}, y) \ge \Omega \left( \frac{C \sqrt{m}}{\|\mathbf{u}_{k}\|_2 + \sqrt{Tr(A_k^{T}A_k)}} \right).
\end{equation*}

\subsection{Training Loss Bounds}
First, we characterize the training loss.

\begin{equation*}
\begin{aligned}
&\mathcal{L}(\mathbf{W}^{(t+1)}, \mathbf{x}, y) - \mathcal{L}(\mathbf{W}^{(t)}, \mathbf{x}, y) \\
&= -\log \left( \operatorname{prob}_y \left( \mathbf{W}^{(t+1)}, \mathbf{x} \right) \right) + \log \left( \operatorname{prob}_y \left( \mathbf{W}^{(t)}, \mathbf{x} \right) \right) \\
&= \log \left( \frac{\operatorname{prob}_y (\mathbf{W}^{(t)}, \mathbf{x})}{\operatorname{prob}_y (\mathbf{W}^{(t+1)}, \mathbf{x})} \right) \\
&= \log \left( \frac{\exp \left( F_y^{(t)}(\mathbf{x}) \right) / \left( \exp \left( F_y^{(t)}(\mathbf{x}) \right) + \sum_{j \neq y} \exp \left( F_j^{(t)}(\mathbf{x}) \right) \right)}{\exp \left( F_y^{(t+1)}(\mathbf{x}) \right) / \left( \exp \left( F_y^{(t+1)}(\mathbf{x}) \right) + \sum_{j \neq y} \exp \left( F_j^{(t+1)}(\mathbf{x}) \right) \right)} \right) \\
&= \log \left( \frac{1 + \sum_{j \neq y} \exp \left( F_j^{(t)}(\mathbf{x}) - F_y^{(t)}(\mathbf{x}) \right) \exp \left( \Delta_j^{(t)}(\mathbf{x}) - \Delta_y^{(t)}(\mathbf{x}) \right)}{1 + \sum_{j \neq y} \exp \left( F_j^{(t)}(\mathbf{x}) - F_y^{(t)}(\mathbf{x}) \right)} \right) \\
&= \log \left( 1 + \frac{\sum_{j \neq y} \exp \left( F_j^{(t)}(\mathbf{x}) - F_y^{(t)}(\mathbf{x}) \right) \left( \exp \left( \Delta_j^{(t)}(\mathbf{x}) - \Delta_y^{(t)}(\mathbf{x}) \right) - 1 \right)}{1 + \sum_{j \neq y} \exp \left( F_j^{(t)}(\mathbf{x}) - F_y^{(t)}(\mathbf{x}) \right)} \right) \\
&\leq \log \left( 1 + \left( 1 - \operatorname{prob}_y \left( \mathbf{W}^{(t)}, \mathbf{x} \right) \right) \max_{j \neq y} \left( \exp \left( \Delta_j^{(t)}(\mathbf{x}) - \Delta_y^{(t)}(\mathbf{x}) \right) - 1 \right) \right),
\end{aligned}
\end{equation*}

\noindent where $\Delta_y^{(t)}(\mathbf{x}) = F_y^{(t+1)}(\mathbf{x}) - F_y^{(t)}(\mathbf{x})$, $\Delta_j^{(t)}(\mathbf{x}) = F_j^{(t+1)}(\mathbf{x}) - F_j^{(t)}(\mathbf{x})$.

\noindent By Lemma~\ref{lem:b7}, we have
\begin{equation}\label{eq:loss_up1}
\mathcal{L} \left( \mathbf{W}^{(t+1)}, \mathbf{x}, y \right) - \mathcal{L} \left( \mathbf{W}^{(t)}, \mathbf{x}, y \right) \leq \Theta(1) \cdot \max_{j \neq y} \left( \Delta_j^{(t)}(\mathbf{x}) - \Delta_y^{(t)}(\mathbf{x}) \right).
\end{equation}

\noindent To measure how training loss changes over iterations, we need to characterize the change of $\Delta_j^{(t)}(\mathbf{x}) - \Delta_y^{(t)}(\mathbf{x})$.

We first rearrange $\Delta_j^{(t)}(\mathbf{x}) - \Delta_y^{(t)}(\mathbf{x})$ as follows.
\begin{equation}\label{eq:delta_de}
\begin{aligned}
    &\Delta_j^{(t)}(\mathbf{x}) - \Delta_y^{(t)}(\mathbf{x}) \\
    = &\frac{1}{m} \sum_{r=1}^m \sum_{i=1}^2 \left[ \sigma \left( \langle \mathbf{w}_{j,r}^{(t+1)}, \mathbf{x}^{(i)} \rangle \right) - \sigma \left( \langle \mathbf{w}_{j,r}^{(t)}, \mathbf{x}^{(j)} \rangle \right) \right] \\
    &- \frac{1}{m} \sum_{r=1}^m \sum_{i=1}^2 \left[ \sigma \left( \langle \mathbf{w}_{y,r}^{(t+1)}, \mathbf{x}^{(i)} \rangle \right) - \sigma \left( \langle \mathbf{w}_{y,r}^{(t)}, \mathbf{x}^{(i)} \rangle \right) \right] \\
    = &\underbrace{\frac{1}{m} \sum_{r=1}^m \left[ \sigma \left( \langle \mathbf{w}_{j,r}^{(t+1)}, \mathbf{u}_y \rangle \right) - \sigma \left( \langle \mathbf{w}_{j,r}^{(t)}, \mathbf{u}_y \rangle \right) \right]}_{A} + \underbrace{\frac{1}{m} \sum_{r=1}^m \left[ \sigma \left( \langle \mathbf{w}_{j,r}^{(t+1)}, \boldsymbol{\xi} \rangle \right) - \sigma \left( \langle \mathbf{w}_{j,r}^{(t)}, \boldsymbol{\xi} \rangle \right) \right]}_{B} \\
    &- \underbrace{\frac{1}{m} \sum_{r=1}^m \left[ \sigma \left( \langle \mathbf{w}_{y,r}^{(t+1)}, \mathbf{u}_y \rangle \right) - \sigma \left( \langle \mathbf{w}_{y,r}^{(t)}, \mathbf{u}_y \rangle \right) \right]}_{C} - \underbrace{\frac{1}{m} \sum_{r=1}^m \left[ \sigma \left( \langle \mathbf{w}_{y,r}^{(t+1)}, \boldsymbol{\xi} \rangle \right) - \sigma \left( \langle \mathbf{w}_{y,r}^{(t)}, \boldsymbol{\xi} \rangle \right) \right]}_{D},
\end{aligned}
\end{equation}
We then bound $A, B, C, D$ 

\noindent \textbf{Bound of $A$}
\[
\begin{aligned}
A &= \frac{1}{m} \sum_{r=1}^m \left[ \sigma \left( \left\langle \mathbf{w}_{j,r}^{(t)} - \frac{\eta}{mB} \sum_{(\mathbf{x}_i, y_i) \in \mathcal{S}_y} \sigma'(\langle \mathbf{w}_{j,r}^{(t)}, \mathbf{u}_y \rangle) h(C,\mathbf{x}_i,y_i)\operatorname{logit}_j(\mathbf{W}^{(t)}, \mathbf{x}_i) \mathbf{u}_y+\eta\cdot \mathbf{n}_{j,r}^{(t)}, \mathbf{u}_y \right\rangle \right) \right] \\
&\quad - \frac{1}{m} \sum_{r=1}^m \left[ \sigma \left( \langle \mathbf{w}_{j,r}^{(t)}, \mathbf{u}_y \rangle \right) \right] \\
&\le \frac{1}{m} \sum_{r=1}^m \left[ \sigma \left( \langle \mathbf{w}_{j,r}^{(t)}+\eta\cdot \mathbf{n}_{j,r}^{(t)}, \mathbf{u}_y \rangle \right) \right]-\frac{1}{m} \sum_{r=1}^m \left[ \sigma \left( \langle \mathbf{w}_{j,r}^{(t)}, \mathbf{u}_y \rangle \right) \right]\\
&\leq \frac{1}{m} \sum_{r=1}^m \left[  \left| \langle \eta\cdot \mathbf{n}_{j,r}^{(t)}, \mathbf{u}_y \rangle \right| \right]\\
&\leq \mathcal{O}(\eta\sigma_n\sqrt{d}\|\mathbf{u}_y\|_2)
\end{aligned}
\]
where the first inequality is obtained by the monotonicity of ReLU activation function; the second inequality is because ReLU function is 1-Lipschitz continuous; the last inequality is due to Lemma~\ref{lem:aa3}.

\noindent \textbf{Bound of $B$}
\[
\begin{aligned}
B &= \frac{1}{m} \sum_{r=1}^m \left[ \sigma \left( \left\langle \mathbf{w}_{j,r}^{(t)} - \frac{\eta}{mB} \sum_{(\mathbf{x}_i, y_i) \in \mathcal{S}^{(t)} \setminus \mathcal{S}_j^{(t)}} \sigma'(\langle \mathbf{w}_{j,r}^{(t)}, \boldsymbol{\xi}_i \rangle) h(C,\mathbf{x}_i,y_i)\operatorname{logit}_j(\mathbf{W}^{(t)}, \mathbf{x}_i) \boldsymbol{\xi}_i \right. \right. \right. \\
&\quad \left. \left. \left. + \frac{\eta}{mB} \sum_{(\mathbf{x}_i, y_i) \in \mathcal{S}_j^{(t)}} \sigma'(\langle \mathbf{w}_{j,r}^{(t)}, \boldsymbol{\xi}_i \rangle) h(C,\mathbf{x}_i,y_i)(1 - \operatorname{logit}_j(\mathbf{W}^{(t)}, \mathbf{x}_i)) \boldsymbol{\xi}_i+\eta\cdot\mathbf{n}_{j,r}^{(t)}, \boldsymbol{\xi} \right\rangle \right) \right] - \frac{1}{m} \sum_{r=1}^m \left[ \sigma \left( \langle \mathbf{w}_{j,r}^{(t)}, \boldsymbol{\xi} \rangle \right) \right] \\
&\le  \frac{1}{m} \sum_{r=1}^m \left[ \left| \left\langle \frac{\eta}{mB} \cdot \sum_{(\mathbf{x}_i, y_i) \in \mathcal{S}^{(t)}} \boldsymbol{\xi}_i, \boldsymbol{\xi} \right\rangle \right| + \left| \left\langle \eta \mathbf{n}_{j, r}^{(t)}, \boldsymbol{\xi} \right\rangle \right| \right] \\
&\leq  \mathcal{O} \left( \frac{\eta}{m\sqrt{B}} \operatorname{Tr}(\mathbf{A}_y^{\top}\mathbf{A}_y) + \eta \sqrt{d} \sigma_n \|\mathbf{A}_y\|_F \right)
\end{aligned}
\]
where the first inequality is because $\sigma'(\cdot) \geq 0, \operatorname{logit}_j\in [0, 1]$ and ReLU function is 1-Lipschitz continuous; the second inequality is because of Cauchy--Schwarz inequality, the property of 1-norm and 2-norm, and Lemma~\ref{lem:aa3} and Lemma~\ref{lem:b13}.

\begin{equation}
\begin{aligned}
C = & \frac{1}{m} \sum_{r=1}^m \sigma \left( \left\langle \mathbf{w}_{y,r}^{(t)} + \frac{\eta}{mB} \cdot \sum_{(\mathbf{x}_i, y_i) \in \mathcal{S}_{y}^{(t)}} \sigma' \left( \langle \mathbf{w}_{y,r}^{(t)}, \mathbf{u}_{y} \rangle \right) \cdot h(C, \mathbf{x}_i, y_i) \cdot \left( 1 - \operatorname{logit}_y \left( \mathbf{W}^{(t)}, \mathbf{x}_i \right) \right) \cdot \mathbf{u}_{y} \right. \right. \\
& \left. \left. + \eta \cdot \mathbf{n}_{y,r}^{(t)}, \mathbf{u}_{y} \right\rangle \right) - \frac{1}{m} \sum_{r=1}^m \sigma \left( \langle \mathbf{w}_{y,r}^{(t)}, \mathbf{u}_{y} \rangle \right)
\end{aligned}
\end{equation}

Based on Lemma~\ref{lem:b6}, we can conclude that with probability at least $1 - \exp(-2m)$, the number of activated neurons at iteration $t$ are at least $\frac{m}{4}$. Then, with probability at least $1 - \exp(-\tilde{\Omega}(d))$, we have

\begin{equation}
\begin{aligned}
C \geq & \frac{1}{m} \sum_{r=1}^m \sigma \left( \left\langle \mathbf{w}_{y,r}^{(t)} + \frac{\eta}{mB} \sum_{(\mathbf{x}_i, y_i) \in \mathcal{S}_{y}^{(t)}} \sigma' \left( \langle \mathbf{w}_{y,r}^{(t)}, \mathbf{u}_{y} \rangle \right) \cdot h(C, \mathbf{x}_i, y_i) \cdot \left( 1 - \operatorname{logit}_y \left( \mathbf{W}^{(t)}, \mathbf{x}_i \right) \right) \right. \right. \\
& \left. \left. \mathbf{u}_{y}, \mathbf{u}_{y} \right\rangle \right) - \frac{1}{m} \sum_{r=1}^m \left| \langle \eta \cdot \mathbf{n}_{y,r}^{(t)}, \mathbf{u}_{y} \rangle \right| - \frac{1}{m} \sum_{r=1}^m \sigma \left( \langle \mathbf{w}_{y,r}^{(t)}, \mathbf{u}_{y} \rangle \right) \\
\geq & \Omega \left( \frac{\eta C}{B \sqrt{m} (\|\mathbf{u}_{y}\|_2 + \sqrt{\operatorname{Tr}(\mathbf{A}_y^{\top}\mathbf{A}_y)})} \right) \sum_{(\mathbf{x}_i, y_i) \in \mathcal{S}_{y}^{(t)}} \left( 1 - \operatorname{logit}_y \left( \mathbf{W}^{(t)}, \mathbf{x}_i \right) \right) \|\mathbf{u}_{y}\|_2^2 - \mathcal{O} \left( \eta \sigma_n \sqrt{d} \|\mathbf{u}_{y}\|_2 \right)
\end{aligned}
\end{equation}

The second inequality is by using the bound of the clipping multiplier and Lemma~\ref{lem:aa3}. 

In the following, we prove the bound of $D$. Similar to the proof of bound of $B$, we have that with probability at least $1 - \exp(-\tilde{\Omega}(d))$, we have

\textbf{Bound of $D$}
\begin{equation}
\begin{aligned}
D &= \frac{1}{m} \sum_{r=1}^{m} \left[ \sigma \left( \left\langle \mathbf{w}_{y,r}^{(t)} - \frac{\eta}{mB} \sum_{(\mathbf{x}_i, y_i) \in S^{(t)} \setminus S_y^{(t)}} \sigma'(\langle \mathbf{w}_{y,r}^{(t)}, \boldsymbol{\xi}_i \rangle) h(C, \mathbf{x}_i, y_i)\text{logit}_y(\mathbf{W}^{(t)}, \mathbf{x}_i) \boldsymbol{\xi}_i \right. \right. \right. \\
&\quad + \left. \left. \left. \frac{\eta}{mB} \sum_{(\mathbf{x}_i, y_i) \in S_y^{(t)}} \sigma'(\langle \mathbf{w}_{y,r}^{(t)}, \boldsymbol{\xi}_i \rangle)h(C, \mathbf{x}_i, y_i) (1 - \text{logit}_y(\mathbf{W}^{(t)}, \mathbf{x}_i)) \boldsymbol{\xi}_i+\eta\cdot\mathbf{n}_{y,r}^{(t)}, \boldsymbol{\xi} \right\rangle \right) \right] \\
&\quad - \frac{1}{m} \sum_{r=1}^{m} \left[ \sigma (\langle \mathbf{w}_{y,r}^{(t)}, \boldsymbol{\xi} \rangle) \right] \\
&\ge \frac{1}{m} \sum_{r=1}^{m} \left[ \sigma \left( \langle \mathbf{w}_{y,r}^{(t)}, \boldsymbol{\xi} \rangle - \frac{\eta}{mB} \sum_{(\mathbf{x}_i, y_i) \in S^{(t)} \setminus S_y^{(t)}} \sigma'(\langle \mathbf{w}_{y,r}^{(t)}, \boldsymbol{\xi}_i \rangle) h(C, \mathbf{x}_i, y_i)\text{logit}_y(\mathbf{W}^{(t)}, \mathbf{x}_i) |\langle \boldsymbol{\xi}_i, \boldsymbol{\xi} \rangle| \right. \right. \\
&\quad + \frac{\eta}{mB} \sigma'(\langle \mathbf{w}_{y,r}^{(t)}, \boldsymbol{\xi} \rangle)h(C, \mathbf{x}, y) (1 - \text{logit}_y(\mathbf{W}^{(t)}, \mathbf{x})) \|\boldsymbol{\xi}\|_2^2 \\
&\quad - \left. \left. \frac{\eta}{mB} \sum_{(\mathbf{x}_i, y_i) \in S_y^{(t)} \setminus (\mathbf{x}, y)} \sigma'(\langle \mathbf{w}_{y,r}^{(t)}, \boldsymbol{\xi}_i \rangle) (1 - h(C, \mathbf{x}_i, y_i)\text{logit}_y(\mathbf{W}^{(t)}, \mathbf{x}_i)) |\langle \boldsymbol{\xi}_i, \boldsymbol{\xi} \rangle| \right) \right] \\
&\quad -\frac{1}{m} \sum_{r=1}^{m} |\langle\eta\cdot\mathbf{n}_{y,r}^{(t)},\boldsymbol{\xi}\rangle|- \frac{1}{m} \sum_{r=1}^{m} \left[ \sigma (\langle \mathbf{w}_{y,r}^{(t)}, \boldsymbol{\xi} \rangle) \right] \\
&\ge - \frac{\eta}{mB} \sum_{(\mathbf{x}_i, y_i) \in S^{(t)} \setminus S_y^{(t)}} h(C, \mathbf{x}_i, y_i)\text{logit}_y(\mathbf{W}^{(t)}, \mathbf{x}_i) |\langle \boldsymbol{\xi}_i, \boldsymbol{\xi} \rangle| + \frac{2\eta}{5mB}h(C, \mathbf{x}, y) (1 - \text{logit}_y(\mathbf{W}^{(t)}, \mathbf{x})) \|\boldsymbol{\xi}\|_2^2 \\
&\quad - \frac{\eta}{mB} \sum_{(\mathbf{x}_i, y_i) \in S_y^{(t)} \setminus (\mathbf{x}, y)}h(C, \mathbf{x}_i, y_i) (1 - \text{logit}_y(\mathbf{W}^{(t)}, \mathbf{x}_i)) |\langle \boldsymbol{\xi}_i, \boldsymbol{\xi} \rangle|-\frac{1}{m} \sum_{r=1}^{m} |\langle\eta\cdot\mathbf{n}_{y,r}^{(t)},\boldsymbol{\xi}\rangle| \\
&\ge \Omega\left( \frac{\eta C}{n\sqrt{m} (\|\mathbf{u}_{y}\|_2 + \sqrt{\operatorname{Tr}(\mathbf{A}_y^{\top}\mathbf{A}_y)})} \right)(1 - \text{logit}_y(\mathbf{W}^{(t)}, \mathbf{x})) \|\boldsymbol{\xi}\|_2^2-\mathcal{O} \left( \eta \sigma_n \sqrt{d} \|\mathbf{A}_{y}\|_F \right),
\end{aligned}
\end{equation}
where the last inequality is by using the bound of the clipping multiplier and by the fact that the incremental term is larger than zero if a neuron is activated and by Lemma~\ref{lem:aa3}.

Substituting bounds of $A, B, C, D$ to (\ref{eq:delta_de}), we obtain the upper bound of $\Delta_{j}^{(t)}(\mathbf{x}) - \Delta_y^{(t)}(\mathbf{x})$. With probability at least $1 - \exp(-\tilde{\Omega}(d))$,

\begin{equation}\label{eq:delta_up}
    \begin{aligned}
        \Delta_{j}^{(t)}(\mathbf{x}) - \Delta_y^{(t)}(\mathbf{x})&\leq
        \mathcal{O}(\eta\sigma_n\sqrt{d}\|\mathbf{u}_y\|_2)+\mathcal{O} \left( \frac{\eta}{m\sqrt{B}} \operatorname{Tr}(\mathbf{A}_y^{\top}\mathbf{A}_y) + \eta \sqrt{d} \sigma_n \|\mathbf{A}_y\|_F \right)\\
        &-\Omega \left( \frac{\eta C}{B \sqrt{m} (\|\mathbf{u}_{y}\|_2 + \sqrt{\operatorname{Tr}(\mathbf{A}_y^{\top}\mathbf{A}_y)})} \right) \sum_{(\mathbf{x}_i, y_i) \in \mathcal{S}_{y}^{(t)}} \left( 1 - \operatorname{logit}_y \left( \mathbf{W}^{(t)}, \mathbf{x}_i \right) \right) \|\mathbf{u}_{y}\|_2^2 + \mathcal{O} \left( \eta \sigma_n \sqrt{d} \|\mathbf{u}_{y}\|_2 \right)\\
        &-\Omega\left( \frac{\eta C}{B\sqrt{m} (\|\mathbf{u}_{y}\|_2 + \sqrt{\operatorname{Tr}(\mathbf{A}_y^{\top}\mathbf{A}_y)})} \right)(1 - \text{logit}_y(\mathbf{W}^{(t)}, \mathbf{x})) \|\boldsymbol{\xi}\|_2^2+\mathcal{O} \left( \eta \sigma_n \sqrt{d} \|\mathbf{A}_{y}\|_F \right)\\
        \leq&-\Omega \left( \frac{\eta\Lambda_y}{B \sqrt{m} } \right) \sum_{(\mathbf{x}_i, y_i) \in \mathcal{S}_{y}^{(t)}} \left( 1 - \operatorname{logit}_y \left( \mathbf{W}^{(t)}, \mathbf{x}_i \right) \right) \|\mathbf{u}_{y}\|_2^2 + \mathcal{O} \left( \eta \sigma_n \sqrt{d} (\|\mathbf{u}_{y}\|_2+\|\mathbf{A}_y\|_F \right)
    \end{aligned}
\end{equation}

Combining (\ref{eq:loss_up1}) and (\ref{eq:delta_up}), for any $(\mathbf{x}, y) \in \mathcal{S}$ we have
\begin{equation}\label{eq:lossupdate}
\begin{aligned}
\mathcal{L}(\mathbf{W}^{(t+1)}, \mathbf{x}, y) &\le \mathcal{L}(\mathbf{W}^{(t)}, \mathbf{x}, y) - \frac{\eta\Lambda_y}{\sqrt{m}B} \sum_{(\mathbf{x}_i, y_i) \in \mathcal{S}_y^{(t)}} \left( 1 - \operatorname{logit}_y (\mathbf{W}^{(t)}, \mathbf{x}_i) \right) \|\mathbf{u}_y\|_2^2 + \mathcal{O} \left( \eta \sigma_n \sqrt{d} (\|\mathbf{u}_{y}\|_2+\|\mathbf{A}_y\|_F \right)\\
&\stackrel{(a)}{=} \left( 1 - \Omega \left( \frac{\eta\Lambda_y}{n\sqrt{m}} |\mathcal{S}_y| \|\mathbf{u}_y\|_2^2 \right) \right) \mathcal{L}(\mathbf{W}^{(t)}, \mathbf{x}, y)+ \mathcal{O} \left( \eta \sigma_n \sqrt{d} (\|\mathbf{u}_{y}\|_2+\|\mathbf{A}_y\|_F \right),
\end{aligned}
\end{equation}
This finishes the training loss analysis.

\subsection{Proof of Theorem~\ref{thm:mem}}

By (\ref{eq:updaterule}) and (\ref{eq:grad}), for any $(\mathbf x,y)\in S$, we have:
\begin{equation*}
\begin{aligned}
    \langle \mathbf w_{y,r}^{(t+1)},\boldsymbol\xi\rangle&-\langle \mathbf w_{y,r}^{(t)},\boldsymbol\xi\rangle=\left\langle- \frac{\eta}{mB} \sum_{(\mathbf{x}_i, y_i) \in S^{(t)} \setminus S_y^{(t)}} \sigma'(\langle \mathbf{w}_{y,r}^{(t)}, \boldsymbol{\xi}_i \rangle) h(C, \mathbf{x}_i, y_i)\text{logit}_y(\mathbf{W}^{(t)}, \mathbf{x}_i) \boldsymbol{\xi}_i \right.   \\
&\quad+  \left. \frac{\eta}{mB} \sum_{(\mathbf{x}_i, y_i) \in S_y^{(t)}} \sigma'(\langle \mathbf{w}_{y,r}^{(t)}, \boldsymbol{\xi}_i \rangle)h(C, \mathbf{x}_i, y_i) (1 - \text{logit}_y(\mathbf{W}^{(t)}, \mathbf{x}_i)) \boldsymbol{\xi}_i+\eta\cdot\mathbf{n}_{y,r}^{(t)}, \boldsymbol{\xi} \right\rangle
\\  &\ge  - \frac{\eta}{mB} \sum_{(\mathbf{x}_i, y_i) \in S^{(t)} \setminus S_y^{(t)}} \sigma'(\langle \mathbf{w}_{y,r}^{(t)}, \boldsymbol{\xi}_i \rangle) h(C, \mathbf{x}_i, y_i)\text{logit}_y(\mathbf{W}^{(t)}, \mathbf{x}_i) |\langle \boldsymbol{\xi}_i, \boldsymbol{\xi} \rangle|  \\
&\quad + \frac{\eta}{mB} \sigma'(\langle \mathbf{w}_{y,r}^{(t)}, \boldsymbol{\xi} \rangle)h(C, \mathbf{x}, y) (1 - \text{logit}_y(\mathbf{W}^{(t)}, \mathbf{x})) \|\boldsymbol{\xi}\|_2^2 \\
&\quad -  \frac{\eta}{mB} \sum_{(\mathbf{x}_i, y_i) \in S_y^{(t)} \setminus (\mathbf{x}, y)} \sigma'(\langle \mathbf{w}_{y,r}^{(t)}, \boldsymbol{\xi}_i \rangle) h(C, \mathbf{x}_i, y_i)(1 - \text{logit}_y(\mathbf{W}^{(t)}, \mathbf{x}_i)) |\langle \boldsymbol{\xi}_i, \boldsymbol{\xi} \rangle|  -\frac{1}{m} \sum_{r=1}^{m} |\langle\eta\cdot\mathbf{n}_{y,r}^{(t)},\boldsymbol{\xi}\rangle| \\
&\ge - \frac{\eta}{mB} \sum_{(\mathbf{x}_i, y_i) \in S^{(t)} \setminus S_y^{(t)}} h(C, \mathbf{x}_i, y_i)\text{logit}_y(\mathbf{W}^{(t)}, \mathbf{x}_i) |\langle \boldsymbol{\xi}_i, \boldsymbol{\xi} \rangle| + \frac{2\eta}{5mB}h(C, \mathbf{x}, y) (1 - \text{logit}_y(\mathbf{W}^{(t)}, \mathbf{x})) \|\boldsymbol{\xi}\|_2^2 \\
&\quad - \frac{\eta}{mB} \sum_{(\mathbf{x}_i, y_i) \in S_y^{(t)} \setminus (\mathbf{x}, y)}h(C, \mathbf{x}_i, y_i) (1 - \text{logit}_y(\mathbf{W}^{(t)}, \mathbf{x}_i)) |\langle \boldsymbol{\xi}_i, \boldsymbol{\xi} \rangle|-\frac{1}{m} \sum_{r=1}^{m} |\langle\eta\cdot\mathbf{n}_{y,r}^{(t)},\boldsymbol{\xi}\rangle| \\
&\ge \Omega\left( \frac{\eta C}{n\sqrt{m} (\|\mathbf{u}_{y}\|_2 + \sqrt{\operatorname{Tr}(\mathbf{A}_y^{\top}\mathbf{A}_y)})} \right)(1 - \text{logit}_y(\mathbf{W}^{(t)}, \mathbf{x})) \|\boldsymbol{\xi}\|_2^2-\mathcal{O} \left( \eta \sigma_n \sqrt{d} \|\mathbf{A}_{y}\|_F \right),
\\ &= \Omega\left(\frac{\eta\Lambda_y}{n\sqrt{m}} \left( 1 - \text{logit}_y(\mathbf{W}^{(t)}, \mathbf{x}) \right) \|\boldsymbol{\xi}\|_2^2\right)-\mathcal{O} \left( \eta \sigma_n \sqrt{d} \|\mathbf{A}_{y}\|_F \right)
\end{aligned}
\end{equation*}

where the last inequality is by Lemma~\ref{lem:aa3} and the bound of the clipping multiplier .

Then, by taking the summation, for any $(\mathbf x,y)\in S$, we have:
\begin{equation*}
\begin{aligned}
    \langle \mathbf w_{y,r}^{(T)},\boldsymbol\xi\rangle&\geq \Omega\left(\sum_{t=0}^{T-1}\frac{\eta\Lambda_y}{n\sqrt{m}} \left( 1 - \text{logit}_y(\mathbf{W}^{(t)}, \mathbf{x}) \right) \|\boldsymbol{\xi}\|_2^2\right)-\mathcal{O}\left(\eta T\sigma_n\|\mathbf{A_y}\|_F\right)+\langle \mathbf w_{y,r}^{(0)},\boldsymbol\xi\rangle\\
    &\geq \Omega\left(\sum_{t=0}^{T-1}\frac{\eta\Lambda_y}{n\sqrt{m}} \left( 1 - \text{logit}_y(\mathbf{W}^{(t)}, \mathbf{x}) \right) \|\boldsymbol{\xi}\|_2^2\right)-\mathcal{O}\left(\eta T\sigma_n\|\mathbf{A_y}\|_F+ \log \left( \frac{Km}{\delta} \right) \|\mathbf{A}_{y}\|_F \sigma_0 \right)  
\end{aligned}
\end{equation*}

where the last inequality is by Lemma~\ref{lem:17}. When  $T \geq \Omega \left( (\eta \Lambda_y \|\mathbf{A}_y\|_F)^{-1} n \sqrt{m} \sigma_0 \right)$, the last term related to initialization can be ignored.
\subsection{Proof of Theorem~\ref{thm:trainloss}}
\begin{proof}
    As (\ref{eq:lossupdate}), we have
\begin{equation}
\begin{aligned}
\mathcal{L}(\mathbf{W}^{(t+1)}, \mathbf{x}, y) &\le  \left( 1 - \Theta \left( \frac{\eta\Lambda_y}{n\sqrt{m}} |\mathcal{S}_y| \|\mathbf{u}_y\|_2^2 \right) \right) \mathcal{L}(\mathbf{W}^{(t)}, \mathbf{x}, y)+ \mathcal{O} \left( \eta \sigma_n \sqrt{d} (\|\mathbf{u}_{y}\|_2+\|\mathbf{A}_y\|_F \right),
\end{aligned}
\end{equation}

Combining all $T$ iterations, we have

\begin{equation}
    \begin{aligned}
      \mathcal{L}(\mathbf{W}^{(T)}, \mathbf{x}, y)  \leq& \left( 1 - \Omega \left( \frac{\eta\Lambda_y}{n\sqrt{m}} |\mathcal{S}_y| \|\mathbf{u}_y\|_2^2 \right) \right)^T \mathcal{L}(\mathbf{W}^{(0)}, \mathbf{x}, y)+\mathcal{O} \left( \eta \sigma_n \sqrt{d} (\|\mathbf{u}_{y}\|_2+\|\mathbf{A}_y\|_F \right)\cdot\mathcal{O} \left( \frac{n\sqrt{m}}{\eta\Lambda_y|\mathcal{S}_y| \|\mathbf{u}_y\|_2^2} \right)\\
      \leq&\underbrace{\exp \left( -\Omega \left( \frac{\eta T\Lambda_{y} |\mathcal{S}_y| }{n\sqrt{m}}\|\mathbf{u}_{y}\|_2^2  \right) \right) \mathcal{L}(\mathbf{W}^{(0)},\mathbf{x},y)}_{\text{Vanishing error}} + \underbrace{\mathcal{O} \left( \frac{n\sqrt{m}\cdot\sigma_n\sqrt{d}( \|\mathbf{u}_{y}\|_2+\|\mathbf{A}_y\|_F)}{\Lambda_{y} |\mathcal{S}_y|\|\mathbf{u}_{y}\|_2^2} \right)}_{\text{Privacy protection error}}
    \end{aligned}
\end{equation}

where the first inequality is obtained from the property of Geometric sequences.
\end{proof}

\section{Test Error Analysis}
In this section, we analyze the test error. Similar to the proof in Appendix~\ref{sec:trainingloss}, the results are based on the high probability conclusions in Appendix~\ref{sec:lem}. In order to characterize the test loss, we first prove the following key lemmas.

\subsection{Key Lemmas}
\begin{lemma}\label{lem:d1}
    
\textit{Define}
\begin{equation}
    \mathcal{S}_i^{(t)} = \{r \in [m] : \langle \mathbf{w}_{y_i, r}^{(t)}, \xi_i \rangle > 0\},
\end{equation}
\textit{for all } $(\mathbf{x}_i, y_i) \in \mathcal{S}$. \textit{For any } $(\mathbf{x}_i, y_i), (\mathbf{x}_j, y_j) \in \mathcal{S}, t \in [T]$, \textit{we have}
\begin{equation}
    \frac{1 - \text{logit}_{y_i}(\mathbf{W}^{(t)}, \mathbf{x}_i)}{1 - \text{logit}_{y_j}(\mathbf{W}^{(t)}, \mathbf{x}_j)} \leq \kappa, 
\end{equation}
\textit{for a constant } $\kappa > 1$ \textit{and}
\begin{equation}
    \mathcal{S}_i^{(t)} \subseteq \mathcal{S}_i^{(t+1)}. 
\end{equation}

\end{lemma}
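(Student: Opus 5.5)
The plan is a joint induction on $t\in[T]$ for the two claims: the ratio bound on $1-\operatorname{logit}$ across training points, and the monotonicity of the activated-neuron sets $\mathcal{S}_i^{(t)}$. At $t=0$ the ratio claim is immediate. By Lemma~\ref{lem:17} and the bound on $\sigma_0$ in Condition~\ref{condition}(c), all outputs $F_k(\mathbf{W}^{(0)},\mathbf{x}_i)$ are $o(1)$, so every $1-\operatorname{logit}_{y_i}$ equals $1-1/K+o(1)$. Assumption~\ref{ass:non-perfect} also gives $1-\operatorname{logit}_{y}\ge 1-e^{-s}=\Theta(1)$ throughout training, so the denominator never degenerates. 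Since $1-\operatorname{logit}\le 1$, the ratio bound then holds for every $t$ with $\kappa=(1-e^{-s})^{-1}$. This settles the first claim without induction; the induction is needed only for the set monotonicity.

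For $\mathcal{S}_i^{(t)}\subseteq\mathcal{S}_i^{(t+1)}$, fix $r\in\mathcal{S}_i^{(t)}$ and expand $\langle\mathbf{w}_{y_i,r}^{(t+1)}-\mathbf{w}_{y_i,r}^{(t)},\boldsymbol{\xi}_i\rangle$ using \eqref{eq:updaterule} and \eqref{eq:grad}, exactly as in the proof of Theorem~\ref{thm:mem}. The expansion has four parts:
\begin{itemize}
\item a self term $\frac{\eta}{mB}h(C,\mathbf{x}_i,y_i)(1-\operatorname{logit}_{y_i})\|\boldsymbol{\xi}_i\|_2^2$, which is active because $r\in\mathcal{S}_i^{(t)}$ and the point is in the batch;
\item same-class cross terms bounded through $|\langle\boldsymbol{\xi}_{i'},\boldsymbol{\xi}_i\rangle|$;
\item other-class cross terms, also bounded through $|\langle\boldsymbol{\xi}_{i'},\boldsymbol{\xi}_i\rangle|$;
\item the privacy noise $\eta\langle\mathbf{n}_{y_i,r}^{(t)},\boldsymbol{\xi}_i\rangle$.
\end{itemize}
Using Lemma~\ref{lem:b13}, the ratio bound, and $h\in[\Omega(\Lambda_{y_i}\sqrt m),1]$, the cross terms total at most $\frac{\eta}{mB}\kappa\, n\max_{j}\|\mathbf{A}_{y_i}^\top\mathbf{A}_j\|_F\log(n^2/\delta)\cdot(1-\operatorname{logit}_{y_i})$. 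Condition~\ref{condition}(a), $\operatorname{Tr}(\mathbf{A}_i^\top\mathbf{A}_i)\ge c_1 n\|\mathbf{A}_i^\top\mathbf{A}_j\|_F\log(n^2/\delta)$, makes this total dominated by half of the self term once $c_1\gg\kappa$. Ideally the increment is then nonnegative and $r$ stays activated.

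The main obstacle is the privacy noise term. It is a mean-zero Gaussian with standard deviation $\eta\sigma_n\|\boldsymbol{\xi}_i\|_2$, so it can be negative and cannot be absorbed deterministically. Three routes are available, and I would try them in this order:
\begin{itemize}
\item Show that the accumulated margin $\langle\mathbf{w}_{y_i,r}^{(t)},\boldsymbol{\xi}_i\rangle$ already exceeds the one-step noise fluctuation. Lemma~\ref{lem:nxi} bounds that fluctuation by $O(\eta\sigma_n\|\mathbf{A}_{y_i}\|_F\sqrt{\log(nmT/\delta)})$ after a union bound over $i,r,t$. The margin is at least the initialization-plus-signal growth from the previous steps, so a neuron that is positive stays positive as long as the noise increment never exceeds the current margin.
\item If that margin estimate is too weak in the first few steps, require the small-noise regime $\sigma_n\|\mathbf{A}_{y_i}\|_F\lesssim\Lambda_{y_i}\operatorname{Tr}(\mathbf{A}_{y_i}^\top\mathbf{A}_{y_i})/(n\sqrt m)$. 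This regime is implicit in the hypotheses of Theorem~\ref{thm:testerror}, and under it the self term dominates the noise with high probability.
\item Alternatively, weaken the claim to hold up to an event of probability $1-\delta$, which the paper's statements allow.
\end{itemize}
Finally, a union bound over $i\in[n]$, $r\in[m]$ and $t\le T$ together with Lemma~\ref{lem:b15} closes the induction.
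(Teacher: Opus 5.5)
\textbf{Ratio bound: correct, by a different route.} Your proof of the ratio bound is correct and differs from the paper's.

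\emph{The paper's route.} It runs a two-case induction. When the ratio is below $0.9\kappa$, the step-size condition gives $|\Delta_k^{(t)}|\le 0.02$, which caps one-step growth. When it is above $0.9\kappa$, it argues that the sample with the larger $1-\operatorname{logit}$ receives the larger update, so the ratio cannot exceed $\kappa$.

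\emph{Your route.} You apply Assumption~\ref{ass:non-perfect} to the training points, exactly as the paper does at the start of Appendix~\ref{sec:trainingloss}. This gives $1-\operatorname{logit}_{y}\in[1-e^{-s},1]$ for all $t$, hence $\kappa=(1-e^{-s})^{-1}>1$ with no induction.

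\emph{Comparison.} Your route is shorter and gives an explicit $\kappa$. It also sidesteps the paper's Case~2, which rests on the same full-batch, noise-free self-term domination discussed below. The paper's route in principle does not use the non-perfect assumption; its $\kappa$ is set by ratios of $\operatorname{Tr}(\mathbf{A}_k^\top\mathbf{A}_k)$ and $\|\mathbf{u}_k\|_2^2$. It therefore shows balancing as a property of the dynamics rather than of an assumption.

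\textbf{Monotonicity: a genuine gap.} The gap is in $\mathcal{S}_i^{(t)}\subseteq\mathcal{S}_i^{(t+1)}$. You rightly single out the privacy noise, but none of your three routes handles it, and there is a second obstacle you do not address.

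\emph{Mini-batching.} You take the self term to be present because ``the point is in the batch''. Under \eqref{eq:updaterule} with $B\ge c_2 n$ and $c_2<1$, the sample $(\mathbf{x}_i,y_i)$ is absent from a constant fraction of mini-batches. On those steps, $\langle\mathbf{w}_{y_i,r}^{(t+1)}-\mathbf{w}_{y_i,r}^{(t)},\boldsymbol{\xi}_i\rangle$ consists only of cross terms with the random signs of $\langle\boldsymbol{\xi}_{i'},\boldsymbol{\xi}_i\rangle$, plus the noise. Nothing dominates them, so a neuron with small positive margin can be switched off even at $\sigma_n=0$.

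\emph{Route 1 (margin) fails at early times.} By Gaussian anti-concentration, with probability bounded away from zero some pair $(i,r)$ has $0<\langle\mathbf{w}_{y_i,r}^{(0)},\boldsymbol{\xi}_i\rangle\lesssim\sigma_0\|\boldsymbol{\xi}_i\|_2/(nm)$. A single noise draw has standard deviation $\eta\sigma_n\|\boldsymbol{\xi}_i\|_2$. It flips that neuron with constant probability whenever $\eta\sigma_n\gtrsim\sigma_0/(nm)$, unless a self term dominating the noise is present.

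\emph{Route 2 (small noise) is an added hypothesis, not an implicit one.} The lemma is stated under Condition~\ref{condition} alone. It is used, through Lemma~\ref{lem:d2}, in the proof of Statement~1 of Theorem~\ref{thm:testerror}. That statement's hypothesis bounds $n\sigma_n\|\mathbf{A}_k\|_F$ only against $|\mathcal{S}_k|\Lambda_k\|\mathbf{u}_k\|_2^2$. No hypothesis supplies the $\sqrt{\log(nmT/\delta)}$ factor needed for a union bound over all $T$ steps. Even granted, this route covers only the steps on which the sample is drawn.

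\emph{Route 3} does not make any failure probability small unless one of the first two already does.

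The paper does not resolve this either. It writes the update in full-batch form with step $\eta/(mn)$ and asserts the inequality ``by Lemma~\ref{lem:b13}, Condition~\ref{condition} and \eqref{eq:88}'' with no bound on $\eta\langle\mathbf{n}_{y_i,r}^{(t)},\boldsymbol{\xi}_i\rangle$.

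\textbf{What would close it.} You need one of two changes.
\begin{enumerate}
\item Add hypotheses: full batch $B=n$, plus a condition like $\sigma_n\|\mathbf{A}_{y_i}\|_F\sqrt{\log(nmT/\delta)}\lesssim\Lambda_{y_i}\operatorname{Tr}(\mathbf{A}_{y_i}^\top\mathbf{A}_{y_i})/(n\sqrt m)$.
\item Weaken the conclusion to something later lemmas can still use. Restrict to neurons whose initial margin is at least a constant multiple of $\sigma_0\|\boldsymbol{\xi}_i\|_2$; there are still $\Omega(m)$ of them. Then show they stay active for all $t\le T$. Compare the initial margin plus the accumulated self-term drift against the accumulated cross terms and the Gaussian walk $\eta\sum_{t'<t}\langle\mathbf{n}_{y_i,r}^{(t')},\boldsymbol{\xi}_i\rangle$, uniformly in $t$ via a maximal inequality. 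This also needs a smallness condition on $\sigma_n$ relative to the drift, which Condition~\ref{condition} does not contain.
\end{enumerate}

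\textbf{Separate issue: clipping factors.} Your cross-term domination is off by the clipping ratio. You use $h\le 1$ in the cross terms and $h\ge\Omega(\Lambda_{y_i}\sqrt m)$ in the self term. Then Condition~\ref{condition}(a) gives domination only when $c_1\Lambda_{y_i}\sqrt m\gg\kappa$. That is not guaranteed once clipping is active, since $\Lambda_{y_i}\sqrt m$ may then be far below $1$. You also need the clipping multipliers of different samples to be comparable up to constants.
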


\begin{proof}
    
 We prove the first statement by induction. First, we show the conclusions hold at iteration 0. At iteration 0, with probability at least $1 - \delta$, for all $(\mathbf{x}, y) \in \mathcal{S}$, by Condition~\ref{condition} and Lemma~\ref{lem:17}, we have
\begin{equation}
    0 \leq F_k(\mathbf{W}^{(0)}, \mathbf{x}) \leq C, 
\end{equation}
where $C > 0$ is a constant. Therefore, there exists a constant $\kappa > 0$ such that
\begin{equation}\label{eq:88}
    \frac{1 - \text{logit}_{y_i}(\mathbf{W}^{(0)}, \mathbf{x}_i)}{1 - \text{logit}_{y_j}(\mathbf{W}^{(0)}, \mathbf{x}_j)} \leq \kappa. 
\end{equation}
Suppose there exists $\bar{t}$ such that the conditions hold for any $0 \leq t \leq \bar{t}$. We aim to prove that the conclusions also hold for $t = \bar{t} + 1$. We consider the following two cases.

\noindent {Case 1:} $\frac{1 - \operatorname{logit}_{y_i}(\mathbf{W}^{(t)}, \mathbf{x}_i)}{1 - \operatorname{logit}_{y_j}(\mathbf{W}^{(t)}, \mathbf{x}_j)} < 0.9\kappa$. First, we have
\[
\frac{1 - \operatorname{logit}_{y_i}\left(\mathbf{W}^{(t+1)}, \mathbf{x}_i\right)}{1 - \operatorname{logit}_{y_i}\left(\mathbf{W}^{(t)}, \mathbf{x}_i\right)} = \frac{\sum_{k \neq y_i} \exp\left(F_k^{(t)}(\mathbf{x}_i)\right) \exp\left(\Delta_k^{(t)}(\mathbf{x}_i)\right)}{\sum_{k \neq y_i} \exp\left(F_k^{(t)}(\mathbf{x}_i)\right)} \frac{\sum_{k \in [K]} \exp\left(F_k^{(t)}(\mathbf{x}_i)\right)}{\sum_{k \in [K]} \exp\left(F_k^{(t)}(\mathbf{x}_i)\right) \exp\left(\Delta_k^{(t)}(\mathbf{x}_i)\right)},
\]
indicating that
\[
\frac{1 - \operatorname{logit}_{y_i}\left(\mathbf{W}^{(t+1)}, \mathbf{x}_i\right)}{1 - \operatorname{logit}_{y_i}\left(\mathbf{W}^{(t)}, \mathbf{x}_i\right)} \leq \frac{\max_{k \in [K]} \exp\left(\Delta_k^{(t)}(\mathbf{x}_i)\right)}{\min_{k \in [K]} \exp\left(\Delta_k^{(t)}(\mathbf{x}_i)\right)}.
\]
Moreover, we have
\[
|\Delta_k^{(t)}(\mathbf{x}_i)| \leq \eta \max_{k \in [K]} \left\{ \|\mathbf{u}_k\|_2^2 + \|\boldsymbol{\xi}\|_2^2 \right\} \leq \eta \max_{k \in [K]} \left\{ \|\mathbf{u}_k\|_2^2 + \frac{3}{2} \operatorname{Tr}\left(\mathbf{A}_k^\top \mathbf{A}_k\right) \right\} \leq 0.02,
\]
where the inequalities are by Lemma~\ref{lem:b13} and Condition~\ref{condition}. Then, we have
\begin{align*}
&\frac{1 - \operatorname{logit}_{y_i}(\mathbf{W}^{(t+1)}, \mathbf{x}_i)}{1 - \operatorname{logit}_{y_j}(\mathbf{W}^{(t+1)}, \mathbf{x}_j)} \\
&= \frac{1 - \operatorname{logit}_{y_i}(\mathbf{W}^{(t+1)}, \mathbf{x}_i)}{1 - \operatorname{logit}_{y_i}(\mathbf{W}^{(t)}, \mathbf{x}_i)} \frac{1 - \operatorname{logit}_{y_j}(\mathbf{W}^{(t)}, \mathbf{x}_j)}{1 - \operatorname{logit}_{y_j}(\mathbf{W}^{(t+1)}, \mathbf{x}_j)} \frac{1 - \operatorname{logit}_{y_i}(\mathbf{W}^{(t)}, \mathbf{x}_i)}{1 - \operatorname{logit}_{y_j}(\mathbf{W}^{(t)}, \mathbf{x}_j)} \\
&\leq \frac{1 - \operatorname{logit}_{y_i}(\mathbf{W}^{(t)}, \mathbf{x}_i)}{1 - \operatorname{logit}_{y_j}(\mathbf{W}^{(t)}, \mathbf{x}_j)} \cdot \exp(0.09) \\
&\leq \kappa.
\end{align*}

\noindent Case 2: $\frac{1 - \operatorname{logit}_{y_i}(\mathbf{W}^{(t)}, \mathbf{x}_i)}{1 - \operatorname{logit}_{y_j}(\mathbf{W}^{(t)}, \mathbf{x}_j)} > 0.9\kappa > 1$. We have
\begin{equation}
\begin{aligned}
& \frac{1 - \operatorname{logit}_{y_i}\left(\mathbf{W}^{(t+1)}, \mathbf{x}_i\right)}{1 - \operatorname{logit}_{y_j}\left(\mathbf{W}^{(t+1)}, \mathbf{x}_j\right)} \\
\leq & \frac{\sum_{k \neq y_j} \exp \left(F_k^{(t)}(\mathbf{x}_j)\right) + \exp \left(F_{y_j}^{(t)}(\mathbf{x}_j)\right) \exp \left(\Delta_{y_j}^{(t)}(\mathbf{x}_j) - \min_{k \neq y_j} \Delta_k^{(t)}(\mathbf{x}_j)\right)}{\sum_{k \neq y_i} \exp \left(F_k^{(t)}(\mathbf{x}_i)\right) + \exp \left(F_{y_i}^{(t)}(\mathbf{x}_i)\right) \exp \left(\Delta_{y_i}^{(t)}(\mathbf{x}_i) - \max_{k \neq y_i} \Delta_k^{(t)}(\mathbf{x}_i)\right)} \frac{\sum_{k \neq y_i} \exp \left(F_k^{(t)}(\mathbf{x}_i)\right)}{\sum_{k \neq y_j} \exp \left(F_k^{(t)}(\mathbf{x}_j)\right)} \\
= & \frac{1 + \left( \frac{1}{1 - \operatorname{logit}_{y_j}(\mathbf{W}^{(t)}, \mathbf{x}_j)} - 1 \right) \exp \left(\Delta_{y_j}^{(t)}(\mathbf{x}_j) - \min_{k \neq y_j} \Delta_k^{(t)}(\mathbf{x}_j)\right)}{1 + \left( \frac{1}{1 - \operatorname{logit}_{y_i}(\mathbf{W}^{(t)}, \mathbf{x}_i)} - 1 \right) \exp \left(\Delta_{y_i}^{(t)}(\mathbf{x}_i) - \max_{k \neq y_i} \Delta_k^{(t)}(\mathbf{x}_i)\right)} \\
\leq & \max \left\{ 1, \kappa \exp \left(\Delta_{y_j}^{(t)}(\mathbf{x}_j) - \min_{k \neq y_j} \Delta_k^{(t)}(\mathbf{x}_j) - \Delta_{y_i}^{(t)}(\mathbf{x}_i) + \max_{k \neq y_i} \Delta_k^{(t)}(\mathbf{x}_i)\right) \right\}
\end{aligned} 
\end{equation}

\noindent Denote $l_1 = \arg\min_{k \neq y_j} \Delta_k^{(t)}(\mathbf{x}_j)$ and $l_2 = \arg\max_{k \neq y_i} \Delta_k^{(t)}(\mathbf{x}_i)$. We have
\begin{equation}
\begin{aligned}
& \Delta_{l_2}^{(t)}(\mathbf{x}_i) - \Delta_{y_i}^{(t)}(\mathbf{x}_i) - \Delta_{l_1}^{(t)}(\mathbf{x}_j) + \Delta_{y_j}^{(t)}(\mathbf{x}_j) \\
\leq & \frac{\eta}{n} \sum_{k \neq i} \left(1 - \operatorname{logit}_{y_k}(\mathbf{W}^{(t)}, \mathbf{x}_k)\right) | \langle \xi_i, \xi_k \rangle | - \frac{2\eta}{5n} \left(1 - \operatorname{logit}_{y_i}(\mathbf{W}^{(t)}, \mathbf{x}_i)\right) \|\xi_i\|_2^2 \\
& + \frac{\eta}{n} \sum_{k \neq j} \left(1 - \operatorname{logit}_{y_k}(\mathbf{W}^{(t)}, \mathbf{x}_k)\right) | \langle \xi_j, \xi_k \rangle | + \frac{\eta}{n} \left(1 - \operatorname{logit}_{y_j}(\mathbf{W}^{(t)}, \mathbf{x}_j)\right) \|\xi_j\|_2^2 \\
& - \frac{2\eta}{5n} \left(1 - \operatorname{logit}_{y_i}(\mathbf{W}^{(t)}, \mathbf{x}_i)\right) \|\mathbf{u}_i\|_2^2 + \frac{\eta}{n} \left(1 - \operatorname{logit}_{y_j}(\mathbf{W}^{(t)}, \mathbf{x}_j)\right) \|\mathbf{u}_j\|_2^2 \\
\leq & 0,
\end{aligned} 
\end{equation}

by letting $\kappa > \Theta(\max\{\operatorname{Tr}(\mathbf{A}_{y_j}^\top \mathbf{A}_{y_j}) / \operatorname{Tr}(\mathbf{A}_{y_i}^\top \mathbf{A}_{y_i}), \|\mathbf{u}_j\|_2^2 / \|\mathbf{u}_i\|_2^2\})$. Then the last inequality is by $\frac{1 - \operatorname{logit}_{y_i}(\mathbf{W}^{(t)}, \mathbf{x}_i)}{1 - \operatorname{logit}_{y_j}(\mathbf{W}^{(t)}, \mathbf{x}_j)} > 0.9\kappa$ and Lemma~\ref{lem:b13}. Therefore, we have
\begin{equation}
    \frac{1 - \operatorname{logit}_{y_i}(\mathbf{W}^{(t+1)}, \mathbf{x}_i)}{1 - \operatorname{logit}_{y_j}(\mathbf{W}^{(t+1)}, \mathbf{x}_j)} \leq \kappa. 
\end{equation}

This completes the proof of the induction.

Next, we prove the second statement. For a data sample $(\mathbf{x}_i, y_i) \in \mathcal{S}$ and a neuron in $\mathcal{S}_i^{(t)}$, we have
\begin{equation}
\begin{aligned}
    \langle \mathbf{w}_{y_i, r}^{(t+1)}, \xi_i \rangle = & \langle \mathbf{w}_{y_i, r}^{(t)}, \xi_i \rangle + \frac{\eta}{mn}h(C,\mathbf{x}_i,y_i) (1 - \operatorname{logit}_{y_i}(\mathbf{x}_i)) \sigma' (\langle \mathbf{w}_{y_i, r}^{(t)}, \xi_i \rangle) \|\xi_i\|_2^2 \\
    & - \frac{\eta}{mn} \sum_{i' \neq i} h(C,\mathbf{x}_i,y_i)\operatorname{logit}_{y_i}(\mathbf{x}_i) \sigma' (\langle \mathbf{w}_{y_i, r}^{(t)}, \xi_{i'} \rangle) \langle \xi_{i'}, \xi_i \rangle+\eta\langle\mathbf{n}_{y_i,r}^{(t)},\xi_{i}\rangle \\
    \geq & \langle \mathbf{w}_{y_i, r}^{(t)}, \xi_i \rangle,
\end{aligned}
\end{equation}
where the inequality is by Lemma~\ref{lem:b13}, Condition~\ref{condition} and (\ref{eq:88}). Thus, we have $\mathcal{S}_i^{(t)} \subseteq \mathcal{S}_i^{(t+1)}$. This completes the proof. 

\end{proof}

\begin{lemma}\label{lem:d2}
    \textit{Under Condition~\ref{condition}, for any $j \in [K], l \in [K] \setminus \{j\}, (\mathbf{x}_q, y_q) \in \mathcal{S}_j, (\mathbf{x}_a, y_a) \in \mathcal{S}_l$, and $r \in \mathcal{S}_q^{(0)}$,}
\begin{equation*}
\begin{aligned}
&\sum_{t'=0}^{t-1} h(C,\mathbf{x}_q,y_q)(1 - \operatorname{logit}_{y_q}(\mathbf{W}^{(t')}, \mathbf{x}_q)) \sigma'(\langle \mathbf{w}_{j,r}^{(t')}, \boldsymbol{\xi}_q \rangle) \\
= & \Omega \left( \frac{1}{n} \sum_{t'=0}^{t-1} (h(C,\mathbf{x}_a,y_a)\operatorname{logit}_j(\mathbf{W}^{(t')}, \mathbf{x}_a)) \sigma'(\langle \mathbf{w}_{j,r}^{(t')}, \boldsymbol{\xi}_a \rangle)  \frac{\operatorname{Tr}(\mathbf{A}_l^\top \mathbf{A}_l)}{\max_{l_1, l_2 \in [K]} \|\mathbf{A}_{l_1}^\top \mathbf{A}_{l_2}\|_F^{1/2} \log(3n^2/\delta)^{1/2}} \right).
\end{aligned}
\end{equation*}

\noindent \textit{Proof.} By the update rule of gradient descent, we have
\begin{equation*}
\begin{aligned}
\langle \mathbf{w}_{j,r}^{(t+1)} - \mathbf{w}_{j,r}^{(t)}, \boldsymbol{\xi}_a \rangle \leq & \frac{\eta}{mn} \sum_{(\mathbf{x}_i, y_i) \in \mathcal{S} \setminus (\mathbf{x}_a, y_a)}h(C,\mathbf{x}_i,y_i) (1 - \operatorname{logit}_j(\mathbf{W}^{(t)}, \boldsymbol{\xi}_i)) \sigma'(\langle \mathbf{w}_{j,r}^{(t)}, \boldsymbol{\xi}_i \rangle) |\langle \boldsymbol{\xi}_i, \boldsymbol{\xi}_a \rangle| \\
& + \frac{\eta}{mn}h(C,\mathbf{x}_a,y_a) (-\operatorname{logit}_y(\mathbf{W}^{(t)}, \mathbf{x}_a)) \sigma'(\langle \mathbf{w}_{j,r}^{(t)}, \boldsymbol{\xi}_a \rangle) \|\boldsymbol{\xi}_a\|_2^2+\eta\langle\mathbf{n}_{j,r}^{(t)},\xi_{a}\rangle.
\end{aligned}
\end{equation*}
By Lemma~\ref{lem:b13}, we have
\begin{equation}\label{eq:99}
\begin{aligned}
&\langle \mathbf{w}_{j,r}^{(t+1)} - \mathbf{w}_{j,r}^{(0)}, \boldsymbol{\xi}_a \rangle + \frac{\eta}{mn} \sum_{t'=0}^{t-1} h(C,\mathbf{x}_a,y_a)(\operatorname{logit}_j(\mathbf{W}^{(t')}, \mathbf{x}_a)) \sigma'(\langle \mathbf{w}_{j,r}^{(t')}, \boldsymbol{\xi}_a \rangle) \|\boldsymbol{\xi}_a\|_2^2 \\
\leq & \frac{\eta}{mn} \sum_{(\mathbf{x}, y) \in \mathcal{S} \setminus (\mathbf{x}_a, y_a)} \sum_{t'=0}^t h(C,\mathbf{x},y)(1 - \operatorname{logit}_y(\mathbf{W}^{(t')}, \mathbf{x})) \underbrace{\max_{l,y \in [K]} \|\mathbf{A}_l^\top \mathbf{A}_y\|_F^{1/2} \log \left( \frac{3n^2}{\delta} \right)^{1/2}}_{E_1}.
\end{aligned}
\end{equation}
Additionally, by the nature of ReLU activation function, the magnitude of $\langle \mathbf{w}_{j,r}^{(t+1)} - \mathbf{w}_{j,r}^{(0)}, \boldsymbol{\xi}_a \rangle$ satisfies
\begin{equation}\label{eq:100}
\langle \mathbf{w}_{j,r}^{(t+1)} - \mathbf{w}_{j,r}^{(0)}, \boldsymbol{\xi}_a \rangle \geq - \frac{\eta}{mn} \sum_{(\mathbf{x}, y) \in \mathcal{S} \setminus (\mathbf{x}_a, y_a)} \sum_{t'=0}^t h(C,\mathbf{x},y)(1 - \operatorname{logit}_y(\mathbf{W}^{(t')}, \mathbf{x})) E_1 - \frac{\eta}{mn} h(C,\mathbf{x}_a,y_a)\|\boldsymbol{\xi}_a\|_2^2.
\end{equation}
As the learning rate $\eta$ is small (by Condition~\ref{condition}), combining (\ref{eq:99}) and (\ref{eq:100}), for any $(\mathbf{x}, y) \sim \mathcal{D}$, we have

\begin{equation*}
\begin{aligned}
& \frac{\eta}{mn} \sum_{t'=0}^{t-1} (h(C,\mathbf{x}_a,y_a)\operatorname{logit}_j(\mathbf{W}^{(t')}, \mathbf{x}_a)) \sigma'(\langle \mathbf{w}_{j,r}^{(t')}, \boldsymbol{\xi}_a \rangle) \|\boldsymbol{\xi}_a\|_2^2 \\
= & \mathcal{O} \left( \frac{\eta}{mn} \sum_{(\mathbf{x},y) \in \mathcal{S} \setminus (\mathbf{x}_a, y_a)} \sum_{t'=0}^{t} h(C,\mathbf{x},y)(1 - \operatorname{logit}_y(\mathbf{W}^{(t')}, \mathbf{x})) E_1 \right),
\end{aligned}
\end{equation*}

By Lemma~\ref{lem:d1}, for $r \in \mathcal{S}_q^{(0)}$, we have
\begin{equation}\label{eq:102}
\begin{aligned}
& \sum_{t'=0}^{t-1}h(C,\mathbf{x}_q,y_q) (1 - \operatorname{logit}_{y_q}(\mathbf{W}^{(t')}, \mathbf{x}_q)) \sigma'(\langle \mathbf{w}_{j,r}^{(t')}, \boldsymbol{\xi}_q \rangle) \\
= & \Omega \left( \frac{1}{n} \sum_{t'=0}^{t-1} (h(C,\mathbf{x}_a,y_a)\operatorname{logit}_j(\mathbf{W}^{(t')}, \mathbf{x}_a)) \sigma'(\langle \mathbf{w}_{j,r}^{(t')}, \boldsymbol{\xi}_a \rangle) \frac{\|\boldsymbol{\xi}_a\|_2^2}{E_1} \right).
\end{aligned}
\end{equation}

By Lemma~\ref{lem:b13}, we have
\begin{equation}\label{eq:103}
\frac{\|\boldsymbol{\xi}_a\|_2^2}{E_1} = \Omega \left( \frac{\operatorname{Tr}(\mathbf{A}_l^\top \mathbf{A}_l)}{\max_{l_1, l_2 \in [K]} \|\mathbf{A}_{l_1}^\top \mathbf{A}_{l_2}\|_F^{1/2} \log(3n^2/\delta)^{1/2}} \right).
\end{equation}

Combining (\ref{eq:102}) and (\ref{eq:103}) yields the conclusion. This completes the proof.
\end{lemma}

\begin{lemma}\label{lem:d3}
     \textit{Under Condition~\ref{condition}, for any $j \in [K]$, we have}
\begin{equation}
    \left\| \left( \mathbf{w}_j^{(T)} \right)^\top \mathbf{A}_j \right\|_2 \leq \mathcal{O}(\sqrt{m} \log(T)). 
\end{equation}

\noindent \textit{Proof.} Due to Assumption~\ref{ass:non-perfect}, for any $t \in [0, {T}]$, the loss of data $(\mathbf{x}, y) \in \mathcal{S}$ satisfies
\begin{equation}\label{eq:105}
    \operatorname{logit}_y(\mathbf{W}^{(t)},\mathbf{x}) \leq c_5, 
\end{equation}
where $c_5$ is a constant. Then, the loss satisfies
\begin{equation}
\begin{aligned}
    & (1 - c_5) \exp \left( \sigma \left( \langle \mathbf{w}_{y,r}^{(T)}, \boldsymbol{\xi} \rangle \right) \right) \\
    \leq & (1 - c_5) \exp \left( \sigma \left( \langle \mathbf{w}_{y,r}^{(T)}, \mathbf{u}_y \rangle \right) + \sigma \left( \langle \mathbf{w}_{y,r}^{(T)}, \boldsymbol{\xi} \rangle \right) \right) \\
    \leq & c_5 \sum_{j \neq y} \exp \left( \sigma \left( \langle \mathbf{w}_{j,r}^{(T)}, \mathbf{u}_j \rangle \right) + \sigma \left( \langle \mathbf{w}_{j,r}^{(T)}, \boldsymbol{\xi} \rangle \right) \right) \\
    \leq & c_5 K \exp \left( \sqrt{2 \log \left( \frac{4Km}{\delta} \right)} \|\mathbf{u}_k\|_2 \sigma_0 + \mathcal{O} \left( \log \left( \frac{Km}{\delta} \right) \|\mathbf{A}_y\|_F \sigma_0 \right) + \frac{\eta}{mn} \|\boldsymbol{\xi}\|_2^2 \right) \\
    \leq & 1.1 c_5 K \exp \left( \frac{3\eta}{2mn} \operatorname{Tr} \left( \mathbf{A}_y^\top \mathbf{A}_y \right) \right),
\end{aligned} 
\end{equation}
where the first inequality is by the fact that $\sigma \left( \langle \mathbf{w}_{y,r}^{(T)}, \mathbf{u}_y \rangle \right) \geq 0$, the second inequality is by the definition of softmax function and (\ref{eq:105}), the third inequality is by Lemma~\ref{lem:17} and the fourth inequality is by Lemma~\ref{lem:b13} and Condition~\ref{condition}. Letting $c_5$ be $T / (1.1K + T)$ yields
\begin{equation}\label{eq:107}
    \langle \mathbf{w}_{y,r}^{(T)}, \boldsymbol{\xi} \rangle \leq \log \left( \frac{c_5}{1 - c_5} K \right) + \frac{3\eta}{2mn} \operatorname{Tr} \left( \mathbf{A}_y^\top \mathbf{A}_y \right) \leq \mathcal{O}(\log(T)). 
\end{equation}
With probability of $1 - \delta$, for $n$ randomly sampled data $(\mathbf{x}', y) \sim \mathcal{D}_y$, we have
\begin{equation}\label{eq:109}
\begin{aligned}
    \langle \mathbf{w}_{y,r}^{(T)}, \boldsymbol{\xi}' \rangle & \leq \mathcal{O} \left( \frac{\eta}{mn} \sum_{i \in \mathcal{S}} \sum_{t=0}^{T-1} (1 - \operatorname{logit}(\mathbf{W}^{(t)}, \mathbf{x})) |\langle \boldsymbol{\xi}_i, \boldsymbol{\xi}' \rangle|+\eta\sum_{t=0}^{T-1}\langle\mathbf{n}_{y,r}^{(t)},\boldsymbol{\xi'}\rangle \right) \\
    & \leq \mathcal{O} \left( \frac{\eta}{mn} \sum_{t=0}^{T-1} (1 - \operatorname{logit}(\mathbf{W}^{(t)}, \mathbf{x})) \|\boldsymbol{\xi}\|_2^2 +\eta\sum_{t=0}^{T-1}\langle\mathbf{n}_{y,r}^{(t)},\boldsymbol{\xi}\rangle\right) = \Theta(\langle \mathbf{w}_{y,r}^{(T)}, \boldsymbol{\xi} \rangle) = \left( \mathbf{w}_{y,r}^{(T)} \right)^\top \mathbf{A}_y \boldsymbol{\zeta},
\end{aligned}
\end{equation}
With probability $1 - \delta$, at least one sample $(\mathbf{x}', y)$ satisfies $\langle \mathbf{w}_{y,r}^{(T)}, \boldsymbol{\xi}' \rangle = \Theta \left( \left\| \left( \mathbf{w}_{y,r}^{(T)} \right)^\top \mathbf{A}_y \right\|_2 \right)$ by the property of $\mathcal{D}_{\zeta}$. Therefore, we have
\begin{equation}\label{eq:110}
    \langle \mathbf{w}_{y,r}^{(T)}, \boldsymbol{\xi} \rangle = \Omega \left( \left\| \left( \mathbf{w}_{y,r}^{(T)} \right)^\top \mathbf{A}_y \right\|_2 \right).
\end{equation}
Combining (\ref{eq:107}) and (\ref{eq:110}) completes the proof.
\end{lemma}

\begin{lemma}\label{lem:d4}
    \textit{Under condition~\ref{condition}, for a random vector $\boldsymbol{\xi}$ generated from $\mathbf{A}_j\boldsymbol{\zeta}$, $\boldsymbol{\zeta} \sim \mathcal{D}_{\boldsymbol{\zeta}}$ for any $j \in [K]$, with probability at least $1 - \delta$, we have}
\begin{equation}
    \sum_{r=1}^m \mathbb{I}\left( \langle \mathbf{w}_{j,r}^{(T)}, \boldsymbol{\xi} \rangle \right) \geq 0.1m. 
\end{equation}

\noindent \textit{Proof.} First, we can concatenate the neuron weights for class $j \in [K]$ as
\begin{equation}
    \mathbf{W}_j^{(T)} \mathbf{A}_j = \begin{bmatrix} (\mathbf{w}_{j,1}^{(T)})^\top \\ \vdots \\ (\mathbf{w}_{j,m}^{(T)})^\top \end{bmatrix} \mathbf{A}_j = \underbrace{\begin{bmatrix} (\mathbf{w}_{j,1}^{(0)})^\top \\ \vdots \\ (\mathbf{w}_{j,m}^{(0)})^\top \end{bmatrix} \mathbf{A}_j}_{\mathbf{D}_1} + \underbrace{\begin{bmatrix} \beta_{j,1,1} & \cdots & \beta_{j,1,n} \\ \vdots & \vdots & \vdots \\ \beta_{j,m,1} & \cdots & \beta_{j,m,n} \end{bmatrix} \begin{bmatrix} \boldsymbol{\xi}_1^\top \\ \vdots \\ \boldsymbol{\xi}_n^\top \end{bmatrix} \mathbf{A}_j}_{\mathbf{D}_2}, 
\end{equation}
where $\beta_{j,r,i} := \sum_{t'=0}^{T-1} \sigma'(\langle \mathbf{w}_{j,r}^{(t')}, \boldsymbol{\xi}_i \rangle) \operatorname{logit}(\mathbf{W}^{(t')}, \mathbf{x}_i)$. The rank of matrix $\mathbf{D}_2 \mathbf{A}_j$ satisfies
\begin{equation}
    \operatorname{rank}(\mathbf{D}_2 \mathbf{A}_j) \leq \min\{m, n, d, \operatorname{rank}(\mathbf{A}_j)\} = n. 
\end{equation}
In addition, as the matrix $\mathbf{D}_1$ is a Gaussian random matrix, it has full rank almost surely. We have
\begin{equation}
    \operatorname{rank}(\mathbf{D}_1 \mathbf{A}_j) = \min\{m, d, \operatorname{rank}(\mathbf{A}_j)\}, 
\end{equation}
almost surely. Then by Condition~\ref{condition}, the rank of $\mathbf{W}_j^{(T)} \mathbf{A}_j$ satisfies
\begin{equation}
    \operatorname{rank}(\mathbf{W}_j^{(T)} \mathbf{A}_j) \geq \min\{m, d, \operatorname{rank}(\mathbf{A}_j)\} - n \geq 0.9m. 
\end{equation}
In addition, by Lemma~\ref{lem:b1} and Condition~\ref{condition}, the singular value of $\mathbf{W}_j^{(T)} \mathbf{A}_j$ satisfies
\begin{equation}
    \lambda_{\min\{m, d, \operatorname{rank}(\mathbf{A}_j)\} - n}(\mathbf{W}_j^{(T)} \mathbf{A}_j) \geq 0.1\sqrt{m}\sigma_0. 
\end{equation}
Moreover, by Lemma~\ref{lem:d3}, we have
\begin{equation}
    \lambda_1(\mathbf{W}_j^{(T)} \mathbf{A}_j) \leq \|\mathbf{W}_j^{(T)} \mathbf{A}_j\|_F \leq \mathcal{O}(\sqrt{m} \log(T)). 
\end{equation}

\noindent Therefore, according to Condition~\ref{condition}, we have
\begin{equation*}
    m \geq \Omega \left( \frac{\log(n/\delta) \log(T)^2}{n\sigma_0^2} \right).
\end{equation*}
By Lemma~\ref{lem:b12} and Condition~\ref{condition}, with probability at least $1 - \delta$, we have
\begin{equation*}
    \sum_{r=1}^m \mathbb{I}\left( \langle \mathbf{w}_{j,r}^{(T)}, \boldsymbol{\xi} \rangle \right) \geq 0.1m.
\end{equation*}
This completes the proof. 
\end{lemma}

\subsection{Proof of Statement 1 in Theorem~\ref{thm:testerror}}
In this part, we prove Statement 1 in Theorem~\ref{thm:testerror}. For data samples following $(\mathbf{x}, y) \sim \mathcal{D}$, the test error satisfies
\begin{equation}
\begin{aligned}
L_{\mathcal{D}}(\mathbf{W}^{(t)}) &= \mathbb{P} \left[ \arg \max_k F_k(\mathbf{W}^{(T)}, \mathbf{x}, y) \neq y \right] \\
&\le \sum_{j \neq y} \mathbb{P} \left[ F_y(\mathbf{W}^{(T)}, \mathbf{x}, y) \le F_j(\mathbf{W}^{(T)}, \mathbf{x}, y) \right] \\
&= \sum_{j \neq y} \mathbb{P} \left[ \sum_{r=1}^m \sigma(\langle \mathbf{w}_{y,r}^{(T)}, \mathbf{u}_y \rangle) + \sigma(\langle \mathbf{w}_{y,r}^{(T)}, \boldsymbol{\xi} \rangle) \le \sum_{r=1}^m \sigma(\langle \mathbf{w}_{j,r}^{(T)}, \mathbf{u}_y \rangle) + \sigma(\langle \mathbf{w}_{j,r}^{(T)}, \boldsymbol{\xi} \rangle) \right].
\end{aligned}
\end{equation}

For the features, we bound the loss for any $(\mathbf{x}, y) \sim \mathcal{D}$ through
\begin{equation}\label{eq:121}
\mathcal{L}(\mathbf{W}^{(T)}, \mathbf{x}, y) \le \sum_{j \neq y} \mathbb{P} \left[ \sum_{r=1}^m \sigma(\langle \mathbf{w}_{y,r}^{(T)}, \mathbf{u}_y \rangle) \le \sum_{r=1}^m \sigma(\langle \mathbf{w}_{j,r}^{(T)}, \mathbf{u}_y \rangle) + \sigma(\langle \mathbf{w}_{j,r}^{(T)}, \boldsymbol{\xi} \rangle) \right].
\end{equation}

Then, we can bound the model outputs as
\begin{equation}\label{eq:122}
\begin{aligned}
&\sum_{r=1}^m \sigma \left( \langle \mathbf{w}_{y,r}^{(T)}, \mathbf{u}_y \rangle \right) \\
&= \sum_{r=1}^m \sigma \left( \left\langle \mathbf{w}_{y,r}^{(0)} + \frac{\eta}{B} \sum_{t'=1}^T \sum_{(\mathbf{x}_l, y_l) \in \mathcal{S}_y^{(t)}} (1 - \text{logit}_y(\mathbf{W}^{(t')}, \mathbf{x}_l)) {h(C,x_l,y_l)}\sigma'(\langle \mathbf{w}_{y,r}^{(t')}, \mathbf{u}_y \rangle) \mathbf{u}_y {+\eta\cdot\sum_{t'=1}^{T} n_{y,r}^{(t')}}, \mathbf{u}_y \right\rangle \right) \\
&\ge \frac{3m}{10} \frac{\eta}{n} \sum_{t'=1}^T \sum_{(\mathbf{x}_l, y_l) \in \mathcal{S}_y} (1 - \text{logit}_y(\mathbf{W}^{(t')}, \mathbf{x}_l)) \|\mathbf{u}_y\|_2^2{\cdot \left( \frac{C \sqrt{m}}{\|\mathbf{u}_{y}\|_2 + \sqrt{Tr(A_y^{T}A_y)}} \right)-\eta T\sigma_n \|\mathbf{u}_y\|_2 \sqrt{2 \log (2/\delta)}},
\end{aligned}
\end{equation}
where the last inequality is by Lemma~\ref{lem:nmu} and the bound of the clipping multiplier.
And
\begin{equation}\label{eq:123}
\sum_{r=1}^m \sigma(\langle \mathbf{w}_{j,r}^{(T)}, \boldsymbol{\xi} \rangle) \le \sum_{r=1}^m |\langle \mathbf{w}_{j,r}^{(T)}, \boldsymbol{\xi} \rangle| = \sum_{r=1}^m \|\mathbf{A}_k^\top \mathbf{w}_{j,r}^{(T)}\|_2 |\zeta'|,
\end{equation}
where $\zeta'$ is a sub-Gaussian variable. Suppose that $\delta > 0$ and $\|\mathbf{A}_y^\top \mathbf{A}_j\|_F / \|\mathbf{A}_y^\top \mathbf{A}_j\|_{\text{op}} \geq \Theta(\sqrt{\log(K^2/\delta)})$. For the term $\|\mathbf{A}_y^\top \mathbf{w}_{j,r}^{(T)}\|_2$, with probability at least $1 - \delta$, we have

\begin{align}\label{eq:124}
& \left\| \mathbf{A}_y^\top \mathbf{w}_{j,r}^{(T)} \right\|_2 \nonumber \\
&= \left\| \mathbf{A}_y^\top \left( \mathbf{w}_{j,r}^{(0)} + \frac{\eta}{n} \sum_{t'=0}^{T-1} \sum_{(\mathbf{x}, y) \in \mathcal{S}_j} (1 - \text{logit}_j(\mathbf{W}^{(t')}, \mathbf{x})) {h(C,x,y)}\sigma'(\langle \mathbf{w}_{j,r}^{(t')}, \boldsymbol{\xi} \rangle) \boldsymbol{\xi} \right. \right. \nonumber \\
& \quad \left. \left. - \frac{\eta}{n} \sum_{(\mathbf{x}, y) \in \mathcal{S} \setminus \mathcal{S}_j} \text{logit}_j(\mathbf{W}^{(t')}, \mathbf{x}) {h(C,x,y)}\sigma'(\langle \mathbf{w}_{j,r}^{(t')}, \boldsymbol{\xi} \rangle) \boldsymbol{\xi}{+\eta\cdot\sum_{t'=0}^{T-1}n_{j,r}^{(t)}} \right) \right\|_2 \nonumber \\
&\le \left\| \mathbf{A}_y^\top \mathbf{A}_j \left( \frac{\eta}{n} \sum_{t'=0}^{T-1} \sum_{(\mathbf{x}, y) \in \mathcal{S}_j} (1 - \text{logit}_j(\mathbf{W}^{(t')}, \mathbf{x})) \sigma'(\langle \mathbf{w}_{j,r}^{(t')}, \boldsymbol{\xi} \rangle) \boldsymbol{\zeta} \right) \right\|_2  \\
& \quad + \left\| \sum_{k \neq j} \mathbf{A}_y^\top \mathbf{A}_k \left( \frac{\eta}{n} \sum_{(\mathbf{x}, y) \in \mathcal{S}_k} \text{logit}_j(\mathbf{W}^{(t')}, \mathbf{x}) \sigma'(\langle \mathbf{w}_{j,r}^{(t')}, \boldsymbol{\xi} \rangle) \boldsymbol{\zeta} \right) \right\|_2 {+\eta\cdot T\sigma_n\sqrt{Tr({A_y^{\top}A_y})}}\nonumber \\
&\stackrel{(a)}{=} \Theta \left( \left\| \mathbf{A}_y^\top \left( \frac{\eta}{n} \sum_{t'=0}^{T-1} \sum_{(\mathbf{x}, y) \in \mathcal{S}_j} (1 - \text{logit}_j(\mathbf{W}^{(t')}, \mathbf{x})) \sigma'(\langle \mathbf{w}_{j,r}^{(t')}, \boldsymbol{\xi} \rangle) \boldsymbol{\xi} \right) \right\|_2 \right){+\eta\cdot T\sigma_n\sqrt{Tr({A_y^{\top}A_y})}} \nonumber \\
&\stackrel{(b)}{=} \Theta \left( \frac{\eta}{n} \|\mathbf{A}_y^\top \mathbf{A}_j\|_F \cdot \sqrt{ \sum_{(\mathbf{x}, y) \in \mathcal{S}_j} \left( \sum_{t'=0}^{T-1} (1 - \text{logit}_j(\mathbf{W}^{(t')}, \mathbf{x})) \right)^2 } \right){+\eta\cdot T\sigma_n\sqrt{\text{Tr}({\mathbf{A}_y^{\top}\mathbf{A}_y})}}, \nonumber
\end{align}

where the inequality is by Lemma~\ref{lem:d2} and Condition~\ref{condition}, $(a)$ is based on Lemma~\ref{lem:d2}, and $(b)$ is based on the concentration of random vectors (Theorem 6.2.6 in \citet{vershynin2018high}). Substituting (\ref{eq:122}) and (\ref{eq:123}) into (\ref{eq:121}), for any $(\mathbf{x}, y) \sim \mathcal{D}$ we have
\begin{equation}
\resizebox{1\linewidth}{!}{
$\begin{aligned}
& \mathcal{L}(\mathbf{W}^{(T)}, \mathbf{x}, y) \nonumber \\
&\le \sum_{j \neq y} \mathbb{P} \left[ \frac{m^{3/2}}{5} \frac{\eta}{n} \sum_{t'=0}^{T-1} \sum_{(\mathbf{x}, y) \in \mathcal{S}_y} (1 - \text{logit}_y(\mathbf{W}^{(t')}, \mathbf{x}))\Lambda_y \|\mathbf{u}_y\|_2^2 -\eta T\sigma_n \|\mathbf{u}_y\|_2 \sqrt{2 \log (2/\delta)}\le \sum_{r=1}^m |\langle \mathbf{w}_{j,r}^{(T)}, \boldsymbol{\xi} \rangle| + |\langle \mathbf{w}_{j,r}^{(T)}, \mathbf{u}_y \rangle| \right] \nonumber \\
&= \sum_{j \neq y} \mathbb{P} \left[ \frac{m^{3/2}}{5} \frac{\eta}{n} \sum_{t'=0}^{T-1} \sum_{(\mathbf{x}, y) \in \mathcal{S}_y} (1 - \text{logit}_y(\mathbf{W}^{(t')}, \mathbf{x})) \Lambda_y\|\mathbf{u}_y\|_2^2-\eta T\sigma_n \|\mathbf{u}_y\|_2 \sqrt{2 \log (2/\delta)} \le \sum_{r=1}^m \left\| \mathbf{A}_y^\top \mathbf{w}_{j,r}^{(T)} \right\|_2 |\zeta'| + |\langle \mathbf{w}_{j,r}^{(0)}, \mathbf{u}_y \rangle| \right]  \\
&\le \sum_{j \neq y} \mathbb{P} \left[ \sum_{t'=0}^{T-1} \sum_{(\mathbf{x}, y) \in \mathcal{S}_y} (1 - \text{logit}_y(\mathbf{W}^{(t')}, \mathbf{x}))\Lambda_y \|\mathbf{u}_y\|_2^2-\eta T\sigma_n \|\mathbf{u}_y\|_2 \sqrt{2 \log (2/\delta)} \right. \nonumber \\
& \quad \le c\left( \|\mathbf{A}_y^\top \mathbf{A}_j\|_F \cdot \sqrt{ \sum_{(\mathbf{x}, y) \in \mathcal{S}_j} \left( \sum_{t'=0}^{T-1} (1 - \text{logit}_j(\mathbf{W}^{(t')}, \mathbf{x})) \right)^2 }+\eta\cdot T\sigma_n\sqrt{\text{Tr}({\mathbf{A}_y^{\top}\mathbf{A}_y})}\right) |\zeta'| + \mathcal{O} \left( \frac{n}{\eta} \sqrt{\log \left( \frac{4Km}{\delta} \right)} \|\mathbf{u}_y\|_2 \sigma_0 \right) \left. \vphantom{\sum_{t'=0}^{T-1}} \right] \nonumber \\
&\le \sum_{j \neq y} \exp \left[ -c_1 \cdot \left(\frac{|\mathcal{S}_y|\Lambda_y \|\mathbf{u}_y\|_2^2-\sigma_n \|\mathbf{u}_y\|_2 \sqrt{2 \log (2/\delta)}}{\sqrt{|\mathcal{S}_j|} \|\mathbf{A}_y^\top \mathbf{A}_j\|_F+\sigma_n\sqrt{\text{Tr}({\mathbf{A}_y^{\top}\mathbf{A}_y})}} \right)^2\right], \nonumber
\end{aligned}
$}
\end{equation}
where $c, c_1 > 0$ are some constants and the last inequality is obtained from Hoeffding's inequality.
\subsection{Proof of Statement 2 in Theorem~\ref{thm:testerror}}

\begin{lemma}
    for $(\mathbf{x},y)$ sampled from L-long-tailed data distribution $T_i$ defined in Definition~\ref{def:longtail}, each $i \in [K]$, the following bound holds with probability at least $1-\delta$: 
    \[\|\boldsymbol{\xi}\|_2\leq O\left(\sqrt{L^2+d}\|\mathbf{A}_y\|_2\right)\]
    
\begin{proof}

For $(\mathbf{x},y)\sim T_i$, $\left\langle \sum_{r \in \mathcal{R}(\boldsymbol{\xi})} \mathbf{w}_{y,r}^{(T)}, \boldsymbol{\xi} \right\rangle \geq L \|\mathbf{A}_y^\top \sum_{r \in \mathcal{R}(\boldsymbol{\xi})} \mathbf{w}_{y,r}^{(T)}\|_2$

Let $u = \frac{\mathbf{A}^\top \mathbf{w}_{y,r}^{(T)}}{\|\mathbf{A}^\top \mathbf{w}_{y,r}^{(T)}\|_2}$ be the unit direction vector associated with the screening condition. By the linearity of the inner product, the condition $(\mathbf{x},y) \sim \mathcal{T}_i$ is equivalent to: $u^\top \boldsymbol{\xi} \geq L$

Due to the rotational invariance of $\mathcal{N}(0, I_d)$, we can decompose $\boldsymbol{\xi}$ as $\boldsymbol{\xi} = z u + \boldsymbol{\xi}_\perp$, where $z = u^\top \boldsymbol{\xi} \in \mathbb{R}$ is a scalar and $\boldsymbol{\xi}_\perp = (I - uu^\top)\boldsymbol{\xi} \in \mathbb{R}^d$ is the component orthogonal to $u$. Here, $z \sim \mathcal{N}(0, 1)$ and $\boldsymbol{\xi}_\perp \sim \mathcal{N}(0, I - uu^\top)$ are independent. The squared norm is $\|\boldsymbol{\xi}\|_2^2 = z^2 + \|\boldsymbol{\xi}_\perp\|_2^2$.

The term $\|\boldsymbol{\xi}_\perp\|_2^2$ follows a Chi-squared distribution with $d-1$ degrees of freedom, i.e., $\|\boldsymbol{\xi}_\perp\|_2^2 \sim \chi^2_{d-1}$. By the Laurent-Massart concentration inequality, with probability at least $1-\delta/2$:
\begin{equation}
    \|\boldsymbol{\xi}_\perp\|_2^2 \le d-1 + 2\sqrt{(d-1)\ln(2/\delta)} + 2\ln(2/\delta).
\end{equation}

Given the condition $z > L$, we examine the tail of the truncated Gaussian. For any $s > 0$:
\begin{equation}
    \mathbb{P}(z > L + s \mid z > L) = \frac{1 - \Phi(L+s)}{1 - \Phi(L)} \le \exp\left( -Ls - \frac{s^2}{2} \right).
\end{equation}
By setting this probability to $\delta/2$, we find that with probability at least $1-\delta/2$, $z \le L + \frac{\ln(2/\delta)}{L}$. Squaring this term yields $z^2 \le L^2 + 2\ln(2/\delta) + o(1)$.

Applying a union bound over the events, we obtain:
\begin{equation}
    \|\boldsymbol{\xi}\|_2^2 \le d + L^2 + 2\sqrt{d\ln(2/\delta)} + 4\ln(2/\delta).
\end{equation}
Taking the square root and using the property $\|\mathbf{A}_y\boldsymbol{\xi}\|_2 \le \|\mathbf{A}_y\|_{\text{op}} \|\boldsymbol{\xi}\|_2$ completes the proof.
    \end{proof}
\end{lemma}

\textit{Proof of Statement 2:} Furthermore, for long-tailed data distribution, by Lemma~\ref{lem:d2} and union bound, we have
\begin{equation} \label{eq:126}
\mathcal{L}(\mathbf{W}^{(t)}, \mathbf{x}, y) \leq \sum_{j \neq y} \mathbb{P} \left[ \sum_{r=1}^m \sigma \left( \left\langle \mathbf{w}_{y,r}^{(t)}, \boldsymbol{\xi} \right\rangle \right) \leq \sum_{r=1}^m \sigma \left( \left\langle \mathbf{w}_{j,r}^{(T)}, \mathbf{u}_y \right\rangle \right) + \sigma \left( \left\langle \mathbf{w}_{j,r}^{(t)}, \boldsymbol{\xi} \right\rangle \right) \right].
\end{equation}

First, we consider the model behavior under non-DP training, and select long-tailed data by Definition~\ref{def:longtail}. Then, we bound the gap between DP and non-DP setting.

Suppose $\delta > 0$, by Condition~\ref{condition}, we have $\|\mathbf{A}_y^\top \mathbf{A}_y\|_F / \|\mathbf{A}_y^\top \mathbf{A}_y\|_{\text{op}} \geq \Theta(\sqrt{\log(K^2/\delta)})$, for any $y \in [K]$. With probability at least $1 - \delta$, for all $y \in [K]$, under \textbf{non-DP} setting, we have
\begin{equation} \label{eq:127}
\begin{aligned}
    & \left\| \mathbf{A}_y^\top \mathbf{w}_{y,r,clean}^{(T)} \right\|_2 \\
    &= \left\| \mathbf{A}_y^\top \left( \mathbf{w}_{y,r,clean}^{(0)} + \frac{\eta}{n} \sum_{t'=0}^{T-1} \sum_{(\mathbf{x},y) \in \mathcal{S}_y}(1 - \text{logit}_y(\mathbf{W}^{(t')}, \mathbf{x})) \sigma'(\langle \mathbf{w}_{y,r,clean}^{(t')}, \boldsymbol{\xi} \rangle) \boldsymbol{\xi} \right. \right. \\
    & \quad \left. \left. - \frac{\eta}{n} \sum_{t'=0}^{T-1} \sum_{(\mathbf{x},y) \in \mathcal{S} \setminus \mathcal{S}_y} \text{logit}_y(\mathbf{W}^{(t')}, \mathbf{x}) \sigma'(\langle \mathbf{w}_{y,r,clean}^{(t')}, \boldsymbol{\xi} \rangle) \boldsymbol{\xi} \right) \right\|_2 \\
    &\geq \left\| \mathbf{A}_y^\top \mathbf{A}_y \left( \frac{\eta}{n} \sum_{t'=0}^{T-1} \sum_{(\mathbf{x},y) \in \mathcal{S}_y} (1 - \text{logit}_y(\mathbf{W}^{(t')}, \mathbf{x})) \sigma'(\langle \mathbf{w}_{y,r,clean}^{(t')}, \boldsymbol{\xi} \rangle) \boldsymbol{\zeta} \right) \right\|_2 \\
    & \quad - \left\| \sum_{j \neq y} \mathbf{A}_y^\top \mathbf{A}_j \left( \frac{\eta}{n} \sum_{t'=0}^{T-1} \sum_{(\mathbf{x},y) \in \mathcal{S}_j} \text{logit}_y(\mathbf{W}^{(t')}, \mathbf{x}) \sigma'(\langle \mathbf{w}_{y,r,clean}^{(t')}, \boldsymbol{\xi} \rangle) \boldsymbol{\zeta} \right) \right\|_2 \\
    &\stackrel{(a)}{=} \Omega \left( \frac{\eta}{n} \| \mathbf{A}_y^\top \mathbf{A}_y \|_F \cdot \sqrt{ \sum_{(\mathbf{x},y) \in \mathcal{S}_y} \left( \sum_{t'=0}^{T-1} (1 - \text{logit}_y(\mathbf{W}^{(t')}, \mathbf{x})) \right)^2 } \right),
\end{aligned}
\end{equation}
where $(a)$ is obtained by the condition $\| \mathbf{A}_y^\top \mathbf{A}_y \|_F = \Omega (\| \mathbf{A}_y^\top \mathbf{A}_j \|_F)$ for all $j, y \in [K]$ and $j \neq y, K = \Theta(1)$, and (\ref{eq:85}).

Then, we consider DP setting. Similar to the proof of (\ref{eq:124}), we also have
\begin{equation} \label{eq:129}
\left\| \mathbf{A}_y^\top \mathbf{w}_{j,r}^{(T)} \right\|_2 \leq \Theta \left( \frac{\eta}{n} \| \mathbf{A}_y^\top \mathbf{A}_j \|_F \cdot \sqrt{ \sum_{(\mathbf{x},y) \in \mathcal{S}_j} \left( \sum_{t'=0}^{T-1} (1 - \text{logit}_j(\mathbf{W}^{(t')}, \mathbf{x})) \right)^2 }+\eta\cdot T\sigma_n\sqrt{Tr({A_y^{\top}A_y})} \right),
\end{equation}

Denote $\bar{\mathcal{S}}_j(\boldsymbol{\xi}) = \{r \in [m] : \langle \mathbf{w}_{j,r}^{(T)}, \boldsymbol{\xi} \rangle > 0\}$. We have
\begin{equation} \label{eq:131}
\left\| \sum_{r \in \bar{\mathcal{S}}_j(\boldsymbol{\xi})} \mathbf{A}_y^\top \mathbf{w}_{y,r}^{(T)} \right\|_2 \geq \Theta \left( \frac{\eta m}{n} \| \mathbf{A}_y^\top \mathbf{A}_y \|_F \cdot \sqrt{ \sum_{(\mathbf{x},y) \in \mathcal{S}_y} \left( \sum_{t'=0}^{T-1} (1 - \text{logit}_y(\mathbf{W}^{(t')}, \mathbf{x})) \right)^2 } \right),
\end{equation}

because Lemma~\ref{lem:d4} holds.

Finally, we have:
\begin{equation}
\resizebox{1\linewidth}{!}{
$\begin{aligned}
    & \mathcal{L}(\mathbf{W}^{(T)}, \mathbf{x}, y) \leq \sum_{j \neq y} \mathbb{P} \left[ \frac{1}{m} \sum_{r=1}^m \sigma(\langle \mathbf{w}_{y,r}^{(T)}, \boldsymbol{\xi} \rangle) \leq \frac{1}{m} \sum_{r=1}^m \sigma(\langle \mathbf{w}_{j,r}^{(T)}, \mathbf{u}_y \rangle) + \sigma(\langle \mathbf{w}_{j,r}^{(T)}, \boldsymbol{\xi} \rangle) \right] \\
    &= \sum_{j \neq y} \mathbb{P} \left[ \left| \sum_{r \in \bar{\mathcal{S}}_y(\boldsymbol{\xi})} \langle \mathbf{w}_{y,r}^{(T)}, \boldsymbol{\xi} \rangle \right| \leq \sum_{r=1}^m \left| \langle \mathbf{w}_{j,r}^{(T)}, \boldsymbol{\xi} \rangle \right| + \left| \langle \mathbf{w}_{j,r}^{(T)}, \mathbf{u}_y \rangle \right| \right] \\
    &\leq \sum_{j \neq y} \mathbb{P} \left[ L \cdot \left\| \sum_{r \in \bar{\mathcal{S}}_y(\boldsymbol{\xi})} \mathbf{A}_y^\top \mathbf{w}_{y,r,clean}^{(T)} \right\|_2h(C,\mathbf{x},y)-\eta T\sigma_n\sqrt{L^2+d}\|\mathbf{A}_y\|_{op} - \sum_{r=1}^m \left\| \mathbf{A}_y^\top \mathbf{w}_{j,r}^{(T)} \right\|_2 \right.\\
    &\left. \qquad\qquad\leq \sum_{r=1}^m | \langle \mathbf{w}_{j,r}^{(T)}, \boldsymbol{\xi} \rangle | - \sum_{r=1}^m \mathbb{E} | \langle \mathbf{w}_{j,r}^{(T)}, \boldsymbol{\xi} \rangle | + | \langle \mathbf{w}_{j,r}^{(0)}, \mathbf{u}_y \rangle | \right] \\
    &\leq \sum_{j \neq y} \exp \left( -c' \cdot \frac{ \left( L \cdot \left\| \sum_{r \in \bar{\mathcal{S}}_y(\boldsymbol{\xi})} \mathbf{A}_y^\top \mathbf{w}_{y,r,clean}^{(T)} \right\|_2h(C,\mathbf{x},y)-\eta T\sigma_n\sqrt{L^2+d}\|\mathbf{A}_y\|_{op} - \sum_{r=1}^m \left\| \mathbf{A}_y^\top \mathbf{w}_{j,r}^{(T)} \right\|_2 - \mathcal{O} \left( \sqrt{\log(4Km/\delta)} \|\mathbf{u}_y\|_2 \sigma_0 \right) \right)^2 }{ \sum_{r=1}^m \left\| \mathbf{A}_y^\top \mathbf{w}_{j,r}^{(T)} \right\|_2^2 } \right) \\
    &\leq \sum_{j \neq y} \exp \left( -c_2  \cdot \frac{ (L\cdot\sqrt{|\mathcal{S}_y|}\Lambda_y \|\mathbf{A}_y^\top \mathbf{A}_y\|_F-n\sigma_n\sqrt{L^2+d}\|\mathbf{A}_y\|_{op})^2 }{ |\mathcal{S}_j| \|\mathbf{A}_y^\top \mathbf{A}_j\|_F^2 } \right),
\end{aligned}
$}
\end{equation}
where $c' > 0$ is a constant, the first inequality is by the condition for long-tailed data that $|\langle \mathbf{w}_{y,r}^{(T)}, \boldsymbol{\xi} \rangle| \geq L \left\| \mathbf{A}_y^\top \mathbf{w}_{y,r}^{(T)} \right\|_2$ and the gap between DP and non-DP setting, the second inequality is by Hoeffding's inequality and the last inequality is by (~\ref{eq:127}) and (~\ref{eq:129}).

\end{document}